\NewDocumentCommand{\citep}{o o m}{%
\IfNoValueTF{#1}{%
\cite{#3}%
}{%
\IfNoValueTF{#2}{%
(#1~\cite{#3})%
}{%
\if\relax\detokenize{#2}\relax%
(#1~\cite{#3})%
\else%
(#1~\cite[#2]{#3})%
\fi%
}%
}%
}
\let\citet\citep
\definecolor{DarkRed}{rgb}{0.368,0.097,0.078}
\definecolor{DarkBlue}{rgb}{0.2,0.2,0.6}
\declaretheoremstyle[
	    spaceabove=\topsep, 
	    spacebelow=\topsep, 
	    headfont=\normalfont\bfseries,
	    bodyfont=\normalfont\itshape,
	    notefont=\normalfont\bfseries,
	    notebraces={(}{)},
	    postheadspace=0.5em, 
	    headpunct={},
	    postfoothook=\noindent\ignorespaces
    ]{theorem}
\declaretheorem[style=theorem,numberwithin=section]{theorem}
\declaretheoremstyle[
	    spaceabove=\topsep, 
	    spacebelow=\topsep, 
	    headfont=\normalfont\bfseries,
	    bodyfont=\normalfont,
	    notefont=\normalfont\bfseries,
	    notebraces={(}{)},
	    postheadspace=0.5em, 
	    headpunct={},
	    postfoothook=\noindent\ignorespaces
    ]{definition}
\declaretheoremstyle[
        spaceabove=\topsep, 
        spacebelow=\topsep, 
        headfont=\normalfont\bfseries,
        bodyfont=\normalfont,
        notefont=\normalfont\bfseries,
        notebraces={}{},
        postheadspace=0.5em, 
        qed=$\blacksquare$, 
        headpunct={},
        postfoothook=\noindent\ignorespaces
    ]{proofstyle}
\declaretheorem[style=proofstyle,numbered=no,name=Proof]{proof}
\declaretheorem[style=theorem,sibling=theorem,name=Lemma]{lemma}
\declaretheorem[style=theorem,sibling=theorem,name=Corollary]{corollary}
\declaretheorem[style=theorem,sibling=theorem,name=Proposition]{proposition}
\declaretheorem[style=theorem,sibling=theorem,name=Fact]{fact}
\declaretheorem[style=theorem,numbered=no,name=Theorem]{theorem*}
\declaretheorem[style=theorem,numbered=no,name=Lemma]{lemma*}
\declaretheorem[style=theorem,numbered=no,name=Corollary]{corollary*}
\declaretheorem[style=theorem,numbered=no,name=Proposition]{proposition*}
\declaretheorem[style=theorem,numbered=no,name=Claim]{claim*}
\declaretheorem[style=theorem,numbered=no,name=Fact]{fact*}
\declaretheorem[style=theorem,numbered=no,name=Observation]{observation*}
\declaretheorem[style=theorem,numbered=no,name=Conjecture]{conjecture*}
\declaretheorem[style=definition,sibling=theorem,name=Definition]{definition}
\declaretheorem[style=definition,sibling=theorem,name=Remark]{remark}
\declaretheorem[style=definition,sibling=theorem,name=Example]{example}
\declaretheorem[style=definition,numbered=no,name=Definition]{definition*}
\declaretheorem[style=definition,numbered=no,name=Remark]{remark*}
\declaretheorem[style=definition,numbered=no,name=Example]{example*}
\declaretheorem[style=definition,numbered=no,name=Question]{question*}
\DeclareMathOperator{\E}{\mathbb{E}}
\newcommand{\norm}[1]{\|#1\|}
\newcommand{\ind}[1]{\mathbb{I}\set{#1}}
\newcommand{\cA}{\mathcal{A}}
\newcommand{\cD}{\mathcal{D}}
\newcommand{\cE}{\mathcal{E}}
\newcommand{\cF}{\mathcal{F}}
\newcommand{\cH}{\mathcal{H}}
\newcommand{\cM}{\mathcal{M}}
\newcommand{\cN}{\mathcal{N}}
\newcommand{\cR}{\mathcal{R}}
\newcommand{\cS}{\mathcal{S}}
\newcommand{\cU}{\mathcal{U}}
\newcommand{\cV}{\mathcal{V}}
\newcommand{\cX}{\mathcal{X}}
\providecommand{\answerTODO}[1][]{\textcolor{red}{\bf [TODO]}}
\let\papernewcommand\newcommand
\let\paperDeclareMathOperator\DeclareMathOperator
\let\papernewtheorem\newtheorem
\let\papernewfloat\newfloat
\let\newcommand\providecommand
\RenewDocumentCommand{\DeclareMathOperator}{s m m}{%
  \IfBooleanTF{#1}{\providecommand{#2}{\operatorname*{#3}}}{\providecommand{#2}{\operatorname{#3}}}%
}
\RenewDocumentCommand{\newtheorem}{m o m o}{}
\RenewDocumentCommand{\newfloat}{m m m}{%
  \ifcsname #1\endcsname\else\papernewfloat{#1}{#2}{#3}\fi%
}
\newcommand{\rbr}[1]{\left(#1\right)}
\newcommand{\sbr}[1]{\left[#1\right]}
\newcommand{\cbr}[1]{\left\{#1\right\}}
\newcommand{\abr}[1]{\left|#1\right|}
\DeclareMathOperator*{\argmax}{argmax}
\newcommand{\pair}[1]{\langle{#1}\rangle} 
\newtheorem{fact}[theorem]{Fact}
\newcommand{\E}{\mathbb{E}}
\renewcommand{\Pr}{\mathbb P}
\DeclareMathOperator*{\ind}{\mathbb I}
\newcommand{\norm}[1]{\ensuremath{\left\lVert #1 \right\rVert}}
\newcommand{\tnorm}[1]{\ensuremath{\lVert #1 \rVert}}
\newcommand{\tceil}[1]{\lceil\, {#1}\,\rceil}
\newcommand{\tsum}{\textstyle\sum}
\newcommand\N{\mathbb N}
\newcommand\R{\mathbb R}
\newcommand\T{\mathbb T}
\def\qedsketch{\ifmmode\Box\else{\unskip\nobreak\hfil
\penalty50\hskip1em\null\nobreak\hfil$\Box$
\parfillskip=0pt\finalhyphendemerits=0\endgraf}\fi}
\newlength{\tpush}
\newcommand{\handout}[5]{
   \noindent
   \begin{center}
   \framebox{ \vbox{ \hbox to \textwidth { {\bf \coursenum\ :\  \coursename} \hfill #5 }
       \vspace{3mm}
       \hbox to \textwidth { {\Large \hfill #2  \hfill} }
       \vspace{1mm}
       \hbox to \textwidth { {\it #3 \hfill #4} }
     }
   }
   \end{center}
   \vspace*{4mm}
   \newcommand{\lecturenum}{#1}
   \addcontentsline{toc}{chapter}{Lecture #1 -- #2}
}
\newcommand{\ba}{\bm{a}}
\newcommand{\bq}{\bm{q}}
\newcommand{\bv}{\bm{v}}
\newcommand{\bw}{\bm{w}}
\newcommand{\bx}{\bm{x}}
\newcommand{\by}{\bm{y}}
\newcommand{\bz}{\bm{z}}
\newcommand{\opnorm}[2]{| \! | \! | #1 | \! | \! |_{{#2}}}
\newcommand{\normop}[1]{{\left\vert\kern-0.25ex\left\vert\kern-0.25ex\left\vert #1 
		\right\vert\kern-0.25ex\right\vert\kern-0.25ex\right\vert}}
\newcommand{\cA}{\mathcal{A}}
\newcommand{\cD}{\mathcal{D}}
\newcommand{\cE}{\mathcal{E}}
\newcommand{\cF}{\mathcal{F}}
\newcommand{\cH}{\mathcal{H}}
\newcommand{\cM}{\mathcal{M}}
\newcommand{\cN}{\mathcal{N}}
\newcommand{\cR}{\mathcal{R}}
\newcommand{\cS}{{\mathcal{S}}}
\newcommand{\cU}{\mathcal{U}}
\newcommand{\cX}{\mathcal{X}}
\newcommand{\bzero}{\bm{0}}
\newcommand{\pr}{\mathrm{pr}}
\newcommand{\per}{\mathrm{per}}
\newcommand{\seq}{\mathrm{seq}}
\let\newcommand\papernewcommand
\let\DeclareMathOperator\paperDeclareMathOperator
\let\newtheorem\papernewtheorem
\let\newfloat\papernewfloat
\declaretheorem[style=theorem,sibling=theorem,name=Assumption]{assumption}
\renewcommand{\Pr}{\mathbb P}
\renewcommand{\ind}{\mathbb I}
  \newenvironment{bsmallmatrix}
    {\left[\begin{smallmatrix}}
    {\end{smallmatrix}\right]}%
\newcommand{\cbeta}{\bar{\beta}}
\newcommand{\htbeta}{\widehat{\beta}}
\newcommand{\htcbeta}{\widehat{\cbeta}}
\newcommand{\ca}{\widebar{a}}
\newcommand{\cb}{\widebar{b}}
\newcommand{\Vpi}{V^{\pi}}
\newcommand{\Qpi}{Q^{\pi}}
\newcommand{\Kpi}{K^{\pi}}
\newcommand{\bapi}{\ba^{\pi}}
\newcommand{\bpi}{\boldsymbol{\pi}}
\newcommand{\fmap}{\boldsymbol{\phi}}
\newcommand{\newfmap}{\boldsymbol{\psi}}
\newcommand{\htnewfmap}{\widehat \newfmap}
\newcommand{\btheta}{\boldsymbol{\theta}}
\newcommand{\bzeta}{\boldsymbol{\zeta}}
\newcommand{\bmu}{\boldsymbol{\mu}}
\newcommand{\wtx}{\widetilde{x}}
\newcommand{\bvpi}{\boldsymbol{v}^{\pi}}
\newcommand{\cNe}{\cN_{\varepsilon}}
\newcommand{\bR}{\overline{R}}
\newcommand{\htM}{\widehat{M}}
\newcommand{\myle}[1]{\stackrel{\text{(#1)}}{\le}}
\newcommand{\myge}[1]{\stackrel{\text{(#1)}}{\ge}}
\newcommand{\Tdb}{T_{\text{DB}}}
\newcommand{\rhoreg}{\rho}
\newcommand{\bs}{\bm{\mathsf{s}}}
\newcommand{\bb}{\bm{b}}
\providecommand{\tprod}{\textstyle\prod}
\newcommand{\restatetheorem}[2]{%
\par\medskip
\noindent\textbf{Theorem~\ref*{#1} (Restated)}\ \textit{#2}
\par\medskip
}
\newcommand{\restatelemma}[2]{%
\par\medskip
\noindent\textbf{Lemma~\ref*{#1} (Restated)}\ \textit{#2}
\par\medskip
}
\newcommand{\restateproposition}[2]{%
\par\medskip
\noindent\textbf{Proposition~\ref*{#1} (Restated)}\ \textit{#2}
\par\medskip
}
\crefname{lemma}{lemma}{lemmas}
\Crefname{lemma}{Lemma}{Lemmas}
\crefname{corollary}{Corollary}{Corollaries}
\Crefname{corollary}{Corollary}{Corollaries}
\crefname{theorem}{Theorem}{Theorems}
\Crefname{theorem}{Theorem}{Theorems}
\crefname{proposition}{Proposition}{Propositions}
\Crefname{proposition}{Proposition}{Propositions}
\crefname{remark}{Remark}{Remarks}
\Crefname{remark}{Remark}{Remarks}
\crefname{definition}{Definition}{Definitions}
\Crefname{definition}{Definition}{Definitions}
\crefname{assumption}{Assumption}{Assumptions}
\Crefname{assumption}{Assumption}{Assumptions}
\crefname{fact}{Fact}{Facts}
\Crefname{fact}{Fact}{Facts}
\crefname{algocf}{algorithm}{algorithms}
\Crefname{algocf}{Algorithm}{Algorithms}
\date{}
\title{Reinforcement Learning with Action-Triggered Observations}
\author{
Alexander Ryabchenko\thanks{Department of Statistical Sciences, University of Toronto; Vector Institute. Email: \texttt{alex.rbch.research@gmail.com}.}
\and
Wenlong Mou\thanks{Department of Statistical Sciences, University of Toronto; Vector Institute. Email: \texttt{wenlong.mou@utoronto.ca}.}
}
\begin{document}

\maketitle

\begin{abstract}
    We introduce Action-Triggered Sporadically Traceable Markov Decision Processes (ATST-MDPs), a reinforcement learning framework for partial observability in which full state observations occur stochastically at each step, with probability determined by the chosen action. We derive Bellman equations tailored to this setting and establish the existence of an optimal policy. Exploiting the fact that sporadic observations reveal the full state, we provide an equivalent formulation in which agents commit to action-sequences between consecutive observations. Under the linear MDP assumption, we show that the value function over such action-sequences admits a linear representation in a finite-dimensional feature map, enabling standard regression-based methods. As an application, we derive ATST-LSVI-UCB, an optimistic algorithm achieving regret $\widetilde{O}(\sqrt{Kd^3(1-\gamma)^{-3}})$ for episodic learning with geometrically distributed horizons, where $K$ is the number of episodes, $d$ the feature dimension, and $\gamma$ the discount factor (episode continuation probability), matching the known rate for linear MDPs with full observability.
\end{abstract}

\section{Introduction}

Reinforcement Learning (RL) studies sequential decision-making where an agent interacts with an unknown environment. Standard formulations assume the agent observes the new environmental state after every executed action. In practice, however, state observations are often sporadic, as sensing may be costly or unreliable, and action choices may affect the frequency of observations. For example, in clinical treatment planning, the clinician needs to make tradeoffs between invasive diagnostic tests that provide accurate patient state information but carry risks, and less invasive tests that are safer but yield limited insights into the patient's condition. Similarly, in financial portfolio management, traders must balance the costs of acquiring high-frequency market data against the benefits of informed decision-making.

This class of problems belongs to the general framework of Partially Observable Markov Decision Processes (POMDPs) \cite{astrom1965}, where the agent receives noisy partial observations generated from the underlying state. However, it is well-known that general POMDPs without additional structures are computationally and statistically intractable~\citep{madani1999undecidability,jin2020sample}. As a result, existing general-purpose POMDP methods lack specificity for scenarios where the availability of state observations depends on the agent's actions.

To address this gap, we propose a novel RL framework characterized by ``action-triggered observations,'' where each action $a$ has an associated probability $\beta(a) \in [0,1]$ of revealing the resulting state upon execution. A control policy must therefore simultaneously optimize actions under partial observability and strategically decide when to trigger observations to reduce uncertainty. We formalize this setting as Action-Triggered Sporadically Traceable Markov Decision Processes (ATST-MDPs), extending classical MDPs with the observation probability function $\beta$.

The ATST-MDP framework captures a range of observation mechanisms pertaining to \emph{active perception} \citep[e.g.,][]{bajcsy1988}. In particular, the framework subsumes Action-Contingent Noiselessly Observable Markov Decision Processes (ACNO-MDPs) \citep{brunskill21}, where observations must be explicitly purchased, and \emph{intermittent feedback} models \citep{hausknecht2015drqn}, where unreliable sensors or communication channels yield only sporadic state information. Rather than focusing on any particular observation pattern, we analyze ATST-MDPs in full generality.

\vspace{-0.1cm}

\paragraph{Contributions.}
We develop theoretical foundations for RL with action-triggered observations. Our contributions are summarized as follows.
\begin{itemize}[before=\vspace{-0.2cm},left=0.15cm,itemsep=0.05cm,after=\vspace{-0.2cm}]
    \item We formalize ATST-MDPs and establish Bellman optimality equations on the augmented state space. We further introduce an action-sequence reformulation that avoids explicit enumeration of augmented histories, making it compatible with existing learning methods (Sections~\ref{sec:problem-setting}--\ref{subsec:ATST-value-functions}).
    
    \item Under the linear MDP assumption, we construct an action-sequence feature map $\psi$ such that the action-sequence value-function is linear in $\psi$. We further develop data-driven estimators for such feature maps and establish a non-asymptotic sample complexity guarantee of $\widetilde{O}(\frac{d^3}{\varepsilon^2(1-\gamma)^2})$ (Section~\ref{sec:linear-ATST-MDP}).
    
    \item Leveraging the Bellman equations and linear representation, we propose Action-Triggered Sporadically Traceable Least-Squares Value Iteration with Upper Confidence Bounds (ATST-LSVI-UCB), an optimistic value iteration algorithm for episodic learning in linear ATST-MDPs. Given an $\varepsilon$-admissible feature map estimate, the algorithm achieves $\widetilde{O}(\sqrt{d^3 K (1-\gamma)^{-3}})$ regret, compared to the optimal policy under the same observation constraints. This result matches the optimal rate for standard linear MDPs (Section~\ref{sec:episodic-learning}).
    
    \item We validate ATST-LSVI-UCB empirically on several simulation environments, exploring how varying observation frequencies interact with task structure in ATST-MDPs (Section~\ref{sec:simulations}).
\end{itemize}

\paragraph{Related work.} ATST-MDPs overlap with several well-studied settings, yet none directly captures action-triggered observations. Although the absence of state feedback superficially resembles \emph{RL with observation delays} \cite{katsikopoulos2003, walsh2009}, the delays in ATST-MDPs are endogenous, induced by the agent's actions, whereas classical delays are exogenous. This should not be confused with \emph{RL with episodic delays} \cite{vanderhoeven2023unifiedanalysisnonstochasticdelayed}, where states are immediately observable but rewards are delayed by multiple episodes. Recent work on impaired observability also studies RL with delayed and missing state observations \citep{chen2023impairedobservability}; ATST-MDPs differ in that observation events are triggered by the chosen actions. \emph{Goal-conditioned RL} \cite{schaul15,andrychowicz2017her} typically assumes full state observability with sparse goal-dependent reward or success signals, whereas ATST-MDPs study action-triggered sparsity in state observations. Many POMDP formulations \cite{Pineau2003PointbasedVI, NIPS2010_edfbe1af} model belief updates under partial observability; however, existing work generally does not exploit the structure induced by action-triggered observations. For additional related work, see Appendix~\ref{app:related-work}.

\section{Problem Setting}\label{sec:problem-setting}

\paragraph{Action-Triggered Sporadically Traceable Markov Decision Processes (ATST-MDPs).} 
The ATST-MDP framework extends standard MDPs by introducing the action-triggered state observation mechanism. Formally, we define an ATST-MDP as a 6-tuple $(\cS, \cA, \Pr, r, \gamma, \beta)$, consisting of a measurable state space $\cS$, a finite action space $\cA$, a transition kernel $\Pr(\cdot | s,a) \in \Delta_{\cS}$, a deterministic reward function $r:\cS\times\cA\to[0,1]$, a discount factor $\gamma\in (0,1)$, and a state observation probability function $\beta:\cA \to [0,1]$.

The dynamics proceed as follows: when the agent is in state $s$ and executes action $a$, it incurs reward $r(s, a)$ and the environment transitions to a new state $s' \sim \Pr(. | s, a)$. Crucially, the state $s'$ is not necessarily observed by the agent. With probability $\beta(a)$, the action triggers an observational event we term a \emph{\textbf{data-burst}}, revealing $s'$ to the agent. Otherwise, with probability $\cbeta(a) = 1 - \beta(a)$, no data-burst occurs and no state feedback is provided. Reward feedback is also linked to data-bursts: at each data-burst, we allow the agent to observe the cumulative reward incurred since the previous data-burst. While specific applications might allow for the revelation of full state-reward trajectories at data-bursts, this work addresses the general setting where only aggregated outcomes are periodically measurable.

\vspace{-0.3cm}

\paragraph{Notation.}  We write $\emptyset$ for the empty sequence. For an arbitrary set $\cU$ and any $n\ge 0$, let $\cU^n$ denote the set of length-$n$ sequences over $\cU$, so that $\cU^0=\{\emptyset\}$. We identify $\cU^1$ with $\cU$ by viewing each $u\in\cU$ as the length-$1$ sequence. We write $\cU^{\le l}=\bigcup_{i=0}^l \cU^i$ for sequences of length at most $l$, $\cU^{<\N}=\bigcup_{i=0}^{\infty}\cU^i$ for the set of all finite sequences, and $\cU^{\N}$ for the set of infinite sequences over $\cU$. We use $\oplus$ to denote concatenation (e.g., $u \oplus v = (u,v)$, $(u_1,\ldots,u_n)\oplus v=(u_1,\ldots,u_n,v)$), with $\emptyset$ as the identity so that  $u\oplus \emptyset=u$ and $\cU\times\{\emptyset\}=\cU$. We use the shorthand $[n]=\{1,\dots,n\}$. For vectors $x\in\R^d$ and matrices $M\in\R^{d \times d}$, $\norm{x}_q$ denotes the $\ell_q$-norm, $\opnorm{M}{q}$ the induced $\ell_q$-operator norm, and $\lambda_{\min}(M)$, $\rho(M)$ the minimum eigenvalue and spectral radius of $M$, respectively. For $a\in\cA$, we write $\beta_a=\beta(a)$ and $\cbeta_a=1-\beta_a$.

\section{ATST-MDPs as Decision Processes on the Augmented State Space}\label{subsec:ATST-value-functions}

Since the state is revealed only at data-bursts, the agent acts under partial observability. As in POMDPs, the agent must base decisions on its observation history rather than the current state. In a POMDP, the posterior over the latent state (the belief) is a sufficient statistic for the history, so optimal decision rules can be taken to depend on the belief alone \citep{kaelbling1998}. In our setting, the information relevant to the current state is fully captured by the last observed state together with the sequence of actions taken since that observation. Following the construction for delayed-observation MDPs \citep{walsh2009}, we formalize this via the \emph{\textbf{augmented state space}} $\cX = \cS \times \cA^{<\N}$, defining the agent's augmented state as the last observed environmental state together with the (possibly empty) sequence of actions taken since.

Each augmented state $x = (s_1, a_1, \ldots, a_n) \in \cX$ corresponds to a belief distribution $b(\cdot|x) \in \Delta_{\cS}$: the marginal over the state obtained by starting from $s_1$ and executing the action-sequence $(a_1,\ldots,a_n)$ of length $n \ge 0$. For $n=0$, $x = s_1 \in \cS$ and $b(\cdot|x) = \delta_{s_1}$. For $n \ge 1$, it is obtained by marginalizing over the unobserved trajectory:
\begin{equation}\label{eq:belief-integral-definition}
    b(\cdot| x) = 
    \textstyle \int_{\cS^{n - 1}} \Pr(\cdot | s_n, a_n) \prod_{i=1}^{n-1} \Pr(s_{i+1} | s_{i}, a_{i}) \, ds_i.
\end{equation}
Thus, in direct analogy to belief states in POMDPs, the augmented state $x \in \cX$ serves as a sufficient statistic for control in ATST-MDPs: one can view the interaction as a fully observed decision process evolving on $\cX$. Concretely, from augmented state $x$ with true state $s$, executing action $a$ transitions the environment to $s' \sim \Pr(\cdot \mid s,a)$ and updates the augmented state to either $x' = s'$ with probability $\beta_a = \beta(a)$ or $x' = x \oplus a$ with probability $\cbeta_a = \cbeta(a)$. The induced transition kernel on $\cX$ is $\Pr_{\cX}(\cdot | x, a) = \beta_a \cdot b(\cdot | x \oplus a) + \cbeta_a \cdot \delta_{x \oplus a}$, a mixture of the belief over $\cS$ and a point mass at $x \oplus a$. 

However, unlike general POMDPs, ATST-MDPs possess a special structure: the augmented state evolves like a renewal process, growing in length until a data-burst resets it to a singleton in $\cS$. Trajectories therefore decompose into intervals between successive observations, admitting a simpler representation grounded in $\cS$ rather than $\cX$.

We analyze this process on $\cX$ through \emph{\textbf{augmented policies}} $\pi:\cX\to\cA$. The following subsections establish the Bellman equation on $\cX$, prove existence of optimal augmented policies, and then introduce an alternative formulation that exploits the interval structure induced by data-bursts.

\subsection{Value-Functions and Bellman Optimality}

For any augmented policy $\pi:\cX\to\cA$, we define the action value-function $\Qpi:\cX\times\cA\to[0,\tfrac{1}{1-\gamma}]$ as the expected cumulative discounted reward when starting from augmented state $x \in \cX$ (with latent initial state $s_1 \sim b(\cdot|x)$), executing action $a$, and following policy $\pi$ thereafter. Formally,
\begin{align*}
    \Qpi(x, a) &= \E\sbr{r(s_1, a) + \tsum_{h=2}^{\infty} \gamma^{h-1} r(s_h, \pi(x_h))},
\end{align*}
where $x_1 = x$, $a_1 = a$, and the expectation is taken over trajectories generated by $s_1\sim b(.|x)$, $s_{h+1}\sim \Pr(.|s_h, a_h)$, and $x_{h+1} \sim \left\{\begin{smallmatrix} s_{h+1} & \text{with probability $\beta(a_h)$} \\ x_h \oplus a_h & \text{otherwise} \end{smallmatrix}\right..$

The state value-function $\Vpi:\cX\to [0,\frac{1}{1-\gamma}]$ is defined accordingly as the expected cumulative discounted reward when starting from augmented state $x$ and following policy $\pi$ thereafter, i.e., $\Vpi(x) = \Qpi(x, \pi(x))$. The following theorem establishes the Bellman equation for these value functions, shows that the associated Bellman operator is a contraction, and guarantees existence of an optimal policy.

\begin{theorem}[Augmented Bellman Optimality]\label{thm:Bellman-optimality}
    Let $\cM = (\cS, \cA, \Pr, r, \gamma, \beta)$ be an ATST-MDP with augmented state space $\cX = \cS\times\cA^{<\N}$ and consider the set of measurable functions $\cV = \{V:\cX\to[0,\frac{1}{1-\gamma}]\}$.
    \begin{itemize}[itemsep=0.1cm, leftmargin=0.5cm,before=\vspace{-0.1cm}]
        \item \textbf{(Policy Evaluation)} For any policy $\pi:\cX\to\cA$,
        \begin{align*}
            \Qpi(x, a) 
            = {\E}_{s\sim b(.|x)}\sbr{r(s,a)} + \gamma\beta_a\, {\E}_{s'\sim b(. | x \oplus a) }\sbr{\Vpi(s')}+ \gamma\cbeta_a\, \Vpi(x\oplus a).
        \end{align*}
        \item \textbf{(Contraction)} The Bellman operator $\T:\cV\to\cV$ given by\looseness=-1
        \begin{align*}
            \T V(x) := \max_{a\in\cA} \big\{\E_{s\sim b(.|x)}[r(s,a)]
            + \gamma\beta_a \, {\E}_{s'\sim b(. | x \oplus a)}[V(s')]
            + \gamma\cbeta_a \,  V(x\oplus a) \big\},
        \end{align*}
        is a $\gamma$-contraction on $(\cV, \tnorm{.}_\infty)$.
        \item \textbf{(Optimality)} There exists an optimal augmented policy $\pi^*:\cX\to\cA$ achieving $V^*(x) = \sup_{\pi} \Vpi(x)$ for all $x\in\cX$, where $V^*$ is the unique fixed point of $\T$.
    \end{itemize}
\end{theorem}

See Appendix~\ref{app:Bellman} for the proof.
This theorem guarantees existence of an optimal augmented policy, providing a well-defined objective for learning. The Bellman equations also serve as the foundation of our algorithms.

\subsection{From Augmented States to Action-Sequences}\label{subsec:action-sequences}

The augmented state space $\cX = \cS\times \cA^{<\N}$ branches over all possible action histories, yet any augmented policy $\pi$ traverses only a single branch from each observed state $s$: the sequence $a_1 = \pi(s)$, $a_2 = \pi(s; a_1)$, $a_3 = \pi(s; a_1, a_2)$, and so on, executed until the next data-burst. Since data-bursts fully reveal the state, this motivates reformulating ATST-MDPs as decision processes on $(\cS, \cA^{\N})$, where at each data-burst the agent observes $s \in \cS$ and commits to an action-sequence $\ba \in \cA^{\N}$ to execute until the next observation.

Concretely, we define the \emph{action-sequence value-function} $\Kpi:\cS\times\cA^{\N}\to[0, \frac{1}{1-\gamma}]$ as the expected discounted reward when starting from state $s$, executing $\ba = (a_1, a_2, \ldots)$ until the next data-burst, and following $\pi$ thereafter. Let $\Tdb \in \N \cup \{\infty\}$ denote the (random) index of the first action that triggers a data-burst. We have
\begin{align}
    \Kpi\!(s,\ba) 
    = \E\sbr{\tsum_{h=1}^{\Tdb}\gamma^{h-1} r(s_h, a_h)+\gamma^{\Tdb} \Vpi(s_{\Tdb+1})},\label{eq:Kpi}
\end{align}
where the first term is the discounted reward accumulated until observation and the second is the discounted continuation value from the revealed state ($0$ if $\Tdb = \infty$). Introducing the shorthand notation
\begin{align*}
    R(s,\ba) = \E\sbr{\tsum_{h=1}^{\Tdb} \gamma^{h-1} r(s_h, a_h)},\qquad
    \Pr V(s,\ba) = \E\sbr{\gamma^{\Tdb} V(s_{\Tdb+1})},
\end{align*}
we arrive at the equation $\Kpi = R + \Pr \Vpi$, which
separates the pre-observation discounted reward $R$ from the continuation operator $\Pr$. The following result relates action-sequence value function $\Kpi$ back to $\Qpi$ and $\Vpi$.
\begin{proposition}\label{thm:K-V-Q-link}
    For any augmented policy $\pi:\cX\to\cA$, let $\bapi:\cX \to \cA^{\N}$ denote the induced action-sequence map, where $\bapi(x) = (\pi(x), \pi(x \oplus \pi(x)), \ldots)$. Then, for all $s \in \cS$ and $a \in \cA$, 
    \begin{align*}
        \Qpi(s, a) = \Kpi(s, a \oplus \bapi(s \oplus a)),\qquad \Vpi(s) = \Kpi(s, \bapi(s)).
    \end{align*}
\end{proposition}

Consequently, the value functions under two formulations are connected. Together with the Bellman equations established in the previous section, this gives fixed-point equations analogous to those used in standard value-learning methods. In conjunction with the linear MDP assumption in Section~\ref{sec:linear-ATST-MDP}, this facilitates the design of model-free learning algorithms.

The action-sequence reformulation exploits the renewal structure of ATST-MDPs: data-bursts reset the agent to a known state, decomposing the problem into intervals between consecutive data-bursts. This key observation distinguishes ATST-MDPs from general POMDPs, allowing for efficient learning algorithms. Crucially, while $\cA^{\N}$ remains infinite, this structure admits a tractable linear representation, as developed in the next section.

\section{Linearity Enables Efficient Representation}\label{sec:linear-ATST-MDP}

The action-sequence formulation casts ATST-MDPs as decision processes whose actions are infinite sequences in $\cA^{\N}$. While this formulation is conceptually clean, evaluating value-functions $\Kpi$ remains challenging without additional structure, as it involves infinitely long action-sequences.

To address this issue, in this section, we leverage the linear MDP structure, a widely-used assumption in the RL theory literature. In the linear MDP setting, the value function $\Kpi$ admits a finite-dimensional representation amenable to regression-based methods.

\begin{assumption}[Linear MDP, \cite{jin2020linearmdp}]\label{assump:Linear_MDP}
    There exists a feature map $\fmap: \cS\times\cA \to \R^d$ such that
    \begin{equation*}
        \Pr(\cdot|s,a) = \pair{\fmap(s,a),\, \bmu(\cdot)}, \qquad r(s,a) = \pair{\fmap(s,a), \, \btheta},
    \end{equation*}
    where $\bmu:\cS\to \R^d$ consists of $d$ finite signed measures over $\cS$ and $\btheta \in \R^d$. Additionally, it holds that $\sup_{s,a} \norm{\fmap(s,a)}_{2}\le 1$, $\norm{\btheta}_{2} \le \sqrt{d}$, and $\norm{|\bmu|(\cS)}_2 \le \sqrt{d}$.
\end{assumption}

For learning problems, it is often assumed that the feature map $\phi$ is known and the parameters $(\bmu, \btheta)$ are unknown. In recent literature, the linear MDP model has emerged as a standard testbed for RL algorithms under function approximation, while it also covers the classical tabular setting.

\begin{remark}[Tabular MDPs]\label{rmk:tabular}
Assumption~\ref{assump:Linear_MDP} subsumes finite (tabular) MDPs: taking the feature map $\phi$ as one-hot encoding of state-action pairs yields a valid linear representation \citep{jin2020linearmdp}. Therefore, substituting $d=|\cS||\cA|$ into our bounds and constructions recovers the corresponding tabular guarantees.
\end{remark}

Under the linear MDP assumption, we can construct the \emph{\textbf{action-sequence feature map}} $\psi:\cS\times\cA^\N \to \R^{2d}$ such that the action-sequence value-function $\Kpi$ is linear in $\psi$ for every augmented policy $\pi$ (Theorem~\ref{thm:K-linearity}). We further provide data-driven methods that estimate the feature map $\psi$ efficiently (\Cref{thm:newfmap-estimator}).

\vspace{0.2cm}

\subsection{Linearity of Action-Sequence Value Functions}

In this section, we establish the linearity of action-sequence value functions by extending the linear MDP structures.

We first define the action-matrix $M_a = \int_{\cS} \bmu(s)\fmap(s, a)^{\top} ds$. This matrix captures how features evolve under action $a$: for $(s,a,a')\in\cS\times\cA^2$, we have
\begin{align*}
    \E_{s'\sim\Pr(.|s,a)}[\phi(s',a')^{\top}] = \phi(s,a)^{\top} M_{a'}.
\end{align*}
Products of action-matrices thus propagate features across consecutive actions.

Based on this definition, we can extend the mapping  $\fmap:\cS\times\cA\to\R^d$ to augmented state $x = (s_1, a_1, \ldots, a_n) \in \cX\setminus\cS$ by $\fmap(x)^{\top} = \fmap(s_1, a_1)^{\top} \prod_{i=2}^{n}M_{a_i}$. The following lemma provides a linear representation of the belief state.
\begin{lemma}[Linearity of belief]\label{lem:action-matrix-belief}
    Under the above setup, for any augmented state $x \in \cX\setminus\cS$,
    \begin{align*}
        b(\cdot|x) = \pair{\fmap(x), \bmu(\cdot)} \quad \text{and} \quad \norm{\fmap(x)}_2\le 1.
    \end{align*}
    Moreover, ${\E}_{s\sim b(\cdot|x)}\sbr{r(s,a)} = \pair{\fmap(x\oplus a),\, \btheta}$, and for every measurable function $V:\cS\to[0,1/(1-\gamma)]$, it holds that ${\E}_{s'\sim b(\cdot | x \oplus a) }\sbr{V(s')} = \pair{\fmap(x\oplus a),\, \bv}$,
    where $\bv = \int V(s)d\bmu(s)$ satisfies $\norm{\bv}_2 \le \frac{\sqrt{d}}{1-\gamma}$.
\end{lemma}

Lemma~\ref{lem:action-matrix-belief} reduces expectations of rewards and value-functions under beliefs to inner products in $\R^d$. To extend this to the action-sequence value function $\Kpi $ (Section~\ref{subsec:action-sequences}), we construct a new feature map $\newfmap:\cS\times\cA^{\N}\to\R^{2d}$. For $s \in \cS$, $a\in\cA$, and $\ba = (a_1, a_2, \ldots)\in\cA^\N$, we define its value on $(s, a\oplus \ba) \in \cS\times\cA^\N$ as
\begin{equation}\label{eq:newfmap}
    \newfmap(s, a \oplus \ba)^{\top} = \tfrac{1}{{2}}\, \fmap(s, a)^{\top} \rbr{\beta_a I_{1,2} + \cbeta_a M_{1,2}(\ba)},
\end{equation}
where $I_{1,2} = \begin{bsmallmatrix} (1\!-\!\gamma)I_d & \gamma I_d \end{bsmallmatrix}$, $M_{1,2}(\ba) = \begin{bsmallmatrix} (1\!-\!\gamma) M_1\!(\!\ba\!) & \gamma M_2\!(\!\ba\!) \end{bsmallmatrix}$, and matrices $M_1(\ba), M_2(\ba) \in \R^{d\times d}$ are given by
\begin{subequations}\label{eq:M12_matrices}
\begin{align}
    M_1(\ba) &= I_d +\tsum_{k=1}^{\infty} {\gamma^{k} (\textstyle\prod_{i=1}^{k-1}\cbeta_{a_i}) (\textstyle\prod_{i=1}^{k} M_{a_i})},\\
    M_2(\ba) &= \tsum_{k=1}^{\infty} { \gamma^{k}  (\textstyle\prod_{i=1}^{k\!-\!1}\cbeta_{a_i}) \beta_{a_k} (\textstyle\prod_{i=1}^{k} M_{a_i})}.
\end{align}
\end{subequations}
Under this construction, the function $R$ is linear in the first $d$ coordinates of $\psi$, and $\Pr \Vpi$ is linear in the latter $d$ coordinates. Putting them together, we can establish the linearity of $\Kpi =  R + \Pr \Vpi$  in $\newfmap$.

\begin{theorem}[Linearity of $\Kpi$]\label{thm:K-linearity}
    For any policy $\pi:\cX\to\cA$, let $\bvpi = \int_{\cS} \Vpi(s) d\bmu(s)$ and $\bvpi_{1,2} = {2}\begin{bsmallmatrix} \btheta/ (1-\gamma) \\ \bvpi \end{bsmallmatrix} \in \R^{2d}$. Then, $\tnorm{\bvpi_{1,2}}_2 \le \tfrac{4\sqrt{d}}{1-\gamma}$ and, for every $s\in\cS$ and $\ba \in \cA^\N$,
    \begin{equation*}
        \textstyle \Kpi(s, \ba) = \pair{\newfmap(s, \ba),\,\bvpi_{1,2}} \quad \text{and} \quad \norm{\newfmap(s,\ba)}_2 \le 1.
    \end{equation*}
\end{theorem}

Although $\Kpi$ is defined on the infinite-dimensional space $\cS\times\cA^{\N}$, it is fully characterized by an inner product in $\R^{2d}$ between a bounded feature map and a policy-dependent vector. Thus, given access to $\psi$, regression-based methods can be used to approximate $K^{*} = K^{\pi^{*}}$ and recover $V^{*}(s) = \sup_{\ba\in\cA^\N} K^{*}(s,\ba)$, as we demonstrate for episodic learning in Section~\ref{sec:episodic-learning}.

\subsection{Estimation of the Action-Sequence Feature Map}\label{subsec:approximations}
Unlike learning in fully observed linear MDP environments, for ATST-MDPs, the feature maps $(\fmap, \newfmap)$ are unknown in general, as they depend on transition dynamics of the underlying MDP. In this section, we study algorithms and guarantees for these estimation problems.

According to Lemma~\ref{lem:action-matrix-belief} and Theorem~\ref{thm:K-linearity}, the feature maps $\fmap$ and $\newfmap$ are determined by the action-matrices $\{M_a\}_{a \in \cA}$ and observation probabilities $\{\beta_a\}_{a \in \cA}$.
Given estimates $\{\htM_a, \htbeta_a\}_{a\in\cA}$ (with $\htbeta_a = \beta_a$ if known), it is natural to define $\htnewfmap$ via \eqref{eq:newfmap}--\eqref{eq:M12_matrices} by substituting $\htM_a$ for $M_a$ and $\htbeta_a$ for $\beta_a$.
\footnote{The observation probabilities $\{\beta_a\}_{a \in \cA}$ are often known in practice (e.g. \citet{brunskill21}). In such a case, we let $\beta_a = \htbeta_a$ for every $ a\in \cA$.} This raises two questions: how does the estimation error of $\{M_a, \beta_a\}_{a \in \cA}$ affect the feature maps, and can we estimate these quantities accurately from data?

We address both below: Theorem~\ref{thm:newfmap-estimator} quantifies how estimation error propagates to $\psi$, and Corollary~\ref{cor:dataset-size} establishes that $\widetilde O(d^3/\epsilon^2(1-\gamma)^2)$ exploratory transitions suffice. This is substantially easier than recovering the full transition kernel $\bmu$, which consists of $d$ latent measures with no assumed parametric form. Notably, the estimation is reward-free: once the feature map is obtained, it applies to any reward function. This contrasts with ``observe before planning'' approaches \citep{brunskill21, guo2016pacrlalgorithmepisodic}, which estimate full transition dynamics and rewards during exploration.

\subsubsection{Plug-in Estimation for Feature Maps} 
We first formalize what constitutes a good approximation of $\newfmap$: a uniform approximation error bound, bounded norm, and continuity over $\cA^\N$.

\begin{definition}\label{def:admissible-approx}
    For $\epsilon \ge 0$, a function $\newfmap':\cS\times\cA^{\N}\to\R^{2d}$ is an \emph{\textbf{$\epsilon$-admissible approximation}} of $\newfmap$ if it holds that: $\sup_{s,\ba} \tnorm{(\newfmap' - \newfmap)(s,\ba)}_2 \le \epsilon$, $\sup_{s,\ba} \tnorm{\newfmap'(s,\ba)}_2 \le 1$, and $\newfmap'(s,.)$ is continuous with respect to the product topology on $\cA^{\N}$ for every $s\in\cS$. 
\end{definition}

The following result establishes that uniform convergence of $\htM_a$ and $\htbeta_a$ to their true values yields an admissible approximation with controlled error.

\begin{theorem}\label{thm:newfmap-estimator}
    Suppose estimates $\{\htM_a, \htbeta_a\}_{a\in\cA}$ satisfy $\sup_{a\in \cA} \opnorm{\htM_a - M_a}{2} \le \varepsilon$ and $\sup_{a\in \cA} |\htbeta_a - \beta_a| \le \varepsilon_{\beta}$ for some $\varepsilon \in [0, \frac{1-\gamma}{2\sqrt{d}}]$ and $\varepsilon_{\beta} \in [0,1]$. Then, $$\textstyle\sup_{(s,\ba)\in\cS\times\cA^\N}\tnorm{(\htnewfmap - \newfmap)(s,\ba)}_2 \le \tfrac{16d}{1 - \gamma}(\varepsilon + \varepsilon_{\beta}/\sqrt{d}).$$
    Moreover, the function $\widetilde\newfmap(s,\ba) = \frac{\htnewfmap(s,\ba)}{1 + 16 d(\varepsilon+\varepsilon_{\beta}/\sqrt{d})/(1-\gamma)}$ is a $\frac{32 d(\varepsilon+\varepsilon_{\beta}/\sqrt{d})}{1-\gamma}$-admissible approximation of $\newfmap$.
\end{theorem}

See Appendix~\ref{app:approximations} for its proof. \Cref{thm:newfmap-estimator} guarantees admissibility of feature map estimation using action-matrix and observation-probability estimators with small errors. In the next subsection, we show that such errors are achievable.

\subsubsection{Estimation of Action Matrices and Observation Probabilities}

Let $\cD \in \Delta_{\cS \times \cA}$ be an exploratory distribution over state--action pairs with a feature-induced second-moment matrix $\Sigma = \E_{(s,a)\sim\cD}[\fmap(s, a)\fmap(s, a)^{\top}]$, and assume $\lambda_{\min}(\Sigma) > 0$, ensuring that $\cD$ explores all feature directions. When $\beta$ is unknown, assume $p_{\min} = \inf_{a' \in \cA} \Pr_{(s,a)\sim\cD}(a\!=\!a') > 0$, so that each action is sampled.

We draw $N$ independent samples, each consisting of $(s,a)\sim \cD$, a next state $s'\sim \Pr(\cdot|s,a)$, an observation indicator $b \sim \mathrm{Ber}(\beta_a)$. Given the dataset $\{s_n, a_n, s'_n, b_n\}_{n=1}^N$, we estimate action-matrices via ridge regression and observation probabilities via empirical means:
\begin{align*}
    \htM_a = (X^{\top}\!X\!+\!I_d)^{-1} X^{\top}\!Y_a,\qquad
    \htbeta_a = \tfrac{\sum_{n=1}^N b_n \ind(a_n = a)}{\max\{\sum_{n=1}^N \ind(a_n = a),1\}},
\end{align*}
where rows of $X, Y_a \in \R^{N\times d}$ are $\fmap(s_n, a_n)$ and $\fmap(s'_n, a)$ respectively. These estimators achieve an $O(N^{-\frac{1}{2}})$ rate.

\begin{lemma}\label{lem:action-matrix-estimator}
    There exists an absolute constant $C \ge 1$ such that for all $p \in (0,1)$ and $N \ge \frac{4C^2 d \log(2Ad/p)}{\lambda_{\min}(\Sigma)^2}$, ridge estimators $\htM_a$ satisfy
    \begin{align*}
        \Pr\rbr{\textstyle\sup_{a\in\cA} \opnorm{\htM_a - M_a}{2} \le 4C\sqrt{\tfrac{d\,\log(2Ad/p)}{N\lambda_{\min}(\Sigma)^2}}} \ge 1-p.
    \end{align*}
\end{lemma}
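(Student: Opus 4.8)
The plan is to recognize the ridge estimator as a multivariate linear regression whose population target is exactly $M_a$, and then apply a standard bias--variance decomposition with matrix/vector concentration. First I would compute the conditional mean of the response under Assumption~\ref{assump:Linear_MDP}. Writing $\Phi_n = \fmap(s_n,a_n)$ for the covariate and $\fmap(s'_n,a)$ for the response, and recalling $M_a = \int \bmu(ds)\,\fmap(s,a)^{\top}$, we have
\[
    \E\sbr{\fmap(s'_n,a)\mid s_n,a_n} = \int \fmap(s',a)\,\pair{\fmap(s_n,a_n),\bmu(ds')} = M_a^{\top}\Phi_n .
\]
Thus the noise $\xi^{(a)}_n := \fmap(s'_n,a) - M_a^{\top}\Phi_n$ is conditionally mean zero, $\E[\xi^{(a)}_n\mid s_n,a_n]=0$, and bounded, $\norm{\xi^{(a)}_n}_2\le 2$ by Jensen (both $\fmap(s'_n,a)$ and its conditional mean have $\ell_2$-norm at most $1$). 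Stacking rows gives $Y_a = X M_a + \Xi_a$, so the ridge estimator indeed targets $M_a$.

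Second, I would use the exact decomposition. With $G=X^{\top}X$ and $\lambda=1$, the identity $(G+\lambda I)^{-1}G = I - \lambda(G+\lambda I)^{-1}$ yields
\[
    \htM_a^{\lambda} - M_a = -\lambda (G+\lambda I)^{-1} M_a + (G+\lambda I)^{-1} X^{\top}\Xi_a .
\]
The bias term has operator norm at most $\tfrac{\lambda\,\opnorm{M_a}{2}}{\lambda_{\min}(G)+\lambda}$, and $\opnorm{M_a}{2}\le\sqrt{d}$ follows from $\sup_{s,a}\norm{\fmap(s,a)}_2\le 1$ and $\norm{|\bmu|(\cS)}_2\le\sqrt{d}$; this term will be lower order once $G$ is well conditioned. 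For the stochastic term I would \emph{condition on the covariates} $\{(s_n,a_n)\}_{n=1}^N$: then $G$ is fixed, while $X^{\top}\Xi_a=\sum_n \Phi_n(\xi^{(a)}_n)^{\top}$ is a sum of independent, conditionally mean-zero, bounded terms. Treating the $d$ response coordinates separately, each column $\sum_n \Phi_n\,\xi^{(a)}_{n,j}$ is a bounded vector sum with variance proxy $\sum_n\E[\xi^{(a)2}_{n,j}\norm{\Phi_n}_2^2]\le 4N$, so vector Bernstein gives concentration at scale $\sqrt{N\log(Ad/p)}$; bounding the operator norm by the Frobenius norm over the $d$ columns produces the factor $\sqrt{d}$ and yields, with probability $\ge 1-p/2$ after a union bound over $a$, $\opnorm{X^{\top}\Xi_a}{2}\lesssim \sqrt{dN\log(Ad/p)}$.

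Third, I would control the random Gram matrix. Applying matrix Bernstein to $\Sigma_N = G/N = \tfrac1N\sum_n\Phi_n\Phi_n^{\top}$ (independent PSD summands of norm $\le 1$, mean $\Sigma$), the sample-size hypothesis $N\ge \tfrac{4C^2 d\log(2Ad/p)}{\lambda_{\min}(\Sigma)^2}$ guarantees $\lambda_{\min}(G)=N\lambda_{\min}(\Sigma_N)\ge \tfrac N2\lambda_{\min}(\Sigma)$ on an event of probability $\ge 1-p/2$. On this event $\opnorm{(G+\lambda I)^{-1}}{2}\le \tfrac{2}{N\lambda_{\min}(\Sigma)}$, so multiplying through, the bias term is $O\!\big(\sqrt{d}/(N\lambda_{\min}(\Sigma))\big)$ (dominated) and the variance term is $\lesssim \sqrt{d\log(Ad/p)/(N\lambda_{\min}(\Sigma)^2)}$. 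A union bound over $a\in\cA$ and the two high-probability events, together with tuning the absolute constant $C$, gives the stated uniform bound $4C\sqrt{d\log(2Ad/p)/(N\lambda_{\min}(\Sigma)^2)}$ with probability $\ge 1-p$.

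The main obstacle is the statistical coupling between $(G+\lambda I)^{-1}$ and $X^{\top}\Xi_a$, since both depend on the covariates $\Phi_n$. Conditioning on $\{(s_n,a_n)\}_{n=1}^N$ cleanly decouples them: it freezes $G$ and reduces the stochastic term to a sum of independent bounded matrices, while the conditioning of $G$ is handled in a separate, purely covariate-level concentration step. The remaining care is bookkeeping the constants and failure probabilities so that the sample-size threshold, the two events, and the union over the $A$ actions combine consistently into the final constant $4C$.
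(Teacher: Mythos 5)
Your proposal is correct and follows essentially the same route as the paper's proof: the identical ridge bias--variance decomposition $\htM_a^{\lambda}-M_a = -\lambda(X^{\top}X+\lambda I)^{-1}M_a + (X^{\top}X+\lambda I)^{-1}X^{\top}\Xi_a$, the same bound $\opnorm{M_a}{2}\le\sqrt{d}$ for the bias, separate high-probability control of the Gram matrix and the cross term, and a union bound over actions. The only differences are which concentration tools are invoked (the paper uses the sub-Gaussian covariance bound of Theorem~\ref{thm:vershynin} for the Gram matrix and the matrix Bernstein inequality of Theorem~\ref{thm:matrix-bernstein} applied directly to $X^{\top}Z_a$, whereas your column-wise vector Bernstein plus Frobenius argument loses an extra $\sqrt{d}$ in the noise term); this loss is harmless because the paper's own final estimate absorbs exactly that factor of $\sqrt{d}$ slack when merging the bias and noise terms.
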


\begin{lemma}\label{lem:data-burst-prob-estimator}
    For all $p \in (0,1)$ and $N \ge 1$, empirical means $\htbeta_a$ satisfy
    \begin{align*}    \Pr\!\rbr{\textstyle\sup_{a\in\cA}|\htbeta_a - \beta_a| \le \sqrt{\tfrac{12\ln(3A/p)}{N p_{\min}}}} \ge 1 - p
    \end{align*}
\end{lemma}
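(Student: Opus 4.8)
The plan is to treat $\htbeta_a$ as a \emph{ratio} of the numerator $S_a = \sum_{n=1}^N b_n \ind(a_n = a)$ and the random count $N_a = \sum_{n=1}^N \ind(a_n = a)$, and to control these two sources of randomness separately before taking a union bound over the $A$ actions. Write $p_a = \E[\ind(a_1 = a)] \ge p_{\min}$, so that $N_a \sim \bin{N}{p_a}$, and adopt the convention $\htbeta_a = 0$ when $N_a = 0$.

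First I would fix an action $a$ and condition on the entire action sequence $(a_1, \dots, a_N)$, which fixes both $N_a$ and the index set $\{n : a_n = a\}$. Since $b_n \mid a_n \sim \ber{\beta_{a_n}}$ independently, conditionally on the actions the estimator $\htbeta_a = S_a/N_a$ is the empirical mean of $N_a$ i.i.d.\ $\ber{\beta_a}$ variables, so Hoeffding's inequality gives $\Pr(\abr{\htbeta_a - \beta_a} > t \mid \text{actions}) \le 2\exp(-2 N_a t^2)$ for any $t > 0$ whenever $N_a \ge 1$.

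The main obstacle is that $N_a$ is random and can be small (even $0$), so this conditional bound is vacuous when action $a$ is rarely sampled; this random-denominator structure is the crux of the argument. To handle it I would invoke the multiplicative Chernoff lower tail for $N_a \sim \bin{N}{p_a}$, namely $\Pr(N_a < \tfrac12 N p_a) \le \exp(-N p_a/8)$. Splitting on the good event $\{N_a \ge \tfrac12 N p_a\}$ (on which $N_a \ge 1$ since $p_a > 0$, so Hoeffding applies with $2\exp(-2N_a t^2) \le 2\exp(-N p_a t^2)$) and its complement (on which I bound the indicator crudely by the Chernoff estimate, regardless of the value of $\htbeta_a$), the tower rule yields the per-action bound $\Pr(\abr{\htbeta_a - \beta_a} > t) \le 2\exp(-N p_a t^2) + \exp(-N p_a/8)$.

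Finally I would union bound over the $A$ actions, using $p_a \ge p_{\min}$ to get $\Pr(\sup_{a\in\cA} \abr{\htbeta_a - \beta_a} > t) \le A[2\exp(-N p_{\min} t^2) + \exp(-N p_{\min}/8)]$, and substitute $t = \sqrt{12\ln(3A/p)/(N p_{\min})}$. If $t \ge 1$ (equivalently $N p_{\min} \le 12\ln(3A/p)$), the claimed event holds deterministically since $\abr{\htbeta_a - \beta_a} \le 1 \le t$, so I may assume $N p_{\min} > 12\ln(3A/p)$, covering all $N \ge 1$. Then $N p_{\min} t^2 = 12\ln(3A/p)$ makes the first term $2A(p/3A)^{12}$, which is negligible, while $N p_{\min}/8 > \tfrac32 \ln(3A/p)$ makes the second term at most $A(p/3A)^{3/2} < p/3$; their sum stays below $p$, with the generous constant $12$ (and the cosmetic $3$ inside the logarithm) supplying the slack.
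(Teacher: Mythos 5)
Your proposal is correct and takes essentially the same route as the paper: both treat $\htbeta_a$ as the ratio $S_a/N_a$, handle the random count via the multiplicative Chernoff lower tail $\Pr(N_a \le \tfrac12 N p_a) \le \exp(-Np_a/8)$, bound the conditional deviation of $S_a/N_a$ given $N_a$ on that event, and finish with a union bound using $p_a \ge p_{\min}$. The only differences are minor: for the conditional step the paper applies a multiplicative Chernoff bound (to whichever of $S_a$ or $N_a - S_a$ has the larger mean) where you use Hoeffding, and it merges the two failure terms into a single bound $3\exp(-\varepsilon^2 N p_{\min}/12)$ and inverts, while you substitute $t$ and verify the sum stays below $p$ --- your explicit handling of the degenerate case $N p_{\min} \le 12\ln(3A/p)$ is, if anything, slightly more careful than the paper's.
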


Combining these lemmas with Theorem~\ref{thm:newfmap-estimator} yields sample complexity bounds for constructing an $\epsilon$-admissible approximation of $\psi$.

\begin{corollary}\label{cor:dataset-size}
Let $\widetilde{\newfmap}^{M,\beta}$ denote the normalized feature map $\widetilde{\psi}$ from Theorem~\ref{thm:newfmap-estimator}, computed using estimates $\htM_a$ and $\htbeta_a$ constructed from $N$ samples, and estimation error bounds 
\begin{align*}
    \varepsilon = 4C\sqrt{\tfrac{d\,\log(4Ad/p)}{N\lambda_{\min}(\Sigma)^2}}, \qquad\varepsilon_{\beta} = \sqrt{\tfrac{12\ln(6A/p)}{N p_{\min}}},
\end{align*}
with absolute constant $C$ from Lemma~\ref{lem:action-matrix-estimator}. Similarly, let $\widetilde{\newfmap}^{M}$ denote the normalized feature map computed using the true probabilities $\beta_a$ and estimates $\htM_a$ from $N$ samples, with the same $\varepsilon$ and $\varepsilon_{\beta} = 0$.
There exists an absolute constant $c > 0$ such that for all $p \in (0,1)$ and $\epsilon \in (0,1)$, the following holds:
\begin{enumerate}[before=\vspace{-0.1cm},leftmargin=1.0cm,itemsep=0.05cm]
    \item If $N \ge c \cdot \frac{d^3 \log(2Ad/p)}{\epsilon^2 (1 - \gamma)^2 \min\{\lambda_{\min}(\Sigma)^2, d^2 p_{\min}\}}$, then $\widetilde{\newfmap}^{M,\beta}$ is $\epsilon$-admissible with probability at least $1-p$.
    \item If $N \ge c \cdot \frac{d^3 \log(2Ad/p)}{\epsilon^2 (1 - \gamma)^2 \lambda_{\min}(\Sigma)^2}$, then $\widetilde{\newfmap}^{M}$ is $\epsilon$-admissible with probability at least $1-p$.
\end{enumerate}
\end{corollary}

The $\widetilde{O}(\frac{d^3}{\epsilon^2(1-\gamma)^2})$ complexity is polynomial in all problem parameters and independent of $|\cS|$, confirming that linear structure enables tractable estimation even in continuous state spaces. Knowledge of $\beta$ significantly affects dependence on the action space: when $\beta$ is known, this dependence is $O(\log |\cA|)$, whereas unknown $\beta$ incurs $\widetilde{O}(|\cA|)$ since $p_{\min} \le 1/|\cA|$. In the tabular case (Remark~\ref{rmk:tabular}), substituting $d = |\cS| |\cA|$ yields $\widetilde{O}(|\cS|^3 |\cA|^3 / \epsilon^2)$.

\section{Episodic Learning and Regret Analysis}\label{sec:episodic-learning}
We now turn to the problem of episodic learning in unknown systems.
Consider an agent interacting with a linear ATST-MDP over $K$ episodes (with the protocol given in Figure~\ref{fig:episodic}), where each episode $k$ has random length $H^k \sim \mathrm{Geom}(1-\gamma)$, a standard reformulation of discounting in which $\gamma$ acts as a continuation probability. The agent observes states and cumulative undiscounted rewards only at data-bursts or episode termination, when the termination symbol $\bot \notin \cS$ is returned. At the start of each episode, the agent selects a \emph{burst-dependent policy} $\bpi^k = (\pi^k_u)_{u=1}^{\infty}$, where an augmented policy $\pi^k_u:\cX\to\cA$ governs behavior between the $(u-1)$-th and $u$-th data-bursts. This allows the policy to change across data-bursts within an episode; the linearity results of Section~\ref{sec:linear-ATST-MDP} extend directly, with $V^{\bpi}$ and $K^{\bpi}$ defined as expected total discounted rewards under this mechanism.

\begin{figure}[H]
\centering
\fbox{\parbox{0.97\textwidth}{\small
    \textbf{Episodic Learning under ATST-MDP}\vspace{5pt}

    \noindent \textbf{For} each episode $k=1,2,\ldots,K$: \vspace{1pt}

    \qquad The environment initializes running total reward $G^k_0=0$, and the agent selects a burst-dependent policy $\bpi^k$.

    \qquad Then the adversary selects and reveals the initial state $s_1^k$, and the agent initializes the augmented state as $x_1^k=s_1^k$.\vspace{2pt}

    \qquad \textbf{For} rounds $h=1,2,\ldots$:
    \begin{enumerate}[left=1.2cm,before=\vspace{-0.1cm},after=\vspace{-0.1cm},noitemsep]
        \item The agent selects $a_h^k$ based on $x_h^k$ and $\bpi^k$, and incurs the (unobserved) reward $r_h^k = r(s_h^k,a_h^k)$.
        \item[] The environment updates $G_h^k = G_{h-1}^k + r_h^k$ and samples $s_{h+1}^k \sim \Pr(\cdot \mid s_h^k,a_h^k)$.
        \item With probability $1-\gamma$ (\textbf{\emph{termination}}), the environment reveals $(\bot, G_h^k)$ and ends episode $k$.
        \item Otherwise, with probability $\beta(a_h^k)$ (\textbf{\emph{data-burst}}), the environment reveals $(s_{h+1}^k,\, G_h^k)$.
        \item The agent sets $x_{h+1}^k = s_{h+1}^k$ if a data-burst occurs; otherwise $x_{h+1}^k = x_h^k \oplus a_h^k$.
    \end{enumerate}
}}
\caption{Execution protocol for an ATST-MDP over $K$ episodes with geometric horizons.}
\label{fig:episodic}
\end{figure}

Performance is measured by regret against an optimal augmented policy $\pi^*:\cX\to\cA$, whose existence was established in Section~\ref{subsec:ATST-value-functions}:
\begin{equation*}
\cR_K = \textstyle\sum_{k=1}^K \big(V^*(s_1^k) - V^{\bpi^k}(s_1^k)\big).
\end{equation*}
This is the natural benchmark: $\pi^*$ operates under the same observation constraints as the agent. Comparison to the (non-augmented) optimal policy for the underlying fully observed MDP (which corresponds to an ATST-MDP with observation probabilities $\beta \equiv 1$) is not meaningful: the performance gap can grow linearly in $K$, since that policy may exploit state information unavailable to the agent.

\paragraph{Feature map access.} Since the action-sequence feature map $\newfmap$ \eqref{eq:newfmap} may be unknown, we assume the agent has access to an $\epsilon$-admissible approximation $\htnewfmap$ with known $\epsilon$ (Definition~\ref{def:admissible-approx}). By Section~\ref{subsec:approximations}, such an approximation can be efficiently constructed from exploratory data.

\subsection{Least-Squares Value Iteration for ATST-MDPs}

Algorithm~\ref{alg:LSVI-UCB} adapts Least-Squares Value Iteration with Upper Confidence Bounds \citep{jin2020linearmdp} to the linear ATST-MDP setting. The algorithm takes as input an $\varepsilon$-admissible approximation $\htnewfmap$ of $\newfmap$ and an effective horizon parameter $H$ that controls both value iteration depth and the amount of history retained.

At episode $k$, the algorithm uses the \emph{effective history} $\cH^k = (\bs^{\tau}, \ba^{\tau}, R^{\tau}, \bs^{\tau}_N)_{\tau=1}^{N^k}$, comprising information from the first $H$ data-bursts of each previous episode. Each tuple in $\cH^k$ records: an observed state $\bs^{\tau} \in \cS$, the action-sequence $\ba^{\tau} \in \cA^\N$ intended from that state, the undiscounted reward $R^{\tau}$ accumulated until the next data-burst, and the subsequent observed state $\bs^{\tau}_N \in \cS \cup \{\bot\}$. Here $N^k = \sum_{k'=1}^{k-1} \min\{B^{k'}, H\}$, with $B^k$ the number of data-bursts in episode $k$. To ease notation, we write $\htnewfmap^{\tau} = \htnewfmap(\bs^{\tau}, \ba^{\tau})$ and $\newfmap^{\tau} = \newfmap(\bs^{\tau}, \ba^{\tau})$.

\begin{algorithm}[t]
\caption{ATST-LSVI-UCB}\label{alg:LSVI-UCB}
\textbf{Input:} $\epsilon$-admissible approximation of the action-sequence feature map $\htnewfmap:\cS\times\cA^{\N}\to\R^{2d}$, discount factor $\gamma$.\\
\textbf{Parameters:} effective horizon $H$, regularizers $\lambda$ and $\rhoreg$.

\vspace{0.1cm}
\begin{algorithmic}[1]
\FOR{episode $k = 1, \ldots, K$}
    \vspace{0.1cm}
    \STATE \textbf{--- Planning phase: backward value iteration ---}
    \STATE Compile history $\cH^{k} = (\bs^{\tau}, \ba^{\tau}, R^{\tau}, \bs^{\tau}_N)_{\tau=1}^{N^k}$.
    \STATE Set $\Lambda^k = \lambda I + \sum_{\tau = 1}^{N^k} \htnewfmap^{\tau}(\htnewfmap^{\tau})^{\top}$ and initialize $K^k_{u}(s, \ba) = \frac{1}{1-\gamma}$ for $(s, \ba) \in \cS\times\cA^{\N}$ and $u \ge H$.
    \vspace{0.1cm}
    \FOR{$u = H-1, \ldots, 1$}
        \STATE Set $\bw^k_{u} = (\Lambda^k)^{-1} \sum_{\tau=1}^{N^k} \htnewfmap^{\tau} \cV^{\tau,k}_{u}$ with values $\cV^{\tau,k}_{u} = \min\{R^{\tau}\!,\!H\} + \max_{\ba\in\cA^{\N}} K^k_{u\!+\!1}(\bs^{\tau}_N, \ba)$.
        \STATE Set $K^k_u(s,\!\ba)\!=\!\min\!\big\{\htnewfmap_{s,\ba}^{\top}\bw^k_u\!+\!\rhoreg \tnorm{\htnewfmap_{s,\ba}}_{\Lambda^k_{\text{inv}}}, \frac{1}{1-\gamma}\big\}$, where $\htnewfmap_{s,\ba} = \htnewfmap(s,\ba)$ and $\Lambda^k_{\text{inv}} = (\Lambda^k)^{-1}$.
    \ENDFOR
    \vspace{0.2cm}
    \STATE \textbf{--- Execution phase: burst-to-burst rollouts ---}
    \STATE Set $u = 1$ and receive initial state $\bs_1^k$.
    \WHILE{episode $k$ continues}
        \STATE Choose $\ba^k_{u} \in \argmax_{\ba \in \cA^{\N}} K^k_u(\bs^k_{u}, \ba)$ and execute actions from $\ba^k_u$ until either:
        \STATE \hspace{1em}(1) \textbf{data-burst:} receive $\bs^k_{u+1} \in \cS$ and $R^k_{u}$;
        \STATE \hspace{1em}(2) \textbf{termination:} receive $\bs^k_{u+1} = \bot$ and $R^k_u$.
        \STATE Set $u \leftarrow u + 1$ and \textbf{break} if the episode terminated.
    \ENDWHILE
    \vspace{0.2cm}
\ENDFOR
\end{algorithmic}
\end{algorithm}

Each episode, ATST-LSVI-UCB operates in two phases. The \emph{planning phase} performs backward value iteration, computing weights $\bw^k_u$ that define value-functions $K^k_u:\cS\times\cA^{\N}\to[0,(1-\gamma)^{-1}]$. These functions aim to approximate the optimal $K^*(s,\ba) = \langle\newfmap(s,\ba), \bv^{\pi^*}_{1,2}\rangle$ using the estimated feature map $\htnewfmap$, with a UCB bonus encouraging exploration. The \emph{execution phase} follows the greedy policy, selecting action-sequences $\ba^k_u$ that maximize $K^k_u(\bs^k_u,\cdot)$ and executing them until the next data-burst. 

\paragraph{Optimization over $\cA^{\N}$.} Lines 7 and 12 require solving $\max_{\ba \in \cA^{\N}} K^k_u(s, \ba)$. Despite the infinite dimensionality of $\cA^{\N}$, this reduces to optimizing a continuous function over $\{\htnewfmap(s,\ba): \ba\in\cA^{\N}\}$, a compact subset of $\R^{2d}$ when $\htnewfmap$ is $\varepsilon$-admissible, guaranteeing existence of a maximizer. Our analysis assumes access to an optimization oracle.

In practice, when $\gamma$ is bounded away from $1$ and $\beta_{\min} = \min_{a\in\cA} \beta(a) > 0$, distant actions have exponentially decaying influence, enabling approximation via horizon truncation: optimizing over $\cA^L$ contributes $O((\gamma (1-\beta_{\min}))^{L})$ to approximation error. Alternatively, one may restrict optimization to a structured class of action-sequences, such as eventually periodic sequences. In Section~\ref{sec:simulations}, we demonstrate the viability of such restrictions empirically.

\subsection{Theoretical Guarantees}

Given a confidence parameter $p \in (0, 1)$, number of episodes $K$, and an $\epsilon$-admissible feature map $\htnewfmap$ with $\epsilon \le \sqrt{(1-\gamma)/K}$, we set
\begin{align*}
     H = \tceil{\tfrac{\log(K(1-\gamma)^{-1})}{1-\gamma}}+1, \quad \lambda = 1, \quad \rhoreg = c \cdot dH\sqrt{\iota},
\end{align*}
where $\iota = \log(2dKH/p)$ and $c > 0$ is an absolute constant.
We have the following theoretical guarantee.
\begin{theorem}[Regret of Algorithm~\ref{alg:LSVI-UCB}]\label{thm:regret-bound}
    There exists an absolute constant $c\ge 1$ such that, under the above setup, with probability at least $1 - p$, the total regret of Algorithm~\ref{alg:LSVI-UCB} satisfies
        \begin{align*}
        \widetilde O \big(&\sqrt{d^3 K (1-\gamma)^{-3} \iota^2} + d^2 (1-\gamma)^{-2} \iota + \epsilon  \cdot \sqrt{d^2 K^3 (1-\gamma)^{-5} \iota} \big),
        \end{align*}
    where $\widetilde O$ omits polylogarithmic factors independent of $p$.
\end{theorem}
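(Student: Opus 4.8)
The plan is to port the least-squares value-iteration with UCB analysis of \citet{jin2019_linearMDPs} to the action-sequence formulation, treating each data-burst as one ``step'' and the $2d$-dimensional map $\newfmap$ from Theorem~\ref{thm:K-linearity} as the linear feature. Three scalings differ from the standard finite-horizon setting and must be tracked throughout: the feature dimension is $2d$, the value range of every $K^k_u$ is $[0,(1-\gamma)^{-1}]$, and the per-stage regression targets $\min\{R^\tau,H\}+\max_\ba K^k_{u+1}(\bs^\tau_N,\ba)$ are bounded by $\approx H$ rather than by $1$. I would first dispose of the truncations forced by the effective horizon $H$: since $\gamma^H\le(1-\gamma)/K$ and the number of data-bursts in an episode never exceeds its round-length $H^k\sim\mathrm{Geom}(1-\gamma)$, capping the value iteration at $H$ stages, the rewards at $H$, and the per-episode stage count at $H$ distort the regret by a total of $O(1)$ up to logs (this also covers the gap between $V^*$ and its $H$-stage truncation), while $\sum_k\min\{B^k,H\}\lesssim K(1-\gamma)^{-1}$ with high probability. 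This reduces the task to a finite-``horizon'' instance with at most $H$ stages per episode.

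The technical core is a single high-probability event on which (i) the ridge-regression weights concentrate and (ii) optimism holds. For (i) I would apply a self-normalized martingale bound to $\sum_\tau\htnewfmap^\tau\,[\,y^\tau-(\text{one-step Bellman backup of }K^k_{u+1})\,]$ in the $(\Lambda^k)^{-1}$-norm, combined with a covering-number argument over the function class $\{\,(s,\ba)\mapsto\min\{(1-\gamma)^{-1},\,\pair{\htnewfmap(s,\ba),\bw}+\rhoreg\tnorm{\htnewfmap(s,\ba)}_{(\Lambda^k)^{-1}}\}\,\}$ indexed by the bounded weight vectors $\bw$; the covering over a $2d$-dimensional ball is what forces the extra $\sqrt{d}$ in $\rhoreg=c\,dH\sqrt{\iota}$. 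For (ii) I would prove $K^k_u(s,\ba)\ge K^*_u(s,\ba)$ by backward induction on $u$, showing the bonus $\rhoreg\tnorm{\htnewfmap}_{(\Lambda^k)^{-1}}$ dominates the regression error; here the feature-estimation error enters for the first time, since replacing $\newfmap$ by $\htnewfmap$ shifts the represented value by at most $O(\epsilon\norm{\bw^k_u})$, so the induction carries an additive $\epsilon$-dependent slack that must be propagated.

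Next I would telescope the per-episode regret $V^*(s_1^k)-V^{\bpi^k}(s_1^k)$ over the data-burst stages, using the decomposition $\Kpi=R+\Pr\Vpi$ together with the belief linearity of Lemma~\ref{lem:action-matrix-belief} to write each one-step error as $\pair{\htnewfmap^{k,u},\,\bw^k_u-(\text{true backup weight})}$. On the good event each such term is bounded by $2\rhoreg\tnorm{\htnewfmap^{k,u}}_{(\Lambda^k)^{-1}}$ plus a martingale difference and an $O(\epsilon\norm{\bw^k_u})$ correction. Summing, the martingale part is handled by Azuma--Hoeffding with increments $O((1-\gamma)^{-1})$ over $\lesssim K(1-\gamma)^{-1}$ stages, a lower-order $\widetilde O(\sqrt{K(1-\gamma)^{-3}\iota})$; the bonus part is bounded by Cauchy--Schwarz and the elliptical-potential (log-determinant) lemma as $\rhoreg\sqrt{\big(\sum_k\min\{B^k,H\}\big)\cdot\sum_{k,u}\tnorm{\htnewfmap^{k,u}}^2_{(\Lambda^k)^{-1}}}\lesssim dH\sqrt{\iota}\cdot\sqrt{K(1-\gamma)^{-1}\cdot d\iota}$, which is exactly the leading $\widetilde O(\sqrt{d^3K(1-\gamma)^{-3}\iota^2})$ after using $H\asymp(1-\gamma)^{-1}$ up to logs. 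The stages whose bonus is so large that the value clips at $(1-\gamma)^{-1}$, together with the $\lambda$-regularization bias, contribute the additive $\widetilde O(d^2(1-\gamma)^{-2}\iota)$, and accumulating the $O(\epsilon\norm{\bw^k_u})$ corrections over all stages and propagating them through the value recursion yields the $\epsilon\cdot\sqrt{d^2K^3(1-\gamma)^{-5}\iota}$ term.

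I expect the main obstacle to be the concentration-plus-optimism step under the \emph{estimated} feature map. Two difficulties compound: the covering argument must be run over the compact-but-irregular set $\widehat\Psi_{\bs}=\{\htnewfmap(\bs,\ba):\ba\in\cA^{\N}\}$ rather than over a clean parametric family, and the mismatch between $\htnewfmap$ and $\newfmap$ must be threaded through both the self-normalized bound and the inductive optimism so that it accumulates only as the claimed $\epsilon$-term and does not corrupt the leading bonus. A secondary, purely technical obstacle is the elliptical-potential summation: because $\Lambda^k$ is updated once per episode, in batches of up to $H$ stages, rather than once per stage, I would need a determinant-ratio / rarely-switching argument to control the gap between $\Lambda^k$ and the fully-updated information matrix, together with high-probability control of the random total stage count $\sum_k\min\{B^k,H\}$ induced by the geometric horizons.
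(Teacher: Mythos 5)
Your proposal is correct and follows essentially the same route as the paper's proof: data-bursts as stages, self-normalized concentration combined with a covering argument over the clipped UCB value class (the paper's Lemma~\ref{lem:B3}), optimism carrying an additive $\epsilon$-dependent slack through the backward induction (Lemma~\ref{lem:B5}), a telescoping recursion (Lemma~\ref{lem:B6}), Azuma--Hoeffding for the martingale part, and the elliptical-potential bound with a determinant-ratio/doubling correction for the per-episode batched updates of $\Lambda^k$. The only minor discrepancy is one of attribution: in the paper the additive $\widetilde O(d^2(1-\gamma)^{-2}\iota)$ term arises exactly from that batched-update correction rather than from value clipping or the $\lambda$-regularization bias, but since you flag the batching issue and propose the same determinant-ratio fix, this does not affect the argument.
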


The regret bound holds uniformly over all fixed observation probability functions $\beta:\cA\to[0,1]$. Different choices of $\beta$ may change the benchmark $V^*$, since regret is measured against the optimal augmented policy under the same observation constraints, but the stated upper bound remains valid for each such choice.

The proof is in Appendix~\ref{app:regret-bound}. For $\epsilon = O(\frac{1-\gamma}{K\sqrt{d}})$, the third term becomes lower-order and the bound reduces to $\widetilde{O}(\sqrt{d^3 K (1-\gamma)^{-3}})$, matching the rate for fully observable linear MDPs \citep{jin2020linearmdp}. By Corollary~\ref{cor:dataset-size}, the required accuracy $\epsilon \le \frac{1-\gamma}{K\sqrt{d}}$ can be achieved with high probability from $\widetilde{O}(K^2 d^4 / (1 - \gamma)^4)$ exploratory samples; this approximation is reward-free, so once $\htnewfmap$ is constructed it applies to any reward function.

\section{Numerical Experiments}\label{sec:simulations}

ATST-LSVI-UCB (Algorithm~\ref{alg:LSVI-UCB}) assumes oracle access for optimizing over the infinite action-sequence space $\cA^{\N}$. In practice, one may restrict optimization to a finite candidate class $\cA^{\mathrm{seq}} \subset \cA^{\N}$ when near-optimal policies can be expected to generate sequences from such a class. Eventually periodic sequences with bounded prefix and period lengths are a convenient choice: in many partially observable control tasks, effective behavior consists of a short corrective prefix followed by a repeating stabilization pattern, e.g., classical cartpole balancing \citep{Barto1983NeuronlikeAE}. Moreover, under the linear MDP Assumption~\ref{assump:Linear_MDP}, the feature map $\psi(s,\ba)$ for eventually periodic $\ba \in \cA^\N$ admits a closed form via a Neumann-series argument (Lemma~\ref{lem:closed-form-periodic}).

We evaluate ATST-LSVI-UCB on two tabular ATST-MDP environments (Figure~\ref{fig:environment-diagrams}) with state space $\cS = \{s_1, \ldots, s_6\}$, action space $\cA = \{0, 1\}$, discount factor $\gamma = 0.99$, and uniform observation probabilities $\beta(a) = \beta^* \in \{0.05, 0.1, 0.2, 0.5\}$. We restrict optimization to eventually periodic sequences of the form
\begin{align*}
\cA^{\seq}
&=
\Big\{
\{b\}^P \oplus \big(\{1\!-\!b\}^{L_1}\oplus \{b\}^{L_2}\big)^{\infty}
:\,
b\in\{0,1\},\; P\in[5],\;
L_1,L_2 \in \{0,\ldots,10\},\;
L_1\!+\!L_2\ne 0
\Big\},
\end{align*}
i.e., a prefix of $P$ repeated actions followed by an alternating periodic pattern. This yields $|\cA^{\mathrm{seq}}|=1012$ candidate sequences after removing equivalent representations.

\textbf{RiverSwim} (Figure~\ref{fig:riverswim}) is a standard exploration benchmark consisting of six states arranged in a chain. The agent starts in $s_1$, where a small reward is available, while $s_6$ yields a large reward. Action $1$ (right) attempts to move against the current and may fail, whereas action $0$ (left) moves with the current and always succeeds. The optimal policy always selects action $1$, but the agent must explore extensively against the current to discover this.

\textbf{RiverBalance} (Figure~\ref{fig:riverbalance}) is a novel variant we introduce that rewards staying near the center, in states $s_3$ and $s_4$. The current pulls the agent away from the center, so the optimal policy must repeatedly steer back and balance actions to remain there.

\begin{figure}[H]
    \centering
    \begin{subfigure}[t]{0.49\textwidth}
        \centering
        \resizebox{\linewidth}{!}{% riverswim_diagram.tex

\begin{tikzpicture}[
    ->,
    >=Stealth,
    node distance=0.9cm,
    state/.style={circle, draw, minimum size=0.8cm, inner sep=0pt},
    initial state/.style={line width=1.5pt},
    every edge/.style={draw, ->, >=Stealth},
    self loop/.style={looseness=5, in=110, out=70}
]

% States
\node[state] (s1) {$s_1$};
\node[state, right=of s1] (s2) {$s_2$};
\node[state, right=of s2] (s3) {$s_3$};
\node[state, right=of s3] (s4) {$s_4$};
\node[state, right=of s4] (s5) {$s_5$};
\node[state, right=1.4cm of s5] (s6) {$s_6$};

% Self-loops
\path (s1) edge[looseness=10, out=200, in=160, dashed, gray] node[below, yshift=-0.3cm, xshift=0.1cm, font=\scriptsize] {$(1, r\!=\!\frac{5}{1000})$} (s1);
\path (s1) edge[out=110, in=70, looseness=4] node[above, font=\scriptsize] {$0.4$} (s1);
\path (s2) edge[out=110, in=70, looseness=4] node[above, font=\scriptsize] {$0.6$} (s2);
\path (s3) edge[out=110, in=70, looseness=4] node[above, font=\scriptsize] {$0.6$} (s3);
\path (s4) edge[out=110, in=70, looseness=4] node[above, font=\scriptsize] {$0.6$} (s4);
\path (s5) edge[out=110, in=70, looseness=4] node[above, font=\scriptsize] {$0.6$} (s5);
\path (s6) edge[looseness=10, out=20, in=-20] node[above, yshift=0.35cm, xshift=-0.3cm,  font=\scriptsize] {$\left(0.6, r\!=\!1\right)$} (s6);

% Three arrows between each pair of neighbors:
% 1. A->B (right): leaves at ~45 deg from A, arrives at ~135 deg at B - label 0.05
% 2. B->A (left): leaves at ~135 deg from B, arrives at ~45 deg at A - label 0.35/0.6
% 3. B->A: dashed gray, horizontal below - label 1

% s1 <-> s2
\draw[->] (s1.30) to[out=30, in=150] node[above, pos=0.5, font=\scriptsize] {$0.6$} (s2.150);
\draw[->] (s2.150) to[out=-150, in=-30] node[below, pos=0.5, font=\scriptsize] {$0.05$} (s1.30);
\draw[->, gray, dashed] (s2.-90) to[out=-150, in=-30, looseness=0.8] node[below, pos=0.5, font=\scriptsize] {$1$} (s1.-90);

% s2 <-> s3
\draw[->] (s2.30) to[out=30, in=150] node[above, pos=0.5, font=\scriptsize] {$0.35$} (s3.150);
\draw[->] (s3.150) to[out=-150, in=-30] node[below, pos=0.5, font=\scriptsize] {$0.05$} (s2.30);
\draw[->, gray, dashed] (s3.-90) to[out=-150, in=-30, looseness=0.8] node[below, pos=0.5, font=\scriptsize] {$1$} (s2.-90);

% s3 <-> s4
\draw[->] (s3.30) to[out=30, in=150] node[above, pos=0.5, font=\scriptsize] {$0.35$} (s4.150);
\draw[->] (s4.150) to[out=-150, in=-30] node[below, pos=0.5, font=\scriptsize] {$0.05$} (s3.30);
\draw[->, gray, dashed] (s4.-90) to[out=-150, in=-30, looseness=0.8] node[below, pos=0.5, font=\scriptsize] {$1$} (s3.-90);

% s4 <-> s5
\draw[->] (s4.30) to[out=30, in=150] node[above, pos=0.5, font=\scriptsize] {$0.35$} (s5.150);
\draw[->] (s5.150) to[out=-150, in=-30] node[below, pos=0.5, font=\scriptsize] {$0.05$} (s4.30);
\draw[->, gray, dashed] (s5.-90) to[out=-150, in=-30, looseness=0.8] node[below, pos=0.5, font=\scriptsize] {$1$} (s4.-90);

% s5 <-> s6
\draw[->] (s5.30) to[out=30, in=150, looseness=0.5] node[above, pos=0.5, font=\scriptsize] {$0.35$} (s6.150);
\draw[->] (s6.150) to[out=-150, in=-30,looseness=0.5] node[below, pos=0.5, font=\scriptsize] {$(0.4,\!r\!=\!1)$} (s5.30);
\draw[->, gray, dashed] (s6.-90) to[out=-150, in=-30, looseness=0.8] node[below, pos=0.5, font=\scriptsize] {$1$} (s5.-90);

\end{tikzpicture}}
        \caption{\textbf{RiverSwim.}}
        \label{fig:riverswim}
    \end{subfigure}\hfill
    \begin{subfigure}[t]{0.49\textwidth}
        \centering
        \resizebox{\linewidth}{!}{% riverbalance_diagram.tex

\begin{tikzpicture}[
    ->,
    >=Stealth,
    node distance=0.9cm,
    state/.style={circle, draw, minimum size=0.8cm, inner sep=0pt},
    initial state/.style={line width=1.5pt},
    every edge/.style={draw, ->, >=Stealth},
    self loop/.style={looseness=5, in=110, out=70}
]

% States
\node[state] (s1) {$s_1$};
\node[state, right=of s1] (s2) {$s_2$};
\node[state, right=of s2] (s3) {$s_3$};
\node[state, right=2.5cm of s3] (s4) {$s_4$};
\node[state, right=of s4] (s5) {$s_5$};
\node[state, right=of s5] (s6) {$s_6$};

% Left annotation
% \node[left=0.6cm of s1] {$\left(1, r = \frac{5}{1000}\right)$};

% Right annotation
% \node[right=0.6cm of s6] {$(0.6, r = 1)$};

% Self-loops
\path (s1) edge[looseness=6, out=200, in=160, dashed, gray] node[left, font=\scriptsize] {$1$} (s1);
\path (s1) edge[out=110, in=70, looseness=5] node[above, font=\scriptsize] {$0.15$} (s1);
\path (s2) edge[out=110, in=70, looseness=5] node[above, font=\scriptsize] {$0.15$} (s2);
\path (s3) edge[out=110, in=70, looseness=4] node[above, yshift=0.0cm, font=\scriptsize] {$(0.15,r\!=\!1)$} (s3);
\path (s4) edge[out=-70, in=-110, looseness=4, dashed, gray] node[below, font=\scriptsize] {$\left(0.15, r\!=\!1\right)$} (s4);
\path (s5) edge[out=-70, in=-110, looseness=5, dashed, gray] node[below, font=\scriptsize] {$0.15$} (s5);
\path (s6) edge[out=-70, in=-110, looseness=5, dashed, gray] node[below, font=\scriptsize] {$0.15$} (s6);
\path (s6) edge[looseness=6, out=20, in=-20] node[right, font=\scriptsize] {$1$} (s6);

% Three arrows between each pair of neighbors:
% 1. A->B (right): leaves at ~45 deg from A, arrives at ~135 deg at B - label 0.05
% 2. B->A (left): leaves at ~135 deg from B, arrives at ~45 deg at A - label 0.35/0.6
% 3. B->A: dashed gray, horizontal below - label 1

% s1 <-> s2
\draw[->] (s1.30) to[out=30, in=150] node[above, pos=0.5, font=\scriptsize] {$0.85$} (s2.150);
\draw[->, gray, dashed] (s2.210) to[out=-150, in=-30] node[below, pos=0.5, font=\scriptsize] {$1$} (s1.-30);

% s2 <-> s3
\draw[->] (s2.30) to[out=30, in=150] node[above, pos=0.5, font=\scriptsize] {$0.85$} (s3.150);
\draw[->, gray, dashed] (s3.210) to[out=-150, in=-30] node[below, pos=0.5, font=\scriptsize] {$1$} (s2.-30);

% s3 <-> s4
\draw[->] (s3.30) to[out=30, in=150, looseness=0.5] node[above, pos=0.5, font=\scriptsize] {$(0.85,r\!=\!1)$} (s4.150);
\draw[->, gray, dashed] (s4.210) to[out=-150, in=-30, looseness=0.5] node[below, pos=0.5, font=\scriptsize] {$\left(0.85, r\!=\!1\right)$} (s3.-30);

% s4 <-> s5
\draw[->] (s4.30) to[out=30, in=150] node[above, pos=0.5, font=\scriptsize] {$1$} (s5.150);
\draw[->, gray, dashed] (s5.210) to[out=-150, in=-30] node[below, pos=0.5, font=\scriptsize] {$0.85$} (s4.-30);

% s5 <-> s6
\draw[->] (s5.30) to[out=30, in=150] node[above, pos=0.5, font=\scriptsize] {$1$} (s6.150);
\draw[->, gray, dashed] (s6.210) to[out=-150, in=-30] node[below, pos=0.5, font=\scriptsize] {$0.85$} (s5.-30);

\end{tikzpicture}}
        \caption{\textbf{RiverBalance.}}
        \label{fig:riverbalance}
    \end{subfigure}
    \captionsetup{justification=centering}
\caption{Transition diagrams for ATST-MDPs RiverSwim and RiverBalance.\\
Dashed arrows denote moving left, solid arrows denote moving right.}
    \label{fig:environment-diagrams}
\end{figure}

Figure~\ref{fig:simulation-results} shows the average cumulative reward over episodes for ATST-LSVI-UCB with access to the exact action-sequence feature map $\psi$ and effective horizon $H=100$, for varying observation probabilities $\beta^*\in\{0.05,0.1,0.2,0.5\}$. In \textbf{RiverSwim}, all settings converge to the optimal policy, with smaller values of $\beta^*$ converging faster: infrequent observations induce longer open-loop commitments, which encourage early exploration. In \textbf{RiverBalance}, all settings improve and then plateau, with larger $\beta^*$ attaining a higher average reward. This is consistent with the fact that sustaining high reward requires state-dependent corrections to steer back toward the rewarding center, which are more effective when observations are more frequent.

\begin{figure}[H]
    \centering
    \begin{subfigure}[t]{0.49\textwidth}
        \centering
        \includegraphics[width=\linewidth]{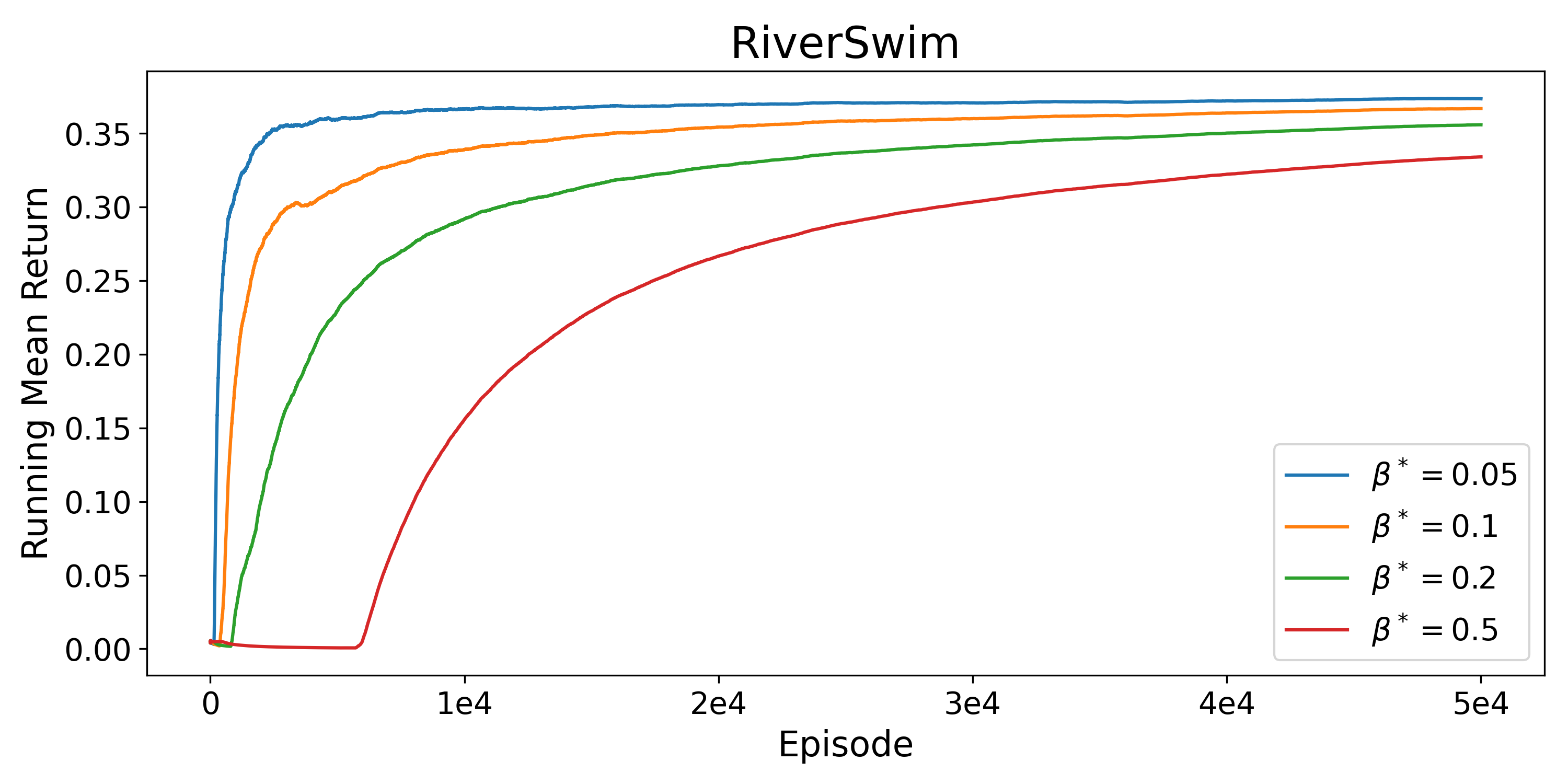}
        \caption{\textbf{RiverSwim.} Smaller $\beta^*$ converges faster.}
        \label{fig:riverswim-results}
    \end{subfigure}\hfill
    \begin{subfigure}[t]{0.49\textwidth}
        \centering
        \includegraphics[width=\linewidth]{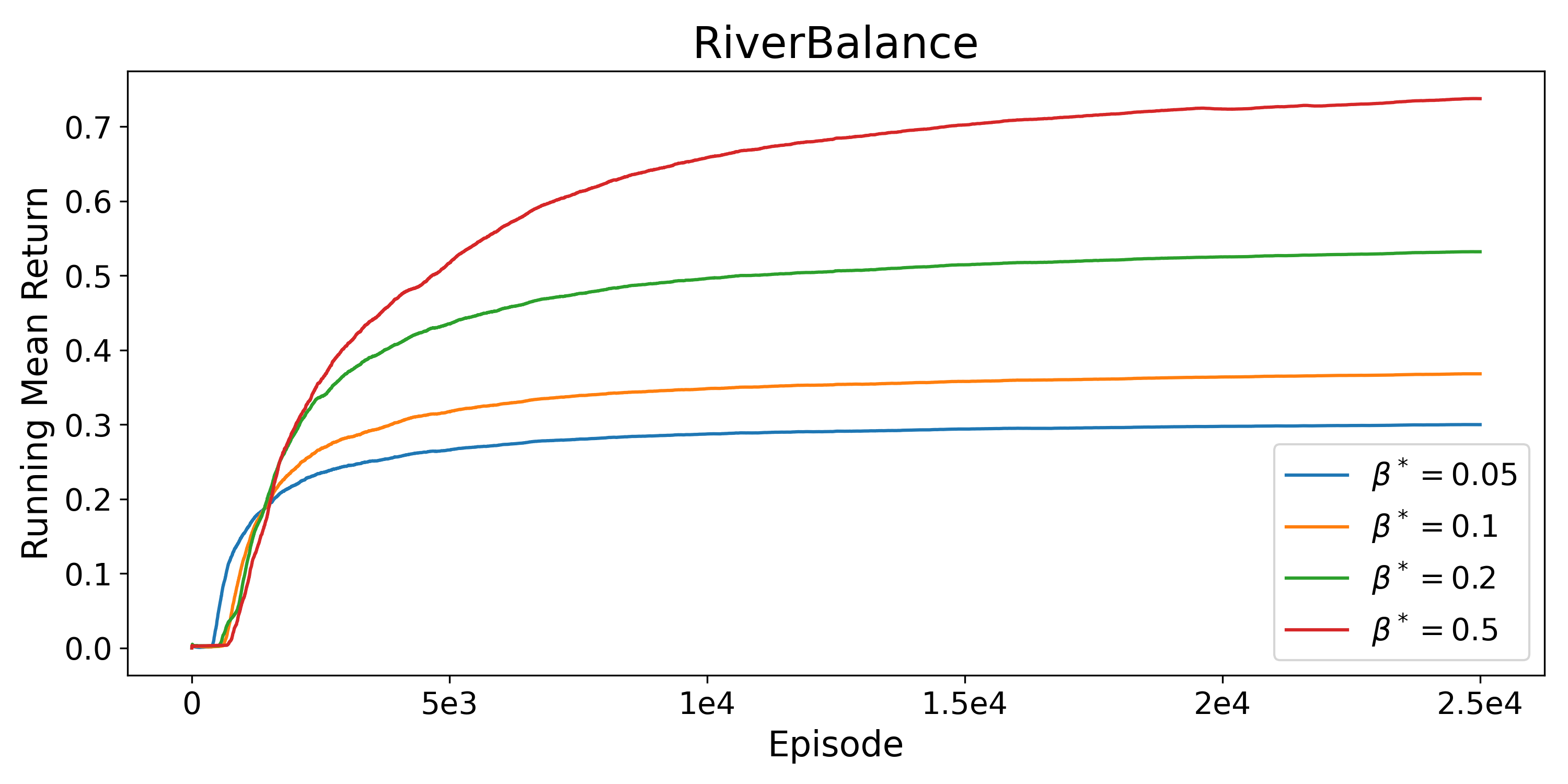}
        \caption{\textbf{RiverBalance.} Larger $\beta^*$ achieves higher reward.}
        \label{fig:riverbalance-results}
    \end{subfigure}
    \caption{$(1\!-\!\gamma)$-scaled running average of episodic reward versus episode, averaged over 5 simulations.}
    \label{fig:simulation-results}
\end{figure}

The contrasting behaviors highlight a key feature of ATST-MDPs: the role of observation frequency depends on whether near-optimal control requires state-dependent corrections. When it does not (RiverSwim), sparse observations can accelerate learning by encouraging longer open-loop commitments. When it does (RiverBalance), more frequent observations enable tighter closed-loop stabilization and higher reward; with sparse observations, performance degrades gracefully and the algorithm still learns a policy that is near-optimal for the corresponding observation regime.

\section{Discussion and Future Work}

This work introduces ATST-MDPs, a novel framework that captures the challenges of reinforcement learning in environments where state observability is action-triggered and sporadic. We derived Bellman optimality equations, showed a linear representation for the induced action-sequence value functions, and provided approximation guarantees for learning the action-sequence feature map from off-policy data. Building on this structure, we proposed ATST-LSVI-UCB and proved a regret bound for episodic learning with geometric horizons, assuming accurate feature-map estimation.

Several interesting questions remain open for future research. First, ATST-LSVI-UCB assumes access to an optimization oracle over action-sequences. Designing efficient approximation schemes, such as restricting to finite-depth action trees or developing tractable surrogate objectives, would significantly enhance practical applicability. Second, while we establish off-policy methods for estimating action-matrices and data-burst probabilities, a fully online algorithm that adaptively refines these estimates would provide a more robust and practical solution.

Additionally, ATST-MDPs offer a novel perspective on RL with stochastic delays (e.g., \citet{bouteiller2021_random_delays}). Classical models treat delays as \emph{exogenous}; here they are \emph{endogenous}, with actions shaping the distribution of observation times. A unifying view would allow \emph{round-dependent} data-burst probabilities $\beta_t(a)$: when $\beta_t$ is action-independent, one recovers some exogenous delay models. Analyzing how different delay-generation mechanisms affect learning and regret presents a promising research direction.

Overall, our results establish a foundation for learning under action-triggered state observations, while the flexibility of our formulation opens pathways toward addressing information constraints across a wide range of sequential decision-making problems.

\section*{Acknowledgements}
This work was partially supported by NSERC grant RGPIN-2024-05092 and a Connaught New Researcher Award to WM.

\bibliography{refs}

\newpage
\appendix

\newpage

\section{Additional Related Work}\label{app:related-work}

\paragraph{POMDPs and planning under partial observability.}
Classical work on decision making with incomplete state information is captured by POMDPs; see the survey of \cite{kaelbling1998} and subsequent algorithmic advances such as point-based value iteration (PBVI) \cite{Pineau2003PointbasedVI} and heuristic search value iteration (HSVI) \cite{smith2004hsvi}. Recent progress includes statistical and computational guarantees for learning and planning in partially observed settings \cite{cai2022}.

\paragraph{RL with delayed observations (and augmented states).}
Early formulations analyze delayed MDPs and augmented-state reductions that stack the last observed state with a queue of intervening actions \cite{katsikopoulos2003, walsh2009}. More recent work examines random delays in deep RL, showing robustness and performance trade-offs under synthetic and real latency processes \cite{bouteiller2021_random_delays}, and explores imitation/learning pipelines that must handle delayed feedback \cite{liotet2022imitation}. Beyond RL, delayed feedback has also been studied extensively in online learning, including adversarial bandits \cite{cesa16,zimmert2020}, online convex optimization \cite{quanrud2015,qiu2025}, and bandit convex optimization \cite{heliou2020,bistritz2022,ryabchenko2026reductiondelayedimmediatefeedback}.

\paragraph{Paid observations and information acquisition.}
Another related line studies decision making when observations incur explicit costs. In RL, agents may choose when to acquire measurements or labels, trading reward for information \cite{bellinger2021amrl, brunskill21, wang2025ocmdp}. ATST-MDPs subsume ACNO-MDPs \cite{brunskill21} as a strict special case: each base action can be replaced by two variants, one with $\beta=0$ and one with $\beta=1$, with the observation cost absorbed into the reward. In online learning, closely related ``label-efficient'' and budgeted feedback models investigate how querying constraints affect regret \cite{seldin14, audibert10}. Capacity-constrained online learning with delays, introduced in \cite{capacity_constraint2025}, instead limits the number of pending feedback signals the learner can track at once, connecting budgeted information acquisition with online scheduling.

\paragraph{Intermittent observations and unreliable sensing.}
A practical motif is intermittently available observations due to sensing/communication failures. Deep Recurrent Q-Learning (DRQN) \cite{hausknecht2015drqn} tackles partial observability (flickering screen) by replacing feedforward policies with RNNs, showing empirical gains under dropped observations. Subsequent empirical studies examine control with sporadic measurements or packet loss \cite{klima18}. More recent formulations introduce intermittently observable MDPs with modeling/algorithmic structure beyond ad-hoc masking \cite{chen2026iomdp}. This line is largely empirical deep RL.

\paragraph{Active sensing and perception.}
Active perception frames sensing as a decision problem: agents select actions that improve informativeness while pursuing task reward. Active-perception POMDPs \cite{Satsangi_2017} formalize this, and recent deep RL approaches study active vision and act-then-measure protocols that interleave task actions with targeted measurements \cite{shang2023activevision, krale2023atm}. These works are primarily empirical and use deep neural networks (vision backbones with policy/value heads), sometimes with recurrent modules for memory; theoretical analysis focuses on tractable planning surrogates and approximate belief updates rather than regret.

\section{Augmented Policies: Proofs}\label{app:Bellman}

In this section, we prove existence of the optimal augmented policy $\pi^*:\cX\to\cA$. The argument follows by classic application of the Banach fixed-point theorem for the Bellman optimality operator (e.g., see \cite{puterman_book}). First, we restate and prove Theorem~\ref{thm:Bellman-optimality}.

\restatetheorem{thm:Bellman-optimality}{
     Let $\cM = (\cS, \cA, \Pr, r, \gamma, \beta)$ be an ATST-MDP with augmented state space $\cX = \cS\times\cA^{<\N}$ and consider the set of measurable functions $\cV = \{V:\cX\to[0,\frac{1}{1-\gamma}]\}$.
    \begin{itemize}[itemsep=0.1cm, leftmargin=0.5cm,before=\vspace{-0.1cm}]
        \item \textbf{(Policy Evaluation)} For any policy $\pi:\cX\to\cA$,
        \begin{align*}
            \Qpi(x, a) 
            = {\E}_{s\sim b(.|x)}\sbr{r(s,a)} + \gamma\beta_a\, {\E}_{s'\sim b(. | x \oplus a) }\sbr{\Vpi(s')}+ \gamma\cbeta_a\, \Vpi(x\oplus a).
        \end{align*}
        \item \textbf{(Contraction)} The Bellman operator $\T:\cV\to\cV$ given by\looseness=-1
        \begin{align*}
            \T V(x) := \max_{a\in\cA} \big\{\E_{s\sim b(.|x)}[r(s,a)]
            + \gamma\beta_a \, {\E}_{s'\sim b(. | x \oplus a)}[V(s')]
            + \gamma\cbeta_a \,  V(x\oplus a) \big\},
        \end{align*}
        is a $\gamma$-contraction on $(\cV, \tnorm{.}_\infty)$.
        \item \textbf{(Optimality)} There exists an optimal augmented policy $\pi^*:\cX\to\cA$ achieving $V^*(x) = \sup_{\pi} \Vpi(x)$ for all $x\in\cX$, where $V^*$ is the unique fixed point of $\T$.
    \end{itemize}
}

\begin{proof}
We prove each claim in turn.

\medskip
\noindent\textbf{(i) Policy Evaluation.}
The quantity $\Qpi(x,a)$ is the expected return when starting from augmented state $x$, taking action $a$, and following $\pi$ thereafter. The term ${\E}_{s\sim b(.|x)}[r(s,a)]$ captures the expected immediate reward. After executing $a$, the next augmented state depends on whether a data-burst occurs: with probability $\beta(a)$, the next state $s' \sim b(\cdot \mid x \oplus a)$ is observed and the continuation value is $\Vpi(s')$; with probability $\cbeta(a)$, no new state is observed, the augmented state transitions to $x \oplus a$, and the continuation value is $\Vpi(x \oplus a)$. Taking expectations and discounting by $\gamma$ yields the claimed expression.

\medskip
\noindent\textbf{(ii) Contraction.}
For any $V \in \cV$, define $Q_V : \cX \times \cA \to [0, \frac{1}{1-\gamma}]$ by
\begin{align*}
Q_V(x,a) = {\E}_{s\sim b(.|x)}[r(s,a)] + \gamma\beta(a)\, {\E}_{s'\sim b(. | x \oplus a)}[V(s')] + \gamma\cbeta(a)\, V(x\oplus a),
\end{align*}
so that $\T V(x) = \max_{a} Q_V(x,a)$. Fix arbitrary $V, U \in \cV$. For every $x \in \cX$,
\begin{align*}
\abr{\T V(x) - \T U(x)}
&= \abr{\textstyle\max_{a} Q_V(x,a) - \max_{a} Q_U(x,a)}\\
&\le \max_a \abr{Q_V(x,a) - Q_U(x,a)}\\
&= \max_a \abr{\gamma\beta(a)\, {\E}_{s'\sim b(. | x \oplus a)}[(V-U)(s')] + \gamma\cbeta(a)\, (V-U)(x\oplus a)}\\
&\le \max_{a} \rbr{\gamma\beta(a) \norm{V-U}_{\infty} + \gamma\cbeta(a) \norm{V-U}_{\infty}}\\
&= \gamma \norm{V-U}_{\infty}.
\end{align*}
Thus $\T$ is a $\gamma$-contraction on $(\cV, \|\cdot\|_\infty)$.

\medskip
\noindent\textbf{(iii) Optimality.}
By part~(i), the function $V^*(x) := \sup_{\pi} \Vpi(x)$ is a fixed point of $\T$. Part~(ii) and the Banach fixed-point theorem imply $V^*$ is the unique such fixed point. Any policy $\pi^* : \cX \to \cA$ satisfying
\begin{align*}
\pi^*(x) \in \argmax_{a\in\cA} \Big\{&\, {\E}_{s\sim b(.|x)}[r(s,a)] + \gamma\beta(a)\, {\E}_{s'\sim b(. | x \oplus a)}[V^*(s')] + \gamma\cbeta(a)\, V^*(x\oplus a)\Big\}
\end{align*}
for all $x \in \cX$ achieves $V^{\pi^*} = V^*$, since $\pi^*$ attains the supremum in the Bellman equation at every augmented state. The existence of such a measurable selector follows from standard arguments~\citep{puterman_book}.
\end{proof}

\restateproposition{thm:K-V-Q-link}{
    For any augmented policy $\pi:\cX\to\cA$, let $\bapi:\cX \to \cA^{\N}$ denote the induced action-sequence map, where $\bapi(x) = (\pi(x), \pi(x \oplus \pi(x)), \ldots)$. Then, for all $s \in \cS$ and $a \in \cA$: $\Qpi(s, a) = \Kpi(s, a \oplus \bapi(s \oplus a))$ and $ \Vpi(s) = \Kpi(s, \bapi(s))$.
}
\begin{proof}
Fix $\pi:\cX\to\cA$. For $x\in\cX$, the induced action-sequence map can be written in the short recursive form
\[
\bapi(x)_1=\pi(x),\qquad \bapi(x)_{h+1}=\pi\!\left(x\oplus \bapi(x)_{1:h}\right), \quad h \ge 1,
\]
i.e., it is the sequence of actions chosen by $\pi$ along the branch obtained by appending past actions.

Start from $x_1=s$ and take $a_1=a$. On rounds with no data-burst, the augmented state updates as $x_{h+1}=x_h\oplus a_h$, hence $a_{h+1}=\pi(x_{h+1})$. Therefore the executed (infinite) action-sequence is
\[
(a_1,a_2,\ldots)=a\oplus \bapi(s\oplus a).
\]
By the definition \eqref{eq:Kpi} of $\Kpi(s,\ba)$ as the expected discounted reward from executing $\ba$ until the first data-burst and then continuing with $\pi$, we get
\[
\Qpi(s,a)=\Kpi\big(s,\,a\oplus \bapi(s\oplus a)\big).
\]
Finally, $\Vpi(s)=\Qpi(s,\pi(s))$ and $\bapi(s)=\pi(s)\oplus \bapi(s\oplus \pi(s))$, so $\Vpi(s)=\Kpi\big(s,\,\bapi(s)\big)$.
\end{proof}

Additionally, we provide formulas for $R$ and $\Pr V$, obtained by conditioning on $\Tdb$.

\begin{lemma}\label{lem:decomposition}
    For all $x\in\cX$ and $\ba \in \cA^{\N}$, it holds that
    \begin{align*}
        R(x, \ba) &= \textstyle\sum_{h=1}^\infty \gamma^{h-1} \rbr{\textstyle\prod_{i=1}^{h-1}\cbeta(a_i)}\, {\mathop{\E}_{s\sim b(.|\wtx_h)}\sbr{r(s,a_h)}},\\
        \Pr V(x, \ba) &= \textstyle\sum_{h=1}^\infty \gamma^{h} \rbr{\textstyle\prod_{i=1}^{h-1}\cbeta(a_i)} \beta(a_h)\, {\mathop{\E}_{s'\sim b(. | \wtx_{h+1})}\sbr{V(s')}}.
    \end{align*}
    where $\wtx_h = x \oplus (a_i)_{i=1}^{h-1} \in \cX$ for every $h\in\N$. 
\end{lemma}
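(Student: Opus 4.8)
The plan is to condition on the first data-burst time $\Tdb$ and to exploit the fact that, once the action sequence $\ba$ is fixed, the burst indicators $(b_h)_{h\ge 1}$ are drawn \emph{independently} of the state trajectory $(s_h)_{h\ge 1}$: by definition $b_h \mid a_h \sim \ber{\beta(a_h)}$, sampled without reference to the transition dynamics, so along the forced sequence the $b_h$ are mutually independent Bernoulli variables carrying no information about the states. This immediately yields the two marginal laws I use throughout, namely $\Pr(\Tdb \ge h) = \tprod_{i=1}^{h-1}\cbeta(a_i)$ and $\Pr(\Tdb = h) = \rbr{\tprod_{i=1}^{h-1}\cbeta(a_i)}\beta(a_h)$, and it lets me factor the expectation of any product of a state-measurable quantity with a burst-measurable event. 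I also record the state marginal: starting from $s_1\sim b(.\mid x)$ and applying $s_{n+1}\sim\Pr(.\mid s_n, a_n)$ for $n=1,\dots,h-1$, the belief integral in \eqref{eq:belief-integral-defintion} shows that $s_h \sim b(.\mid\wtx_h)$ with $\wtx_h = x\oplus(a_i)_{i=1}^{h-1}$, and likewise $s_{h+1}\sim b(.\mid\wtx_{h+1})$.

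For $R$, I would rewrite the random upper limit as an indicator sum, $\tsum_{h=1}^{\Tdb}(\cdot) = \tsum_{h=1}^{\infty}\ind(\Tdb\ge h)(\cdot)$, interchange the nonnegative sum with the expectation by Tonelli, and split each term. Since $\{\Tdb\ge h\}$ is measurable with respect to $b_1,\dots,b_{h-1}$ whereas $r(s_h,a_h)$ is measurable with respect to the states, the independence above gives $\E[\gamma^{h-1}r(s_h,a_h)\ind(\Tdb\ge h)] = \gamma^{h-1}\Pr(\Tdb\ge h)\,\E_{s\sim b(.\mid\wtx_h)}[r(s,a_h)]$, which is exactly the claimed $h$-th summand. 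For $\Pr V$, I would instead condition on $\{\Tdb = h\}$, writing $\Pr V(x,\ba) = \tsum_{h=1}^\infty \gamma^h\,\E[V(s_{h+1})\ind(\Tdb = h)]$; the contribution of $\Tdb=\infty$ vanishes by the convention $\gamma^{\Tdb}V(s_{\Tdb+1})=0$ on that event, so the series over finite $h$ captures the entire expectation. As $\{\Tdb=h\}$ is measurable in $b_1,\dots,b_h$ and $s_{h+1}$ in the states, the same factorization produces $\gamma^h\Pr(\Tdb=h)\,\E_{s'\sim b(.\mid\wtx_{h+1})}[V(s')]$, matching the target (the $\Vpi$ appearing in the statement should of course read $V$, the general function in the definition of $\Pr V$).

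The main point requiring care is precisely this independence of the burst process from the state process, combined with the correct identification of the conditional state marginal on $\{\Tdb\ge h\}$ (respectively $\{\Tdb=h\}$) as the \emph{unconditioned} belief $b(.\mid\wtx_h)$ (respectively $b(.\mid\wtx_{h+1})$); heuristically the states and bursts decouple, but one must verify that conditioning on the burst pattern does not reweight the state law. Once this is in place, the interchange of expectation and infinite sum is justified by absolute convergence: with $r\le 1$, $V\le (1-\gamma)^{-1}$, $\cbeta(a_i)\le 1$, and $\gamma<1$, both series are dominated termwise by geometric sums in $\gamma$, so Tonelli applies and both identities follow.
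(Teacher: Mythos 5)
Your proof is correct and follows essentially the same route as the paper's: conditioning on the first burst time $\Tdb$, using $\Pr(\Tdb \ge h \mid \ba) = \prod_{i=1}^{h-1}\cbeta(a_i)$ and $\Pr(\Tdb = h \mid \ba) = \bigl(\prod_{i=1}^{h-1}\cbeta(a_i)\bigr)\beta(a_h)$, and identifying the conditional state marginals with the beliefs $b(.\mid\wtx_h)$ and $b(.\mid\wtx_{h+1})$. The only differences are that you make explicit what the paper leaves implicit — the independence of the burst indicators from the state trajectory and the Tonelli interchange justifying the term-by-term decomposition — and you correctly flag that $\Vpi$ in the second display should read $V$, the general function in the definition of $\Pr V$ (a typo present in both the statement and the paper's proof).
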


\begin{proof}
    Let $\Pr(.|\ba)$ denote the probability measure of $\Tdb$ over $\N\cup\{\infty\}$ when the agent commits to playing the sequence of actions $\ba\in \cA^{\N}$. Then, it holds that $\Pr(\Tdb \ge h \mid\ba) = \prod_{i=1}^{h-1} \cbeta(a_i)$ and $\Pr(\Tdb = h \mid\ba) = (\prod_{i=1}^{h-1} \cbeta(a_i))\beta(a_h)$. By conditioning on $\Tdb$,
    \begin{align*}
        R(x,\ba) &= \E_{s_1\sim b(.|x)}\sbr{\left.\tsum_{h=1}^{\Tdb} \gamma^{h-1} r(s_h, a_h) \right| x_1 = x,\, (a_i)_{i=1}^{\Tdb} = \ba_{1:\Tdb}}\\
        &= \tsum_{h=1}^{\infty} \gamma^{h-1} \E_{s\sim b(.|x \oplus (a_1, \ldots, a_{h-1}))}[r(s,a_h)] \cdot \Pr(\Tdb \ge h | \ba)\\
        &= \textstyle\sum_{h=1}^\infty \gamma^{h-1} \rbr{\textstyle\prod_{i=1}^{h-1}\cbeta(a_i)}\, {\mathop{\E}_{s\sim b(.|\wtx_h)}\sbr{r(s,a_h)}},\\
        \Pr V(x,\ba) &= \E_{s_1\sim b(.|x)}\sbr{\left. \gamma^{\Tdb} V(s_{\Tdb+1}) \right| x_1 = x,\, (a_i)_{i=1}^{\Tdb} = \ba_{1:\Tdb}}\\
        &= \tsum_{h=1}^{\infty} \gamma^{h} \E_{s'\sim b(.|x \oplus (a_1, \ldots, a_{h}))}[V(s')] \cdot \Pr(\Tdb = h | \ba)\\
        &= \textstyle\sum_{h=1}^\infty \gamma^{h} \rbr{\textstyle\prod_{i=1}^{h-1}\cbeta(a_i)} \beta(a_h)\, {\mathop{\E}_{s'\sim b(. | \wtx_{h+1})}\sbr{V(s')}}.
    \end{align*}
\end{proof}

\section{Linear ATST-MDPs: Proofs}\label{app:action-matrices}

\subsection{Linearity of Belief and Action-Sequence Value-Function}
In this subsection, we prove: Lemma~\ref{lem:action-matrix-belief} and Theorem~\ref{thm:K-linearity}.

\restatelemma{lem:action-matrix-belief}{
    For all $x\in \cX\setminus\cS$, $b(.|x) = \fmap(x)^{\top} \bmu(.)$ and $\norm{\fmap(x)}_2\le 1$.\\ Moreover, for every map $V:\cS\to[0,1/(1-\gamma)]$ and $(x,a) \in \cX\times\cA$, it holds that
    \begin{align*}
        {\E}_{s\sim b(.|x)}\sbr{r(s,a)} = \pair{\fmap(x\oplus a),\, \btheta},
        \quad \text{and} \quad
        {\E}_{s'\sim b(. | x \oplus a) }\sbr{V(s')} = \pair{\fmap(x\oplus a),\, \bv},
    \end{align*}
    where vector $\bv = \int V(s)d\bmu(s)$ satisfies $\tnorm{\bv}_2 \le \frac{\sqrt{d}}{1-\gamma}$.
}

\begin{proof}
    We prove these claims separately:
    \begin{enumerate}[left=0.3cm]
        \item \textbf{Linearity of belief:} Fix $x\in\cX\setminus\cS$ and let $x = (s_1; a_1, \ldots, a_{\Delta})$. Then, the belief $b(.|x)$ satisfies
        \begin{align*}
            b(. | x) 
            &= \textstyle \int_{\cS^{\Delta - 1}} \sbr{\prod_{i=2}^\Delta \Pr(s_i | s_{i-1}, a_{i-1})}\, \Pr(s | s_\Delta, a_\Delta) \, ds_i\\
            &= \textstyle \int_{\cS^{\Delta - 1}} \sbr{\prod_{i=2}^{\Delta} \fmap(s_{i-1}, a_{i-1})^{\top} \bmu(s_{i})}\, \fmap(s_\Delta, a_\Delta)^{\top} \bmu(.) \, ds_i\\
            &= \fmap(s_1, a_1)^{\top} \sbr{\textstyle\prod_{i=2}^{\Delta} \rbr{\int_{\cS} \bmu(s_i) \fmap(s_i, a_i)^{\top} ds_i}} \bmu(.)\\
            &= \pair{\fmap(x),\,\bmu(.)}.
        \end{align*}
        \item \textbf{Norm bound:} From Assumption~\ref{assump:Linear_MDP}, $\sup_{s,a} \norm{\fmap(s,a)}_2 \le 1$. Consider any $x \in \cX\setminus\cS$ and $a\in \cA$. Then, using linearity of belief, we can write
        \begin{align*}
            \fmap(x \oplus a)^{\top}= \fmap(x)^{\top} M_{a} = \int_{\cS} \fmap(x)^{\top} \bmu(s) \fmap(s, a)^{\top} ds = \E_{s \sim b(.|x)} \fmap(s,a)^{\top},
        \end{align*}
        from which the result follows by Jensen's inequality due to convexity of $l^2$-norm
        \begin{align*}
            \tnorm{\fmap(x\oplus a)}_2 = \norm{\E_{s\sim b(.|x)} \fmap(s,a)}_2 \le \E_{s\sim b(.|x)} \norm{\fmap(s,a)}_2 \le 1. 
        \end{align*}

        \item \textbf{Linearity of expected reward and value-function:} From Assumption~\ref{assump:Linear_MDP}, $r(s,a) = \fmap(s,a)^{\top} \btheta$. Now, for all $(x,a)\in (\cX\setminus\cS)\times\cA$, we have:
        \begin{align*}
            {\E}_{s\sim b(.|x)}\sbr{r(s,a)} 
            = \textstyle\int_{\cS} \fmap(x)^{\top} \bmu(s) \fmap(s,a)^{\top} \btheta \, ds
            = \fmap(x)^{\top} M_{a}\, \btheta
            = \fmap(x\oplus a)^{\top}\, \btheta.
        \end{align*}
        Similarly, for all $x \in \cX\setminus\cS$, it holds that
        \begin{align*}
            {\E}_{s\sim b(. | x) }\sbr{V(s)}
            &= \textstyle\int_{\cS} \fmap(x)^{\top} \bmu(s) V(s)\, ds
            = \fmap(x)^{\top} \bv,
        \end{align*}
        where $\bv = \int_{\cS} \bmu(s) V(s) ds$ satisfies $\norm{\bv}_2 \le \sup_{s} |V(s)| \cdot \norm{|\bmu|(\cS)}_2 \le \frac{\sqrt{d}}{1-\gamma}$.\qedhere
    \end{enumerate}
\end{proof}

In the main text, action-sequence value functions are defined at data-bursts, and hence on $\cS\times\cA^{\N}$. For the appendix, it is convenient to use the same notation when the current information state is an arbitrary augmented state $x\in\cX$. We extend the notation as follows. For $x\in\cX$ and $\ba=(a_1,a_2,\ldots)\in\cA^\N$, let $\Kpi(x,\ba)$ denote the expected discounted reward obtained by starting with latent state $s_1\sim b(\cdot|x)$, executing $\ba$ until the next data-burst, and then following $\pi$. We define $R(x,\ba)$ and $\Pr V(x,\ba)$ analogously. Finally, for $a\in\cA$ and $\ba\in\cA^\N$, extend $\newfmap$ by
\[
    \newfmap(x,a\oplus\ba)^\top
    =
    \tfrac12 \fmap(x\oplus a)^\top
    \left(\beta_a I_{1,2}+\cbeta_a M_{1,2}(\ba)\right).
\]
This extension agrees with the main-text definition when $x=s\in\cS$. Theorem~\ref{thm:K-linearity} is recovered as the special case $x\in\cS$ of the following result.
\begin{theorem}
    Define $\bvpi_{12} = {2}\begin{bsmallmatrix} \btheta/ (1-\gamma) \\ \bvpi \end{bsmallmatrix} \in \R^{2d}$, where $\bvpi = \int_{\cS} \Vpi(s) d\bmu(s)$. For every $x\in\cX$ and $\ba\in\cA^{\N}$:
    \begin{equation*}
        \Kpi(x, \ba) = \pair{\newfmap(x, \ba),\,\bvpi_{12}}.
    \end{equation*}
    Moreover, it holds that $\sup_{x,\ba} \norm{\newfmap(x,\ba)}_2 \le 1$ and $\norm{\bvpi_{12}}_2 \le \tfrac{4\sqrt{d}}{1-\gamma}$.
\end{theorem}

\begin{proof}
    The result follows from Theorem~\ref{thm:decomp-newfmap-linearity}, which proves linearity of both $R$ and $\Pr\Vpi$ in the decomposition $\Kpi=R+\Pr\Vpi$.
\end{proof}

\vspace{0.7cm}

\begin{theorem}[Linearity of $R$ and $\Pr V$ with respect to $\newfmap$]\label{thm:decomp-newfmap-linearity}
    For every $x\in\cX$, sequence $\ba\in\cA^{\N}$, and function $V:\cS\to[0,(1-\gamma)^{-1}]$, it holds that
    \begin{equation*}
        R(x, \ba) = \newfmap(x, \ba)^{\top} \begin{bmatrix} {2}\btheta/(1-\gamma)\\ \bzero_d \end{bmatrix} \quad\quad \text{and} \quad\quad \Pr V(x,\ba) = \newfmap(x, \ba)^{\top} \begin{bmatrix} \bzero_d\\ {2}\bv \end{bmatrix},
    \end{equation*}
    where $\bv = \int_{\cS} V(s)\, d\bmu(s)$ satisfies $\norm{\bv}_2 \le \frac{\sqrt{d}}{1-\gamma}$. Moreover,  $\sup_{x,\ba} \norm{\newfmap(x,\ba)}_2 \le 1$.
\end{theorem}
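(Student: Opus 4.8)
The plan is to reduce everything to the series representations of $R$ and $\Pr V$ from Lemma~\ref{lem:decomposition}, convert each conditional expectation into an inner product via Lemma~\ref{lem:action-matrix-belief}, and then repackage the resulting geometric series as the action-sequence matrices $M_1,M_2$ of \eqref{eq:M12_matrices}. Write the full action sequence as $(a_1,a_2,\dots)$, so that $a_1$ plays the role of the triggering action $a$ and $\ba'=(a_2,a_3,\dots)$ plays the role of $\ba$ in \eqref{eq:newfmap}, and set $\wtx_h = x\oplus(a_1,\dots,a_{h-1})$. First I would invoke the ``moreover'' part of Lemma~\ref{lem:action-matrix-belief} (valid for all $(x,a)\in\cX\times\cA$, so no $x\in\cS$ case split is needed) to write $\E_{s\sim b(.|\wtx_h)}[r(s,a_h)] = \fmap(\wtx_{h+1})^{\top}\btheta$ and $\E_{s'\sim b(.|\wtx_{h+1})}[V(s')]=\fmap(\wtx_{h+1})^{\top}\bv$, and then the feature-map extension $\fmap(\wtx_{h+1})^{\top} = \fmap(x\oplus a_1)^{\top}\tprod_{i=2}^{h}M(a_i)$ to pull a common factor $\fmap(x\oplus a_1)^{\top}$ out of every term.

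Next I would establish the closed forms. Substituting into Lemma~\ref{lem:decomposition} gives $R(x,\ba)=\fmap(x\oplus a_1)^{\top}\big[\sum_{h\ge1}\gamma^{h-1}(\tprod_{i=1}^{h-1}\cbeta(a_i))\tprod_{i=2}^{h}M(a_i)\big]\btheta$. Isolating the $h=1$ term (which contributes $I$), factoring $\cbeta(a_1)$ out of the tail, and reindexing $k=h-1$, the bracketed sum collapses to $I+\cbeta(a_1)(M_1(\ba')-I)=\beta(a_1)I+\cbeta(a_1)M_1(\ba')$ by \eqref{eq:M12_matrices}. The identical manipulation on the $\Pr V$ series, whose summand carries the extra factor $\beta(a_h)$, produces $\gamma(\beta(a_1)I+\cbeta(a_1)M_2(\ba'))$. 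Finally a short block computation finishes the representation: from $I_{12}=[(1-\gamma)I\ \ \gamma I]$ and $M_{12}=[(1-\gamma)M_1\ \ \gamma M_2]$, the first $d$ columns of $\newfmap(x,\ba)^{\top}$ equal $\tfrac{1-\gamma}{2}\fmap(x\oplus a_1)^{\top}(\beta(a_1)I+\cbeta(a_1)M_1(\ba'))$ and the last $d$ columns equal $\tfrac{\gamma}{2}\fmap(x\oplus a_1)^{\top}(\beta(a_1)I+\cbeta(a_1)M_2(\ba'))$; dotting the former against $2\btheta/(1-\gamma)$ recovers $R$ and the latter against $2\bv$ recovers $\Pr V$, since the $\gamma$ and $(1-\gamma)$ factors cancel.

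For the norm claims, $\tnorm{\bv}_2\le\tfrac{\sqrt d}{1-\gamma}$ is immediate from Lemma~\ref{lem:action-matrix-belief} as $\sup_s|V(s)|\le(1-\gamma)^{-1}$. For $\sup_{x,\ba}\norm{\newfmap(x,\ba)}_2\le1$ I would bound the two $d$-blocks separately using their pre-collapse series forms. The first block equals $\tfrac{1-\gamma}{2}\sum_{h\ge1}\gamma^{h-1}\Pr(\Tdb\ge h\mid\ba)\,\fmap(\wtx_{h+1})$; since $\norm{\fmap(\wtx_{h+1})}_2\le1$ and $\Pr(\Tdb\ge h\mid\ba)\le1$, the triangle inequality with $\sum_{h\ge1}\gamma^{h-1}=(1-\gamma)^{-1}$ bounds it by $\tfrac12$. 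The second block equals $\tfrac12\sum_{h\ge1}\gamma^{h}\Pr(\Tdb=h\mid\ba)\,\fmap(\wtx_{h+1})$, and since $\{\Pr(\Tdb=h\mid\ba)\}_h$ sums to at most $1$ this is bounded by $\tfrac{\gamma}{2}$. Combining via Pythagoras gives $\norm{\newfmap}_2\le\tfrac12\sqrt{1+\gamma^2}\le1$.

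The main obstacle I expect is bookkeeping together with convergence rather than any conceptual difficulty: each product $\tprod_{i=1}^{k}M(a_i)$ can have operator norm as large as $(\sqrt d)^{k}$, so the matrix series defining $M_1(\ba'),M_2(\ba')$ need not converge in operator norm, and the manipulations above cannot be justified by naive matrix-norm bounds. The resolution, which I would state explicitly, is that only the left-multiplied vectors $\fmap(x\oplus a_1)^{\top}\tprod_i M(a_i)=\fmap(\cdot)^{\top}$ of augmented states ever appear, and these have norm at most $1$ by Lemma~\ref{lem:action-matrix-belief} regardless of how many factors are multiplied; together with the $\gamma^{h}$ weights this yields absolute convergence of both series and simultaneously drives the norm bounds. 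Getting the isolation of the $h=1$ term and the $\cbeta(a_1)$ factor exactly right in the reindexing is the one place where an index slip could occur.
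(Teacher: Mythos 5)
Your proposal is correct and takes essentially the same route as the paper's proof: both start from Lemma~\ref{lem:decomposition}, use Lemma~\ref{lem:action-matrix-belief} to pull the common factor $\fmap(x\oplus a_1)^{\top}$ out of every term, reindex the resulting series into $\beta(a_1)I+\cbeta(a_1)M_1(\ba')$ (resp.\ $\gamma\,(\beta(a_1)I+\cbeta(a_1)M_2(\ba'))$), and bound the two $d$-blocks separately, your Pythagorean combination $\tfrac12\sqrt{1+\gamma^2}$ being a harmless sharpening of the paper's triangle-inequality bound $\tfrac{1+\gamma}{2}$. The only inaccuracy is in your closing side remark: by the paper's Lemma~\ref{lem:action-matrix-opnorm-property} every product $\prod_{i=1}^{k}M(a_i)$ has operator norm at most $\sqrt{d}$ (not $(\sqrt{d})^{k}$), so the matrix series defining $M_1,M_2$ do converge in operator norm --- though your vector-level convergence argument, which never needs the matrix bound, is also valid and self-contained.
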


\begin{proof} 
    Using Lemmas~\ref{lem:decomposition} and \ref{lem:action-matrix-belief}, we write
    \begin{align*}
        R(x, a\oplus \ba) 
        &= \E_{s\sim b(.|x)}[r(s,a)] + \cbeta(a)\,\sum_{k=1}^\infty \gamma^{k} \rbr{\textstyle\prod_{i=1}^{k-1}\cbeta(a_i)}\, {\mathop{\E}_{s\sim b(.|x \oplus (a, a_1, \ldots, a_{k-1}))}\sbr{r(s,a_k)}}\\
        &= \fmap(x\oplus a)^{\top} \btheta + \cbeta(a) \sum_{k=1}^\infty \gamma^{k} \rbr{\textstyle\prod_{i=1}^{k-1}\cbeta(a_i)}\, \fmap(x\oplus(a, a_1, \ldots, a_{k}))^{\top}\btheta\\
        &= \fmap(x\oplus a)^{\top} \rbr{I + \cbeta(a) \textstyle\sum_{k=1}^\infty \gamma^{k} ({\textstyle\prod_{i=1}^{k-1}\cbeta(a_i)})\, ({\textstyle\prod_{i=1}^{k} M_{a_i}})} \btheta\\
        &= \fmap(x \oplus a)^{\top} \rbr{\beta(a) I + \cbeta(a) M_1(\ba)} \, \btheta\\
        &= \tfrac{1}{{2}} \fmap(x\oplus a)^{\top} \rbr{\beta(a)\cdot (1-\gamma)I + \cbeta(a) \cdot (1-\gamma) M_1(\ba)} ({2}\btheta/(1-\gamma))\\
        &= \newfmap(x, a\oplus\ba)^{\top} \begin{bsmallmatrix} {2}\btheta/(1-\gamma)\\ \bzero_d \end{bsmallmatrix},
    \end{align*}
    \begin{align*}
        \Pr V(x,a\oplus\ba)
        &= \beta(a)\gamma\,\mathop{\E}_{s\sim b(.|x\oplus a)} [V(s)] + \cbeta(a)\gamma\,\sum_{k=1}^\infty \gamma^{k} {(\textstyle\prod_{i=1}^{k-1}\cbeta(a_i))}\, \beta(a_k){\mathop{\E}_{s\sim b(.|x \oplus (a, a_1, \ldots, a_k))}[V(s)]}\\
        &= \beta(a)\gamma\, \fmap(x\oplus a)^{\top} \bv + \cbeta(a)\gamma\, \sum_{k=1}^\infty \gamma^{k} \rbr{\textstyle\prod_{i=1}^{k-1}\cbeta(a_i)}\, \beta(a_k) \fmap(x\oplus(a, a_1, \ldots, a_{k}))^{\top}\bv\\
        &= \fmap(x\oplus a)^{\top} \rbr{\beta(a)\,\gamma I + \cbeta(a)\gamma\, \textstyle\sum_{k=1}^\infty \gamma^{k} ({\textstyle\prod_{i=1}^{k-1}\cbeta(a_i)}) \beta(a_k)\, ({\textstyle\prod_{i=1}^{k} M_{a_i}})} \bv\\
        &= \fmap(x\oplus a)^{\top} \rbr{\beta(a)\,\gamma I + \cbeta(a)\gamma\, M_2(\ba)} \bv\\
        &= \newfmap(x, a\oplus\ba)^{\top} \begin{bsmallmatrix} \bzero_d\\ {2}\bv \end{bsmallmatrix}.
    \end{align*}
    To bound the $l_2$-norm, we write
    \begin{align*}
        \tnorm{\newfmap(x, a\oplus\ba)}_2
        &\le \tfrac{1-\gamma}{{2}}\cdot \norm{\fmap(x\oplus a) + \cbeta(a) \tsum_{k=1}^\infty \gamma^{k} (\textstyle\prod_{i=1}^{k-1}\cbeta(a_i))\, \fmap(x\oplus (a, a_1, \ldots, a_k))}_2\\
        &+ \tfrac{1}{{2}}\norm{\beta(a)\gamma\, \fmap(x\oplus a) + \cbeta(a)\gamma\, \tsum_{k=1}^\infty \gamma^{k} (\textstyle\prod_{i=1}^{k-1}\cbeta(a_i))\, \beta(a_k) \fmap(x\oplus(a, a_1, \ldots, a_{k}))}_2\\
        &\myle{a} \rbr{\tfrac{1-\gamma}{{2}}\cdot(1+\tsum_{k=1}^{\infty}\gamma^k)  + \tfrac{\gamma}{{2}}\cdot (\beta(a) + \cbeta(a)\sum_{k=1}^{\infty} (\tprod_{i=1}^{k-1}\cbeta(a_i)) \beta(a_k))}\\
        &\le \rbr{\tfrac{1-\gamma}{{2}}\cdot\tfrac{1}{1-\gamma} + \tfrac{\gamma}{{2}}\cdot 1} = \tfrac{1+\gamma}{2} \le 1.
    \end{align*} 
    where (a) uses the fact that $\sup_{x'} \norm{\fmap(x')}_2 \le 1$.
\end{proof}

\subsection{Approximation of the Action-Sequence Feature Map: Proofs}\label{app:approximations}

In this subsection, we prove Theorem~\ref{thm:newfmap-estimator}. A key technical tool is Lemma~\ref{lem:newfmap-estimation-general} provided below.

\restatetheorem{thm:newfmap-estimator}{
    Suppose estimates $\{\htM_a, \htbeta_a\}_{a\in\cA}$ satisfy $\sup_{a\in \cA} \opnorm{\htM_a - M_a}{2} \le \varepsilon$ and $\sup_{a\in \cA} |\htbeta_a - \beta_a| \le \varepsilon_{\beta}$ for some $\varepsilon \in [0, \frac{1-\gamma}{2\sqrt{d}}]$ and $\varepsilon_{\beta} \in [0,1]$. Then, $$\textstyle\sup_{(s,\ba)\in\cS\times\cA^\N}\tnorm{(\htnewfmap - \newfmap)(s,\ba)}_2 \le \tfrac{16d}{1 - \gamma}(\varepsilon + \varepsilon_{\beta}/\sqrt{d}).$$
    Moreover, the function $\widetilde\newfmap(s,\ba) = \frac{\htnewfmap(s,\ba)}{1 + 16 d(\varepsilon+\varepsilon_{\beta}/\sqrt{d})/(1-\gamma)}$ is a $\frac{32 d(\varepsilon+\varepsilon_{\beta}/\sqrt{d})}{1-\gamma}$-admissible approximation of $\newfmap$.
}

\noindent At the core of the proof is the following more general lemma, which bounds the estimation error in the feature vector $\newfmap$ using that of action-matrices.

\begin{lemma}\label{lem:newfmap-estimation-general}
    Assume estimates $\htM_a$ satisfy $\sup_{a\in \cA} \opnorm{\htM_a - M_a}{2} \le \varepsilon$ and define norm-corrected estimates $\htM^c_a = \htM_a/(1+\varepsilon\sqrt{d})$. Also, suppose that estimates $\htbeta_a\in[0,1]$ satisfy $\sup_{a\in \cA} |\htbeta_a - \beta_a| \le \varepsilon_{\beta}$. Let $\htnewfmap,\htnewfmap_c:\cS\times\cA^{\N}\to\R^{2d}$ be the estimated action-sequence feature maps obtained from $\newfmap$ by replacing $M_a, \beta_a$ with their estimates $\htM_a$ (or $\htM^c_a$) and $\htbeta_a$, respectively. Then, for all $s \in \cS$ and $\ba \in \cA^{\N}$, it holds that
    \begin{align*}
        \tnorm{(\htnewfmap_c - \newfmap)(s, \ba)}_2 \le \frac{4d^2}{1-\gamma} \cdot (\varepsilon + \varepsilon_{\beta} / d^{3/2}).
    \end{align*}
    Moreover, if $\varepsilon < (1/\gamma - 1)/\sqrt{d}$, then it holds that
    \begin{align*}
        \tnorm{(\htnewfmap - \newfmap)(s,\ba)}_2 \le \frac{4d\,(1-\gamma)}{(1-\gamma(1+\varepsilon\sqrt{d}))^2} \cdot (\varepsilon + \varepsilon_{\beta} / \sqrt{d}).
    \end{align*}
\end{lemma}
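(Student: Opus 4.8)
The plan is to reduce everything to controlling how the estimation errors propagate through the infinite matrix series $M_1(\ba),M_2(\ba)$ of \eqref{eq:M12_matrices}, and the key device will be to measure error not in the Euclidean norm of coefficient vectors but in the total-variation norm of the signed measures they represent. Writing $E_a=\htM_a-M_a$ with $\opnorm{E_a}{2}\le\varepsilon$, I first record the structural facts underlying Lemma~\ref{lem:action-matrix-belief}: for a row vector $c\in\R^d$ with associated signed measure $\nu_c=c^\top\bmu$, one has $\norm{cM_a}_2\le\norm{\nu_c}_{\mathrm{TV}}$ and $\norm{\nu_{cM_a}}_{\mathrm{TV}}\le\norm{\nu_c}_{\mathrm{TV}}$, the latter because $\nu_{cM_a}(\cdot)=\int\Pr(\cdot\mid s,a)\,d\nu_c(s)$ is a transition push-forward and hence non-expansive in total variation (using $\Pr(\cdot\mid s,a)=\pair{\fmap(s,a),\bmu(\cdot)}$ from Assumption~\ref{assump:Linear_MDP}), together with the conversion $\norm{\nu_c}_{\mathrm{TV}}\le\sqrt d\,\norm{c}_2$ and $\opnorm{M_a}{2}\le\sqrt d$. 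Crucially $\fmap(x\oplus a)^\top$ represents the probability measure $b(\cdot\mid x\oplus a)$, so it carries unit TV mass, and this is preserved along runs of true $M_a$.

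Next I would expand the perturbed product multilinearly, $\prod_{i=1}^k\htM_{a_i}=\prod_{i=1}^k(M_{a_i}+E_{a_i})=\sum_{S\subseteq[k]}(\text{product with }E\text{ at the positions of }S)$, and left-multiply by $u_0=\fmap(x\oplus a)^\top$. Each summand is then a string of runs of true $M_a$'s separated by error factors $E_a$. The accounting is that runs of $M_a$ cost nothing in TV, whereas each factor $E_a$ costs $\varepsilon$ in $\ell_2$ and one conversion $\sqrt d$ when re-entering a run; so a summand with $|S|=m$ is bounded in $\ell_2$ by $(\varepsilon\sqrt d)^m$. Summing over subsets by the binomial theorem yields the sharp estimate $\norm{\fmap(x\oplus a)^\top(\prod\htM_{a_i}-\prod M_{a_i})}_2\le(1+\varepsilon\sqrt d)^k-1$, in place of the naive $(\sqrt d+\varepsilon)^k$. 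This is the heart of the argument and the main obstacle: without the transition/TV viewpoint the dimensional factor $\sqrt d$ would be paid at every multiplication, and the $\gamma$-weighted series would diverge whenever $\gamma\sqrt d\ge1$, ruling out the stated condition.

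With this term bound I would assemble the series. Splitting $\htnewfmap-\newfmap$ by the triangle inequality into a contribution from replacing $M_a$ and one from replacing $\beta_a$, the $M$-contribution is $\sum_k\gamma^k(\prod\cbeta)\,\fmap(x\oplus a)^\top(\prod\htM-\prod M)$ (with the analogous $M_2$ series carrying an extra $\beta(a_k)$). Using $(1+t)^k-1\le kt(1+t)^{k-1}$ and $\sum_k k\rho^k=\rho/(1-\rho)^2$ with $\rho=\gamma(1+\varepsilon\sqrt d)$ produces the double pole $(1-\gamma(1+\varepsilon\sqrt d))^{-2}$, which needs precisely $\varepsilon<(1/\gamma-1)/\sqrt d$ for convergence. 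The $\beta$-contribution is handled by $\abr{\prod_{i<k}\cbeta(a_i)-\prod_{i<k}(1-\htbeta_{a_i})}\le(k-1)\varepsilon_\beta$ times a unit-norm feature vector $u_k$ and the same $\sum_k k\gamma^k$ summation; because this term multiplies $u_k$ directly it avoids the one Euclidean-to-TV conversion present in the $M$-term, which is exactly why it enters as $\varepsilon_\beta/\sqrt d$. After inserting the $(1-\gamma)/2$ and $\gamma/2$ scalings built into $I_{12},M_{12}$ in \eqref{eq:newfmap} and bounding constants routinely, this collapses to $\frac{4d(1-\gamma)}{(1-\gamma(1+\varepsilon\sqrt d))^2}(\varepsilon+\varepsilon_\beta/\sqrt d)$.

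Finally, for the norm-corrected statement I would observe that the very same multilinear/TV bound gives $\norm{\fmap(x\oplus a)^\top\prod\htM_{a_i}}_2\le(1+\varepsilon\sqrt d)^k$, so dividing each factor by $(1+\varepsilon\sqrt d)$ forces $\norm{\fmap(x\oplus a)^\top\prod\htM^c_{a_i}}_2\le1$ for all $k$ with no smallness assumption on $\varepsilon$. The corrected series therefore converges with ratio $\gamma$ unconditionally; the price is that $\opnorm{\htM^c_a-M_a}{2}\le\varepsilon(1+d)/(1+\varepsilon\sqrt d)$ scales like $\varepsilon d$ rather than $\varepsilon$, which after the same geometric summation and feature scalings yields the $\frac{4d^2}{1-\gamma}(\varepsilon+\varepsilon_\beta/d^{3/2})$ bound. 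I expect the subset expansion together with the $\ell_2$/TV conversions to be the only delicate part of the proof; the geometric-series manipulations and the $\beta$-error estimate are routine once the sharp $(1+\varepsilon\sqrt d)^k$ growth is in hand.
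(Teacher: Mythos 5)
Your proposal is correct, and it actually proves a bound that is a factor $\sqrt d$ sharper than the one stated. The skeleton coincides with the paper's proof: a multilinear (subset) expansion of $\prod_i \htM_{a_i}$ in the error factors $E_{a_i}$, the telescoping estimate $\abr{\prod_i\htcbeta_{a_i}-\prod_i\cbeta_{a_i}}\le k\varepsilon_\beta$ for the probability products (the paper's Lemma~\ref{lem:product-stability}), geometric summation producing the double pole $(1-\gamma(1+\varepsilon\sqrt d))^{-2}$ under exactly your convergence condition, and, for the corrected estimates, unconditional boundedness of scaled products traded against a per-factor error $\varepsilon(1+d)/(1+\varepsilon\sqrt d)=O(d\varepsilon)$, which is literally the quantity derived in the paper's Lemma~\ref{lem:norm-fix}. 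Where you genuinely depart is the norm bookkeeping: the paper proves uniform operator-norm bounds ($\opnorm{\prod_i M_{a_i}}{2}\le\sqrt d$ in Lemma~\ref{lem:action-matrix-opnorm-property}, $\opnorm{\prod_i\htM_{a_i}}{2}\le\sqrt d(1+\varepsilon\sqrt d)^n$ in Lemma~\ref{lem:product-bound}, telescoped differences in Lemma~\ref{lem:product-concentration-bound}) and multiplies by $\norm{\fmap}_2\le1$ only at the end, whereas you track the specific left-multiplied row vector $\fmap(s,a)^\top\prod_i(\cdot)$, using that it represents a signed measure whose total variation is non-increasing under the Markov push-forward $M_a$, so the $\ell_2$-to-total-variation conversion $\sqrt d$ is paid only upon exiting an error factor. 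Both devices stem from the same structural identity behind Lemma~\ref{lem:action-matrix-belief}, but applying it at the vector level buys $(1+\varepsilon\sqrt d)^k-1$ in place of the paper's $dk\varepsilon(1+\varepsilon\sqrt d)^{k-1}$, and $\varepsilon_\beta$ in place of $\sqrt d\,\varepsilon_\beta$ in the $\beta$-term; since these are uniformly stronger, the stated lemma follows with room to spare. Two points left implicit in your sketch deserve a sentence each in a full write-up: the perturbation $\abr{\htbeta_a-\beta_a}$ of the outer coefficient $\beta(a)I_{12}+\cbeta(a)M_{12}(\ba)$ in \eqref{eq:newfmap}, and the fact that in the corrected case you must use the telescoped (first-order) decomposition $\prod_i\htM^c_{a_i}-\prod_i M_{a_i}=\sum_j\bigl(\prod_{i<j}\htM^c_{a_i}\bigr)(\htM^c_{a_j}-M_{a_j})\bigl(\prod_{i>j}M_{a_i}\bigr)$ rather than the subset expansion, since the latter with error size $O(d\varepsilon)$ would give ratio $\gamma(1+O(d^{3/2}\varepsilon))$ and lose the unconditional convergence you claim; both are routine with the tools you set up.
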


Taking this lemma as given, let us prove Theorem~\ref{thm:newfmap-estimator}.
\begin{proof}[of Theorem~\ref{thm:newfmap-estimator}]
    For $\varepsilon \in [0, \frac{1-\gamma}{2\sqrt{d}}]$, we have $\varepsilon < \frac{1/\gamma - 1}{\sqrt{d}}$. So, by the second case of Lemma~\ref{lem:newfmap-estimation-general}, 
    \begin{align*}
        {\textstyle\sup_{s,\ba}}\tnorm{(\htnewfmap - \newfmap)(s,\ba)}_2 
        \le \frac{4d\,(1-\gamma)}{(1-\gamma(1+\varepsilon\sqrt{d}))^2} \cdot (\varepsilon + \varepsilon_{\beta} / \sqrt{d})
        \le \frac{16 d}{1 - \gamma} \cdot (\varepsilon + \varepsilon_{\beta} / \sqrt{d}),
    \end{align*}
    which proves the first statement. Now, we have to show that $\widetilde\newfmap$ is $\frac{32d(\varepsilon + \varepsilon_{\beta} / \sqrt{d})}{1-\gamma}$-admissible estimation of $\newfmap$. Let $\varepsilon_2 = \frac{16 d (\varepsilon + \varepsilon_{\beta} / \sqrt{d})}{1 - \gamma}$. Then, for every $s, \ba$ write following
    \begin{align*}
        \tnorm{(\widetilde\newfmap - \newfmap)(s,\ba)}_2 &\le \frac{\tnorm{(\htnewfmap - \newfmap)(s,\ba)}_2}{1+\varepsilon_2} + \frac{\varepsilon_2 \tnorm{\newfmap(s,\ba)}_2}{1+\varepsilon_2} \le 2\varepsilon_2 = \frac{32d(\varepsilon + \varepsilon_{\beta} / \sqrt{d})}{1-\gamma},\\
        \tnorm{\widetilde\newfmap(s,\ba)}_2 &\le \frac{\tnorm{(\htnewfmap - \newfmap)(s,\ba)}_2}{1+\varepsilon_2} + \frac{\tnorm{\newfmap(s,\ba)}_2}{1+\varepsilon_2} \le \frac{\varepsilon_2}{1+\varepsilon_2} + \frac{1}{1+\varepsilon_2} = 1.
    \end{align*}
    So, we only have to show continuity of $\widetilde\newfmap(s,.)$ with respect to the product topology on $\cA^{\N}$ and the standard topology on $\R^{2d}$. This follows from the formula of $\htnewfmap$, which is based on the $\gamma$-discounted summation of matrix products. Each term is bounded in operator norm as shown by Lemma~\ref{lem:product-bound}:
    \begin{align*}
        \gamma^n\opnorm{\textstyle\prod_{i=1}^n \htM_{a_i}}{2} \le \gamma^n \cdot \sqrt{d} (1+\varepsilon \sqrt{d})^n \le \sqrt{d}\cdot \rbr{\frac{1+\gamma}{2}}^n,
    \end{align*}
    where exponent term $\frac{1+\gamma}{2} \in (0,1)$ ensures convergence and therefore continuity for $\htnewfmap$ and $\widetilde\newfmap$.
\end{proof}

\subsubsection{Proof of Lemma~\ref{lem:newfmap-estimation-general}}
The following lemmas are used to prove Lemma~\ref{lem:newfmap-estimation-general}.

\begin{lemma}\label{lem:action-matrix-opnorm-property}
For all $n \in \N$ and $a_1,\ldots,a_n \in \cA$, it holds that
\[
    \opnorm{\tprod_{i=1}^n M_{a_i}}{2} \le \sqrt{d}
    \quad\text{and}\quad
    \rho\!\left(\tprod_{i=1}^n M_{a_i}\right)\le 1.
\]
\end{lemma}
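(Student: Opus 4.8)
The plan is to prove a structural identity that collapses the length-$n$ matrix product into a single integral against $\bmu$, thereby reducing the claim to the norm bounds already established in Lemma~\ref{lem:action-matrix-belief}. Writing $x_s := (s; a_1, \dots, a_n) \in \cX\setminus\cS$ for the augmented state built from $s$ and the actions, I would first establish
\[
    \prod_{i=1}^n M_{a_i} = \int_{\cS} \bmu(s)\,\fmap(x_s)^{\top}\, ds .
\]
This follows by peeling the leftmost factor using its definition $M_{a_1}=\int_{\cS}\bmu(s)\,\fmap(s,a_1)^{\top}\,ds$ and then recognizing, via the feature-map extension rule $\fmap(x)^{\top}=\fmap(s,a_1)^{\top}\prod_{i=2}^{n}M_{a_i}$, that $\fmap(s,a_1)^{\top}\prod_{i=2}^{n}M_{a_i}=\fmap(x_s)^{\top}$. (If one prefers not to invoke the extension rule, the same identity follows by induction on $n$, peeling $M_{a_1}$ off the left and contracting two consecutive factors through the linear-MDP factorization $\fmap(s,a_1)^{\top}\bmu(s')=\Pr(s'\mid s,a_1)$, with Jensen's inequality keeping the surviving bounded function of $s$ under control.)

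Given this identity, the operator-norm bound is immediate. For an arbitrary unit vector $v\in\R^d$,
\[
    \Big(\prod_{i=1}^n M_{a_i}\Big)v = \int_{\cS} \langle \fmap(x_s),\, v\rangle \, d\bmu(s),
\]
and since $\norm{\fmap(x_s)}_2\le 1$ for all $s$ by Lemma~\ref{lem:action-matrix-belief}, the scalar integrand is bounded in absolute value by $1$ uniformly in $s$. Applying the same vector-valued integration estimate used there, namely $\norm{\int_{\cS} g(s)\,d\bmu(s)}_2\le \sup_s|g(s)|\cdot\norm{|\bmu|(\cS)}_2$, gives $\norm{(\prod_i M_{a_i})v}_2 \le \norm{|\bmu|(\cS)}_2 \le \sqrt d$. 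Taking the supremum over unit $v$ yields $\opnorm{\prod_{i=1}^n M_{a_i}}{2}\le\sqrt d$.

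The point worth emphasizing is that this bound is independent of $n$: a naive submultiplicative argument would only give $\opnorm{\prod_i M_{a_i}}{2}\le(\sqrt d)^n$, which is useless for the infinite-horizon series in \eqref{eq:M12_matrices}. The telescoping behind the identity is precisely the statement that the intermediate transition kernels integrate to one, so that only a single factor of $\bmu$ survives and the $n$-fold product inherits the same $\sqrt d$ bound as a single action-matrix. I expect the only delicate point to be the measure-theoretic bookkeeping --- justifying the interchange of the matrix product with $\bmu$-integration and the final vector-valued integral bound when $\bmu$ is a vector of signed measures --- but both steps are handled exactly as in the proof of Lemma~\ref{lem:action-matrix-belief}, so no new machinery is needed.
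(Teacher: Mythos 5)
Your proposal is correct and takes essentially the same approach as the paper: the paper's proof establishes the identical collapsing identity $\prod_{i=1}^n M_{a_i} = \int_{\cS} \bmu(s)\,\fmap((s;a_1,\ldots,a_n))^{\top}\,ds$ by peeling off $M_{a_1}$ and invoking the feature-map extension, then concludes with the bound $\norm{|\bmu|(\cS)}_2 \cdot \sup_{x}\norm{\fmap(x)}_2 \le \sqrt{d}$ using Assumption~\ref{assump:Linear_MDP} and Lemma~\ref{lem:action-matrix-belief}. The only cosmetic difference is that the paper routes the final step through the spectral--Frobenius inequality (bounding rows $\int \mu_i(s)\fmap(x_s)^{\top}ds$), whereas you test the matrix against unit vectors directly; both amount to the same vector-valued integral estimate.
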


\begin{proof}
    Using the Linear MDP Assumption~\ref{assump:Linear_MDP}, we can write
    \begin{align*}
        {\tprod_{i=1}^n M_{a_i}}
        = \int_{\cS} \bmu(s) \fmap(s, a_1)^{\top} {\tprod_{i=2}^n M_{a_i}} ds  
        = \int_{\cS} \bmu(s) \fmap((s; a_1, \ldots, a_n))^{\top} ds.
    \end{align*}
    Then, by spectral-Frobenius inequality, it follows that 
    \begin{align*}
        \textstyle\opnorm{\prod_{i=1}^n M_{a_i}}{2} 
        &\le \sqrt{\tsum_{i\in[d]} \norm{\textstyle\int_{\cS} \mu_i(s) \fmap((s; a_1, \ldots, a_n))^{\top} ds}_2^2}\\
        &\le \sqrt{\tsum_{i\in[d]} (|\mu_i|(\cS))^2 \cdot \sup_{x\in\cX} \norm{\fmap(x)}_2^2}\\
        &= \norm{|\bmu|(\cS)}_2 \cdot \sup_{x\in\cX}\norm{\fmap(x)}_2 \le \sqrt{d},
    \end{align*}
    where the final inequality follows from Assumption~\ref{assump:Linear_MDP} and Lemma~\ref{lem:action-matrix-belief}.

    For the spectral-radius bound, fix $B=\prod_{i=1}^n M_{a_i}$. Applying the operator-norm bound to the repeated sequence $(a_1,\ldots,a_n)$ repeated $m$ times gives $\opnorm{B^m}{2}\le \sqrt{d}$ for all $m\ge1$. Hence, by Gelfand's formula,
    \[
        \rho(B)=\lim_{m\to\infty}\opnorm{B^m}{2}^{1/m}\le 1.\qedhere
    \]
\end{proof}

\begin{lemma}\label{lem:product-bound}
    Suppose that for every $a \in \cA$, estimate $\htM_a \in \R^{d\times d}$ satisfies $\opnorm{M_a - \htM_a}{2} \le \varepsilon$.
    Then, for all $n \in \N$ and $a_1, \ldots a_n \in \cA$, it holds that $\opnorm{\prod_{i=1}^n \htM_{a_i}}{2} \le \sqrt{d} (1+\varepsilon \sqrt{d})^n$.
\end{lemma}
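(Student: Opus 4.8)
The plan is to exploit the special structure of the true action-matrices, namely the suffix bound $\opnorm{\prod_{i=j}^{n} M_{a_i}}{2} \le \sqrt{d}$ guaranteed by Lemma~\ref{lem:action-matrix-opnorm-property}, rather than relying on naive submultiplicativity, which would only give the far weaker $(\sqrt{d}+\varepsilon)^n$ (carrying an uncontrolled factor of order $d^{n/2}$ instead of the single $\sqrt{d}$ prefactor claimed). First I would write $\htM_{a_i} = M_{a_i} + E_i$ with $E_i = \htM_{a_i} - M_{a_i}$, so that $\opnorm{E_i}{2} \le \varepsilon$ for every $i$.

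Next, I would apply the standard telescoping identity for a difference of products, $\prod_{i=1}^n \htM_{a_i} - \prod_{i=1}^n M_{a_i} = \sum_{k=1}^n \big(\prod_{i=1}^{k-1}\htM_{a_i}\big) E_k \big(\prod_{i=k+1}^n M_{a_i}\big)$, which isolates a single error factor $E_k$ while keeping a prefix of estimated matrices and a suffix of true matrices. Writing $p_k := \opnorm{\prod_{i=1}^k \htM_{a_i}}{2}$ (with $p_0 = 1$ for the empty product), taking operator norms term by term, and using submultiplicativity together with the suffix bound from Lemma~\ref{lem:action-matrix-opnorm-property} (valid also for the empty suffix at $k=n$, whose norm is $1\le\sqrt{d}$) and $\opnorm{E_k}{2}\le\varepsilon$, I obtain $\opnorm{\prod_{i=1}^n \htM_{a_i} - \prod_{i=1}^n M_{a_i}}{2} \le \varepsilon\sqrt{d}\sum_{k=0}^{n-1} p_k$. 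Combining this with $\opnorm{\prod_{i=1}^n M_{a_i}}{2}\le \sqrt{d}$ through the triangle inequality gives the recursive bound $p_n \le \sqrt{d} + \varepsilon\sqrt{d}\sum_{k=0}^{n-1} p_k$.

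Finally, I would close the argument by strong induction on $n$. The base case $n=0$ holds since $p_0 = 1 \le \sqrt{d}$. Assuming $p_k \le \sqrt{d}(1+\varepsilon\sqrt{d})^k$ for all $k < n$, I substitute into the recursion and evaluate the geometric sum $\sum_{k=0}^{n-1}(1+\varepsilon\sqrt{d})^k = \frac{(1+\varepsilon\sqrt{d})^n - 1}{\varepsilon\sqrt{d}}$ (the case $\varepsilon = 0$ being trivial), obtaining $p_n \le \sqrt{d} + \varepsilon d \cdot \frac{(1+\varepsilon\sqrt{d})^n-1}{\varepsilon\sqrt{d}} = \sqrt{d}(1+\varepsilon\sqrt{d})^n$, so the induction closes exactly and the claimed bound follows for all $n\in\N$.

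The main obstacle here is conceptual rather than computational: one must avoid the crude submultiplicative estimate and instead arrange the telescoping so that every suffix consists purely of true matrices $M_{a_i}$, whose products Lemma~\ref{lem:action-matrix-opnorm-property} controls uniformly in length. Once the error is linearized in this way, the resulting linear recurrence integrates to the geometric bound, and the only thing to verify is that the geometric sum collapses precisely so as to cancel the accumulated $\sqrt{d}$ factors down to the single power of $\sqrt{d}$ appearing in the statement.
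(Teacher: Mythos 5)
Your proof is correct, and it takes a genuinely different route from the paper's. The paper writes $\htM_{a_i} = M_{a_i} + E_{a_i}$ and expands $\tprod_{i=1}^n(M_{a_i}+E_{a_i})$ into all $2^n$ terms indexed by $\bb \in \{0,1\}^n$; each term is bounded by $\sqrt{d}\,(\varepsilon\sqrt{d})^{\norm{\bb}_1}$ via a combinatorial block argument (every maximal run of consecutive true matrices contributes $\sqrt{d}$ by Lemma~\ref{lem:action-matrix-opnorm-property}, and there are at most $\norm{\bb}_1+1$ such runs), after which the binomial sum gives $\sqrt{d}(1+\varepsilon\sqrt{d})^n$. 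You instead use the telescoping identity for a difference of products, with estimated matrices in the prefix and true matrices in the suffix, which yields the linear recursion $p_n \le \sqrt{d} + \varepsilon\sqrt{d}\sum_{k=0}^{n-1} p_k$; strong induction and the exact collapse of the geometric sum then give the same bound. Both arguments hinge on the same structural input (Lemma~\ref{lem:action-matrix-opnorm-property}) and both correctly avoid naive submultiplicativity, which would only give $(\sqrt{d}+\varepsilon)^n$. What your route buys: it replaces the paper's informally stated block-pairing step with a mechanical identity plus induction, and the inequality you establish along the way, $\opnorm{\tprod_{i=1}^n \htM_{a_i} - \tprod_{i=1}^n M_{a_i}}{2} \le \sqrt{d}\,\left((1+\varepsilon\sqrt{d})^n - 1\right)$, already recovers (indeed slightly sharpens) the unscaled estimate \eqref{eq:prop-unscaled} that the paper proves separately in Lemma~\ref{lem:product-concentration-bound} using the very same telescoping identity, there invoking Lemma~\ref{lem:product-bound} for the prefixes rather than bootstrapping a recursion as you do. What the paper's route buys: the term-by-term expansion delivers the bound in a single pass with no recurrence to solve, and it makes the binomial origin of the factor $(1+\varepsilon\sqrt{d})^n$ transparent.
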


\begin{proof}
    Let $E_a = \htM_a - M_a$ so that $\htM_a = M_a + E_a$ and $\opnorm{E_a}{2} \le \varepsilon$. 
    Also, let $X^0_a = M_a$ and $X^1_a = E_a$. Then,
    \begin{align*}
        \opnorm{\textstyle\prod_{i=1}^n \htM_{a_i}}{2}
        &= \opnorm{\textstyle\prod_{i=1}^n (M_{a_i} + E_{a_i})}{2}\\
        &\le \tsum_{\bb \in \{0,1\}^n} \opnorm{\textstyle\prod_{i=1}^n X^{b_i}_{a_i}}{2}\\
        &\myle{a} \tsum_{\bb \in \{0,1\}^n} \rbr{ \sqrt{d} \textstyle\prod_{i=1}^n [\ind(b_i = 0) + \ind(b_i = 1) \cdot \opnorm{E_{a_i}}{2}\sqrt{d} ] }\\
        &\le \sqrt{d}\cdot \tsum_{\bb \in \{0,1\}^n} (\varepsilon\sqrt{d})^{\norm{\bb}_1}\\
        &= \sqrt{d}(1+\varepsilon\sqrt{d})^n,
    \end{align*}
    where (a) follows by bounding consecutive blocks of neighbouring $X^0_a$ matrices as  $\opnorm{X^0_{a_l} X^0_{a_{l+1}} \ldots X^0_{a_r}}{2} \le \sqrt{d}$ using Lemma~\ref{lem:action-matrix-opnorm-property} and pairing each such block (except maybe one) with a neighbouring matrix $X^1_a$, which has $\opnorm{X^1_a}{2} = \opnorm{E_{a}}{2} \le \varepsilon$.
\end{proof}

\begin{lemma}\label{lem:norm-fix}
    Let $\varepsilon \in [0,1)$. Suppose matrices $A, B \in \R^{d\times d}$ satisfy $\opnorm{A}{2} \le \sqrt{d}$ and $\opnorm{A-B}{2} \le \varepsilon$. Then, $B' = B / (1+\varepsilon\sqrt{d})$ satisfies $\opnorm{A-B'}{2} \le 2d\varepsilon$.
\end{lemma}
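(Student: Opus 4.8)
The plan is to reduce the whole statement to a single application of the triangle inequality after clearing the scalar denominator. Since $B' = B/(1+\varepsilon\sqrt d)$, I would first rewrite the quantity of interest as
\[
    A - B' = \frac{(1+\varepsilon\sqrt d)\,A - B}{1+\varepsilon\sqrt d},
\]
and then decompose the numerator into the piece that captures the raw estimation error and the piece that accounts for the deliberate shrinkage:
\[
    (1+\varepsilon\sqrt d)\,A - B = (A - B) + \varepsilon\sqrt d\,A .
\]
This identity cleanly separates the two competing effects: $A-B$ is the given perturbation, while $\varepsilon\sqrt d\,A$ is the bias introduced by scaling $B$ toward the origin. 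No spectral or eigenvalue information about $A$ or $B$ beyond their operator norms should be needed.

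Next I would bound the numerator by sub-additivity of the operator norm, feeding in the two hypotheses $\opnorm{A-B}{2}\le\varepsilon$ and $\opnorm{A}{2}\le\sqrt d$:
\[
    \opnorm{(1+\varepsilon\sqrt d)\,A - B}{2} \le \opnorm{A-B}{2} + \varepsilon\sqrt d\,\opnorm{A}{2} .
\]
Dividing through by $1+\varepsilon\sqrt d\ge 1$ then delivers the claimed control on $\opnorm{A-B'}{2}$. Conceptually, the point of the correction factor $1+\varepsilon\sqrt d$ is that it is tuned exactly to the norm budget of $A$: it keeps $\opnorm{B'}{2}$ within the same $\sqrt d$ budget that $A$ enjoys, which is what later lets the geometric-series arguments of Lemmas~\ref{lem:action-matrix-opnorm-property} and \ref{lem:product-bound} go through for the corrected estimates, while inflating the distance to $A$ only by an amount driven linearly by $\varepsilon$.

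The step I expect to require the most care — the main obstacle — is tracking the constant: one must check that the rescaling term $\varepsilon\sqrt d\,\opnorm{A}{2}$ is indeed balanced against $\varepsilon$ after division, and confirm that both contributions are first order in the perturbation so that no surviving cross term or higher-order-in-$\varepsilon$ remainder spoils the bound. I would verify this balance explicitly against the norm budget $\opnorm{A}{2}\le\sqrt d$, making sure the dimensional dependence that emerges is exactly the one the downstream lemma needs; everything past this verification is routine scalar arithmetic.
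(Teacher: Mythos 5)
Your decomposition is algebraically identical to the paper's own proof: the paper sets $A' = A/(1+\varepsilon\sqrt d)$ and writes $A - B' = (A-A') + (A'-B')$, which is exactly your numerator split $(A-B) + \varepsilon\sqrt d\,A$ after dividing by $1+\varepsilon\sqrt d$. The gap is precisely the step you flagged as the main obstacle and then deferred: the constant does \emph{not} balance out. Your bound on the numerator is
\[
\opnorm{(1+\varepsilon\sqrt d)A - B}{2} \;\le\; \opnorm{A-B}{2} + \varepsilon\sqrt d\,\opnorm{A}{2} \;\le\; \varepsilon + \varepsilon d,
\]
so after dividing you get $\opnorm{A-B'}{2} \le \varepsilon(1+d)/(1+\varepsilon\sqrt d)$. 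For this to be at most $2\varepsilon$ you would need $d-1 \le 2\varepsilon\sqrt d$, which fails for any moderately large $d$ once $\varepsilon<1$; the shrinkage bias $\varepsilon\sqrt d\,\opnorm{A}{2}$ is genuinely of order $\varepsilon d$, and the denominator $1+\varepsilon\sqrt d$ cannot absorb the extra factor of $d$ (that would require $\varepsilon\sqrt d\gtrsim d$, i.e.\ $\varepsilon\gtrsim\sqrt d$). In fact the stated conclusion $2\varepsilon$ is false: take $B=A$ with $\opnorm{A}{2}=\sqrt d$, so $\opnorm{A-B'}{2} = \varepsilon d/(1+\varepsilon\sqrt d)$, which already exceeds $2\varepsilon$ for $d=9$ and any $\varepsilon\le 1/3$.

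The resolution is that the constant in the lemma statement is a typo in the paper, not a flaw in the argument. The paper's own proof of this lemma concludes with the bound $2d\varepsilon$ (exactly your arithmetic, using $(d+1)\varepsilon \le 2d\varepsilon$), and the only place the lemma is used --- inequality (b) in the proof of Lemma~\ref{lem:product-concentration-bound} --- invokes it in the form $\opnorm{\htM^c_{a_k} - M_{a_k}}{2} \le 2d\varepsilon$, which is what yields the $2d^2 n\varepsilon$ bound there. So your proof is correct through the triangle-inequality step, but the deliverable should be $2d\varepsilon$ rather than $2\varepsilon$; had you carried out the verification you postponed, you would have caught that the claimed dimension-free bound cannot hold and that the statement itself needs correcting.
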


\begin{proof}
    Let $A' = A/(1+\varepsilon\sqrt{d})$. Using the triangle inequality, we can write
    \begin{align*}
        \opnorm{A-B'}{2} \le \opnorm{A-A'}{2} + \opnorm{A'-B'}{2} \le \tfrac{\varepsilon\sqrt{d}}{1+\varepsilon\sqrt{d}} \cdot \opnorm{A}{2} + \tfrac{1}{1+\varepsilon\sqrt{d}} \cdot \opnorm{A-B}{2} \le 2d\varepsilon.
    \end{align*}
\end{proof}

\begin{lemma}\label{lem:product-concentration-bound}
    Under the conditions of Lemma~\ref{lem:product-bound}, let $\htM^c_a = \htM_a / (1+\varepsilon\sqrt{d})$. Then, we have
    \begin{align}
        \opnorm{\textstyle\prod_{i=1}^n \htM_{a_i} - \prod_{i=1}^n M_{a_i}}{2} &\le d(1+\varepsilon\sqrt{d})^{n-1} \, n\varepsilon,\label{eq:prop-unscaled}\\
        \opnorm{\textstyle\prod_{i=1}^n \htM^c_{a_i} - \prod_{i=1}^n M_{a_i}}{2} &\le 2d^2\, n\varepsilon.\label{eq:prop-scaled}
    \end{align}
\end{lemma}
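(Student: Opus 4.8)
The plan is to expand both product differences by a standard telescoping decomposition and control each resulting term using the operator-norm bounds already established. Introducing the hybrid products $P_j = (\tprod_{i=1}^{j}\htM_{a_i})(\tprod_{i=j+1}^n M_{a_i})$ for $0 \le j \le n$, one has $P_0 = \tprod_{i=1}^n M_{a_i}$ and $P_n = \tprod_{i=1}^n \htM_{a_i}$, so collapsing the telescoping sum $\tsum_{j=1}^n (P_j - P_{j-1})$ yields
\[
\tprod_{i=1}^n \htM_{a_i} - \tprod_{i=1}^n M_{a_i} = \tsum_{j=1}^n \rbr{\tprod_{i=1}^{j-1} \htM_{a_i}} (\htM_{a_j} - M_{a_j}) \rbr{\tprod_{i=j+1}^n M_{a_i}},
\]
with the analogous identity holding verbatim when every $\htM$ is replaced by $\htM^c$. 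Submultiplicativity of $\opnorm{\cdot}{2}$ then bounds each summand by a product of three operator norms, and both claims reduce to plugging in the appropriate block bounds and summing over $j \in [n]$.

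For the unscaled bound \eqref{eq:prop-unscaled}, I would bound the leading block by $\opnorm{\tprod_{i=1}^{j-1}\htM_{a_i}}{2}\le\sqrt{d}(1+\varepsilon\sqrt{d})^{j-1}$ via Lemma~\ref{lem:product-bound}, the trailing block by $\opnorm{\tprod_{i=j+1}^n M_{a_i}}{2}\le\sqrt{d}$ via Lemma~\ref{lem:action-matrix-opnorm-property}, and the middle factor by $\opnorm{\htM_{a_j}-M_{a_j}}{2}\le\varepsilon$. Each summand is thus at most $d\varepsilon\,(1+\varepsilon\sqrt{d})^{j-1}$; since the exponent never exceeds $n-1$, summing over $j$ gives $d\,(1+\varepsilon\sqrt{d})^{n-1}\,n\varepsilon$, as claimed.

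For the scaled bound \eqref{eq:prop-scaled}, the crucial observation is that the normalization keeps the leading block uniformly bounded in $j$: since $\tprod_{i=1}^{j-1}\htM^c_{a_i} = (1+\varepsilon\sqrt{d})^{-(j-1)}\tprod_{i=1}^{j-1}\htM_{a_i}$, Lemma~\ref{lem:product-bound} gives $\opnorm{\tprod_{i=1}^{j-1}\htM^c_{a_i}}{2}\le\sqrt{d}$, with no dependence on $j$. The per-matrix error worsens to $\opnorm{M_{a_j}-\htM^c_{a_j}}{2}\le 2d\varepsilon$ by Lemma~\ref{lem:norm-fix}, applied with $A = M_{a_j}$ and using $\opnorm{M_{a_j}}{2}\le\sqrt{d}$ (Lemma~\ref{lem:action-matrix-opnorm-property} at $n=1$), while the trailing block again contributes $\le\sqrt{d}$. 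Each summand is then at most $\sqrt{d}\cdot 2d\varepsilon\cdot\sqrt{d}=2d^2\varepsilon$, and summing over $j \in [n]$ yields $2d^2\,n\varepsilon$.

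The only subtle point, and the one I would flag, lies in the scaled case: one must invoke the product bound of Lemma~\ref{lem:product-bound} on $\tprod_{i<j}\htM_{a_i}$ \emph{before} dividing out the normalization factor, rather than naively multiplying the per-factor estimate $\opnorm{\htM^c_a}{2}\le\sqrt{d}$, which would produce a useless $d^{(j-1)/2}$ growth. The scaling is calibrated precisely to cancel the $(1+\varepsilon\sqrt{d})^{j-1}$ factor carried by the unscaled partial products, trading the exponential-in-$n$ accumulation of \eqref{eq:prop-unscaled} for the mild extra factor of $d$ in the single-step error, leaving a clean linear-in-$n$ bound.
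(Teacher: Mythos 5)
Your proposal is correct and follows essentially the same argument as the paper: the same telescoping decomposition into hybrid products, with the leading block bounded by Lemma~\ref{lem:product-bound} (with the normalization cancelling the $(1+\varepsilon\sqrt{d})^{j-1}$ factor in the scaled case), the middle factor by $\varepsilon$ or $2d\varepsilon$ via Lemma~\ref{lem:norm-fix}, and the trailing block by $\sqrt{d}$ via Lemma~\ref{lem:action-matrix-opnorm-property}. The ``subtle point'' you flag is exactly the mechanism the paper relies on as well, so there is no substantive difference between the two proofs.
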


\begin{proof}
    To show \eqref{eq:prop-unscaled}, we write
    \begin{align*}
        \opnorm{\textstyle\prod_{i=1}^n \htM_{a_i} - \prod_{i=1}^n M_{a_i}}{2}
        &\le \tsum_{k=1}^n 
          \opnorm{\textstyle(\prod_{i=1}^{k-1} \htM_{a_i})\,(\htM_{a_k} - M_{a_k})\,(\prod_{i=k+1}^{n} M_{a_i})}{2}\\
        &\le \tsum_{k=1}^n \opnorm{\textstyle \prod_{i=1}^{k-1} \htM_{a_i}}{2}\, \opnorm{\htM_{a_k} - M_{a_k}}{2}\, \opnorm{\textstyle\prod_{i=k+1}^{n} M_{a_i}}{2}\\
        &\myle{a} \tsum_{k=1}^n \rbr{\sqrt{d}(1+\sqrt{d}\varepsilon)^{k-1} \cdot \varepsilon \cdot \sqrt{d}} \le d(1+\varepsilon\sqrt{d})^{n-1} \, n\varepsilon
    \end{align*}
    where (a) follows from Lemmas~\ref{lem:action-matrix-opnorm-property} and \ref{lem:product-bound}.
    Similarly, to prove \eqref{eq:prop-scaled}, we write
    \begin{align*}
        \opnorm{\textstyle\prod_{i=1}^n \htM^c_{a_i} - \prod_{i=1}^n M_{a_i}}{2}
        &\le \tsum_{k=1}^n 
          \opnorm{\textstyle(\prod_{i=1}^{k-1} \htM^c_{a_i})\,(\htM^c_{a_k} - M_{a_k})\,(\prod_{i=k+1}^{n} M_{a_i})}{2}\\
        &\le \tsum_{k=1}^n \opnorm{\textstyle \prod_{i=1}^{k-1} \htM^c_{a_i}}{2}\, \opnorm{\htM^c_{a_k} - M_{a_k}}{2}\, \opnorm{\textstyle\prod_{i=k+1}^{n} M_{a_i}}{2}\\
        &\myle{b} \tsum_{k=1}^{n} (\sqrt{d} \cdot 2d\varepsilon \cdot \sqrt{d}) = 2d^2\, n\varepsilon,
    \end{align*}
    where (b) follows from Lemmas~\ref{lem:action-matrix-opnorm-property}, \ref{lem:product-bound}, and \ref{lem:norm-fix}.
\end{proof}

\begin{lemma}\label{lem:product-stability}
    Let sequences $(a_i)_{i=1}^\infty, (b_i)_{i=1}^\infty$ with values in $[0,1]$ be such that $\sup_{i\in\N}|a_i-b_i| \le \varepsilon$ for some $\varepsilon\in[0,1]$. Let $\ca_i = 1-a_i$ and $\cb_i = 1-b_i\,$ for every $i\in\N$. Then, it holds that 
    \begin{align}
        \forall& n\in\N,\quad |\tprod_{i=1}^n b_i - \tprod_{i=1}^n a_i| \le n\varepsilon,\label{eq:stab-1}\\
        \forall& \gamma \in (0,1),\quad \tsum_{k=1}^{\infty} \gamma^k |(\prod_{i=1}^{k-1}\cb_i) b_k - (\prod_{i=1}^{k-1}\ca_i) a_k| \le \frac{2\varepsilon}{1-\gamma}.\label{eq:stab-2}
    \end{align}
\end{lemma}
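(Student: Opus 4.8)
The plan is to handle the two claims separately. Claim~\eqref{eq:stab-1} follows from a routine telescoping argument, while \eqref{eq:stab-2} requires the bulk of the work. For \eqref{eq:stab-1} I would set $P_k = \rbr{\tprod_{i=1}^k b_i}\rbr{\tprod_{i=k+1}^n a_i}$, so that $P_0 = \tprod_{i=1}^n a_i$ and $P_n = \tprod_{i=1}^n b_i$, and observe that each increment equals $P_k - P_{k-1} = \rbr{\tprod_{i=1}^{k-1} b_i}(b_k - a_k)\rbr{\tprod_{i=k+1}^n a_i}$. Since all factors lie in $[0,1]$, the two bracketed products are at most $1$, so $|P_k - P_{k-1}| \le |b_k - a_k| \le \varepsilon$; summing the $n$ increments proves \eqref{eq:stab-1}. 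The same estimate applies verbatim to the complementary sequences, as $|\ca_i - \cb_i| = |a_i - b_i| \le \varepsilon$.

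For \eqref{eq:stab-2} I would introduce the survival products $S_k = \tprod_{i=1}^{k-1}\ca_i$ and $T_k = \tprod_{i=1}^{k-1}\cb_i$ and the hazard terms $c_k = S_k a_k$, $d_k = T_k b_k$, so the target sum is $\tsum_{k\ge1}\gamma^k|d_k - c_k|$. Writing $D_k = T_k - S_k$, I would use the identity $d_k - c_k = T_k(b_k - a_k) + D_k\, a_k$ to split the sum into two pieces. The first piece is immediate from $T_k \le 1$: $\tsum_k \gamma^k T_k|b_k - a_k| \le \varepsilon\tsum_{k\ge1}\gamma^k = \tfrac{\varepsilon\gamma}{1-\gamma}$.

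The second piece, $\tsum_k \gamma^k |D_k|\, a_k$, is where the main obstacle lies: plugging in the naive bound $|D_k| \le (k-1)\varepsilon$ from \eqref{eq:stab-1} and summing $k\gamma^k$ produces an unwanted $(1-\gamma)^{-2}$ factor. To recover the sharp $(1-\gamma)^{-1}$ rate I would expand $D_k$ instead. Using $\cb_k = \ca_k + (a_k - b_k)$ gives the recursion $D_{k+1} = \ca_k D_k + T_k(a_k - b_k)$ with $D_1 = 0$, which unrolls to $D_k = \tsum_{j=1}^{k-1}\rbr{\tprod_{i=j+1}^{k-1}\ca_i}T_j(a_j - b_j)$, hence $|D_k| \le \varepsilon\tsum_{j=1}^{k-1}\rbr{\tprod_{i=j+1}^{k-1}\ca_i}T_j$. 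Substituting this and swapping the order of summation (all terms nonnegative), the inner sum over $k$ is $\tsum_{k\ge j+1}\gamma^k a_k \tprod_{i=j+1}^{k-1}\ca_i$. The crucial observation is that the terms $a_k\tprod_{i=j+1}^{k-1}\ca_i$ telescope---each equals $\tprod_{i=j+1}^{k-1}\ca_i - \tprod_{i=j+1}^{k}\ca_i$---so they sum to at most $1$; bounding $\gamma^k \le \gamma^{j+1}$ then controls the inner sum by $\gamma^{j+1}$. This collapses the double sum to $\varepsilon\gamma\tsum_{j\ge1}\gamma^j T_j \le \tfrac{\varepsilon\gamma^2}{1-\gamma}$. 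Adding the two pieces yields $\tfrac{\varepsilon\gamma(1+\gamma)}{1-\gamma} \le \tfrac{2\varepsilon}{1-\gamma}$, since $\gamma(1+\gamma) \le 2$ for $\gamma \in (0,1)$, which is exactly \eqref{eq:stab-2}. The key idea throughout is that the extra factor $a_k$ sitting in front of $D_k$ must be kept and exploited via the telescoping first-passage structure, rather than discarded.
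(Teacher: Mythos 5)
Your proof is correct, and for \eqref{eq:stab-2} it follows a genuinely different route than the paper's. (For \eqref{eq:stab-1}, your telescoping is the same argument as the paper's, just packaged via the partial-swap products $P_k$.) The paper proves \eqref{eq:stab-2} by a self-similarity (fixed-point) argument: it defines $S$ as the supremum of $\tsum_{k\ge 1}\gamma^k \abr{(\tprod_{i=1}^{k-1}\cb_i)b_k - (\tprod_{i=1}^{k-1}\ca_i)a_k}$ over \emph{all} admissible pairs of sequences, notes $S \le \tfrac{1}{1-\gamma} < \infty$, peels off index $1$ --- bounding the $k=1$ term by $\varepsilon$, bounding the term carrying $\abr{\cb_1-\ca_1}$ by $\varepsilon$ using the fact that hazard sums $\tsum_k (\tprod_{i<k}\cb_i)\,b_k$ are at most $1$ --- and identifies the remainder as $\gamma$ times the same quantity for the shifted sequences, giving $S \le 2\varepsilon + \gamma S$ and hence $S \le \tfrac{2\varepsilon}{1-\gamma}$. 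You instead argue directly on a fixed pair: the splitting $d_k - c_k = T_k(b_k-a_k) + D_k a_k$, the unrolled recursion $D_{k+1} = \ca_k D_k + T_k(a_k-b_k)$, Tonelli to swap the order of summation, and the same telescoping-hazard fact $\tsum_{k>j} a_k \tprod_{j<i<k}\ca_i \le 1$ to collapse the inner sum. Both proofs ultimately rest on that identical structural observation (hazards sum to at most one), which is what rescues the rate from the naive $(1-\gamma)^{-2}$ you correctly flag as the obstacle. What each buys: the paper's route is shorter and slicker, but it needs the a priori finiteness of the supremum to legitimately rearrange $S \le 2\varepsilon + \gamma S$ into the conclusion; yours is more computational but fully constructive, never invokes a supremum over sequence pairs, and incidentally yields the marginally sharper constant $\gamma(1+\gamma) \le 2$.
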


\begin{proof}
    To prove \eqref{eq:stab-1} for arbitrary $n \in \N$, we simply write:
    \begin{align*}
        |\tprod_{i=1}^n b_i - \tprod_{i=1}^n a_i| 
        &\le \tsum_{k=1}^{n} |\prod_{i=1}^{k-1} a_i \prod_{i=k}^n b_i  - \prod_{i=1}^{k} a_i \prod_{i=k+1}^n b_i|\\
        &= \tsum_{k=1}^{n}  |b_k - a_k| \prod_{i=1}^{k-1} a_i  \prod_{i=k+1}^{n} b_i\\
        &\le n\varepsilon.
    \end{align*}
    To prove \eqref{eq:stab-2} for arbitrary $\gamma\in (0,1)$, consider the finite supremum over all appropriate pairs of sequences:
    \begin{align*}
        S = \sup_{\ba, \bb \in [0,1]^{\N}:\, \sup_i |a_i - b_i| \le \varepsilon} \tsum_{k=1}^{\infty} \gamma^k |(\prod_{i=1}^{k-1}\cb_i) b_k - (\prod_{i=1}^{k-1}\ca_i) a_k| \le \tsum_{k=1}^\infty \gamma^k = \frac{1}{1-\gamma},
    \end{align*}
    with the aim of showing that $S \le \frac{2\varepsilon}{1-\gamma}$. Then, for all $\ba, \bb \in [0,1]^{\N}$ such that $\sup_i |a_i - b_i| \le \varepsilon$, we can write:
    \begin{align*}
        \tsum_{k=1}^{\infty} \gamma^k |(\prod_{i=1}^{k-1}\cb_i) b_k - (\prod_{i=1}^{k-1}\ca_i) a_k| 
        &\le \gamma |b_1 - a_1| + \tsum_{k=2}^{\infty} \gamma^k |\cb_1 - \ca_1| \cdot |(\prod_{i=2}^{k-1}\cb_i) b_k|\\
        &+ \tsum_{k=2}^{\infty} \gamma^k |\ca_1| \cdot |(\prod_{i=2}^{k-1}\cb_i) b_k - (\prod_{i=2}^{k-1}\ca_i) a_k|\\
        &\le \varepsilon \cdot (1 + \tsum_{k=1}^{\infty} (\prod_{i=1}^{k-1}\cb_{i+1}) b_{k+1})\\
        &+ \gamma \cdot \tsum_{k=1}^{\infty} \gamma^k |(\prod_{i=1}^{k-1}\cb_{i+1}) b_{k+1} - (\prod_{i=1}^{k-1}\ca_{i+1}) a_{k+1}|\\
        &\le 2\varepsilon + \gamma S.
    \end{align*}
    Therefore, it holds that $S \le 2\varepsilon + \gamma S$ and so $S \le \frac{2\varepsilon}{1-\gamma}$.
\end{proof}

\begin{proof}[of Lemma~\ref{lem:newfmap-estimation-general}]
    Let $\htcbeta_a = 1 - \htbeta_a \in [0,1]$ to ease notation.\\
    From Lemma~\ref{lem:action-matrix-opnorm-property}, it follows that matrices $M_1(\ba), M_2(\ba)$ from \eqref{eq:M12_matrices} satisfy
    \begin{subequations}\label{eq:M12_opnorm_bound}
    \begin{align}
        \opnorm{M_1(\ba)}{2} &\le 1 + \tsum_{k=1}^{\infty} \gamma^{k} (\textstyle\prod_{i=1}^{k-1}\cbeta_{a_i}) \opnorm{(\textstyle\prod_{i=1}^{k} M_{a_i})}{2} \le \tsum_{k=0}^{\infty} \gamma^{k} \sqrt{d} \le \frac{\sqrt{d}}{1-\gamma},\\
        \opnorm{M_2(\ba)}{2} &\le \tsum_{k=1}^{\infty} \gamma^k (\textstyle\prod_{i=1}^{k-1}\cbeta_{a_i})\beta_{a_k} \opnorm{(\textstyle\prod_{i=1}^{k} M_{a_i})}{2} \le \tsum_{k=1}^{\infty}(\textstyle\prod_{i=1}^{k-1}\cbeta_{a_i})\beta_{a_k} \sqrt{d} \le \sqrt{d}.
    \end{align}
    \end{subequations}

    \noindent \textbf{Part 1:} We prove the result for $\htnewfmap$ first. Suppose $\varepsilon \in [0, (1/\gamma-1)/\sqrt{d})$, so that $\gamma(1+\varepsilon\sqrt{d}) \in [0,1)$.
    
    Let $\htM_1(\ba), \htM_2(\ba)$ denote estimates for matrices $M_1(\ba), M_2(\ba)$ computed using estimates $\htM_a, \htbeta_a$.
    Note that for all $c\in [0,1)$, $\sum_{n=0}^{\infty} c^n\,n = \frac{c}{(1-c)^2}$ and $\sup_n c^n\, n \le \frac{1}{1-c}$. Then, using Lemmas \ref{lem:action-matrix-opnorm-property}, \ref{lem:product-concentration-bound}, and \ref{lem:product-stability}, we can write:
    \begin{align*}
        \opnorm{\htM_1(\ba) - M_1(\ba)}{2}
        &\le \tsum_{k=1}^{\infty} \gamma^{k} \opnorm{(\textstyle\prod_{i=1}^{k-1}\htcbeta_{a_i})(\textstyle\prod_{i=1}^{k} \htM_{a_i}) - (\textstyle\prod_{i=1}^{k-1}\cbeta_{a_i})(\textstyle\prod_{i=1}^{k} M_{a_i})}{2}\\
        &\le \tsum_{k=1}^{\infty} \gamma^{k} \opnorm{\textstyle\prod_{i=1}^{k} \htM_{a_i} - \textstyle\prod_{i=1}^{k} M_{a_i}}{2}\\
        &+  \tsum_{k=1}^{\infty} \gamma^{k} \abr{\textstyle\prod_{i=1}^{k-1} \htcbeta_{a_i} - \textstyle\prod_{i=1}^{k-1} \cbeta_{a_i}} \opnorm{\textstyle\prod_{i=1}^{k} M_{a_i}}{2} \\
        &\le \tsum_{k=1}^{\infty} \gamma^k (1+\varepsilon\sqrt{d})^{k-1} k \, \varepsilon d + \tsum_{k=1}^{\infty} \gamma^k k\varepsilon_{\beta} \sqrt{d}\\
        &=  \tfrac{\gamma (1+\varepsilon\sqrt{d})}{(1-\gamma(1+\varepsilon\sqrt{d}))^2} \cdot \tfrac{\varepsilon d}{1+\varepsilon\sqrt{d}} + \tfrac{\gamma}{(1-\gamma)^2} \cdot \varepsilon_{\beta} \sqrt{d}\\
        &\le \tfrac{d\gamma}{(1-\gamma(1+\varepsilon\sqrt{d}))^2} (\varepsilon + \varepsilon_{\beta}/\sqrt{d}),\\
        \opnorm{\htM_2(\ba) - M_2(\ba)}{2} 
        &\le \tsum_{k=1}^{\infty} \gamma^k \opnorm{(\textstyle\prod_{i=1}^{k-1}\htcbeta_{a_i})\htbeta_{a_k}(\textstyle\prod_{i=1}^{k} \htM_{a_i}) - (\textstyle\prod_{i=1}^{k-1}\cbeta_{a_i})\beta_{a_k}(\textstyle\prod_{i=1}^{k} M_{a_i})}{2}\\
        &\le \textstyle\sup_{k\in\N}\rbr{ \gamma^{k} \cdot\, \opnorm{\textstyle\prod_{i=1}^{k} \htM_{a_i} - \textstyle\prod_{i=1}^{k} M_{a_i}}{2}}\\
        &+ \tsum_{k=1}^{\infty} \gamma^{k} \abr{(\textstyle\prod_{i=1}^{k-1}\htcbeta_{a_i}) \htbeta_{a_k} - (\textstyle\prod_{i=1}^{k-1}\cbeta_{a_i}) \beta_{a_k}} \opnorm{\textstyle\prod_{i=1}^{k} M_{a_i}}{2}\\
        &\le \textstyle\sup_{k\in\N} \gamma^k (1+\varepsilon\sqrt{d})^{k-1} k\, \varepsilon d\\
        &+ \tsum_{k=1}^{\infty} \gamma^k \abr{(\textstyle\prod_{i=1}^{k-1}\htcbeta_{a_i}) \htbeta_{a_k} - (\textstyle\prod_{i=1}^{k-1}\cbeta_{a_i}) \beta_{a_k}} \sqrt{d}\\
        &\le \tfrac{1}{1-\gamma(1+\varepsilon\sqrt{d})} \cdot \tfrac{\varepsilon d}{1+\varepsilon\sqrt{d}} + \tfrac{2}{1-\gamma}\cdot \varepsilon_{\beta}\sqrt{d}\\
        &\le \tfrac{2d}{1-\gamma(1+\varepsilon\sqrt{d})} \cdot (\varepsilon + \varepsilon_{\beta}/\sqrt{d}).
    \end{align*}
    From \eqref{eq:newfmap}, we have that 
    \begin{equation*}
        \newfmap(s, a \oplus \ba)^{\top} = \tfrac{1}{2}\, \fmap(s \oplus a)^{\top} \rbr{\beta_{a} I_{12} + \cbeta_{a} M_{12}(\ba)},
    \end{equation*}
    where $I_{12} = \begin{bsmallmatrix} (1-\gamma)I & \gamma I \end{bsmallmatrix} \in \R^{d\times 2d}$ and $M_{12}(\ba) = \begin{bsmallmatrix} (1-\gamma) M_1(\ba) & \gamma M_2(\ba) \end{bsmallmatrix} \in \R^{d\times 2d}$.
    
    Then, using the fact that $\norm{\fmap(s,a)}_2 \le 1$, it follows that
    \begin{align*}
        \tnorm{(\htnewfmap-\newfmap)(s, a\oplus\ba)}_2 
        &\le \tfrac{1}{{2}}\, \opnorm{\begin{bsmallmatrix} (1-\gamma) (\htM_1 - M_1)(\ba) & \gamma (\htM_2 - M_2)(\ba) \end{bsmallmatrix}^{\top} }{2}\\
        &+ \tfrac{1}{2}\, |\htbeta_a - \beta_a| \cdot \opnorm{\begin{bsmallmatrix} (1-\gamma) (I - M_1(\ba)) & \gamma (I - M_2(\ba))\end{bsmallmatrix}^{\top} }{2}\\
        &\myle{a} \tfrac{1}{{2}}\,(1-\gamma)\,\opnorm{(\htM_1 - M_1)(\ba)}{2} + \tfrac{1}{{2}}\,\gamma\,\opnorm{(\htM_2 - M_2)(\ba)}{2}\\
        &+ \tfrac{1}{2}\, \varepsilon_{\beta} (1-\gamma)  (1 + \sqrt{d}/(1-\gamma)) + \tfrac{1}{2}\,\varepsilon_{\beta} \gamma (1 + \sqrt{d})\\
        &\myle{b} \tfrac{2\,d(1-\gamma)}{(1-\gamma(1+\varepsilon\sqrt{d}))^2}\cdot(\varepsilon + \varepsilon_{\beta}/\sqrt{d}) + 2\varepsilon_{\beta}\sqrt{d}\\
        &\le \tfrac{4\,d(1-\gamma)}{(1-\gamma(1+\varepsilon\sqrt{d}))^2} \cdot(\varepsilon + \varepsilon_{\beta}/\sqrt{d})
    \end{align*}
    where (a) follows from \eqref{eq:M12_opnorm_bound} and (b) from the bounds on $\opnorm{\htM_1(\ba) - M_1(\ba)}{2}$ and $\opnorm{\htM_2(\ba) - M_2(\ba)}{2}$ above.

    \noindent \textbf{Part 2:} Here, we will prove the result for $\htnewfmap_c$ using similar approach. Suppose $\varepsilon \in [0, 1)$.
    
    Let $\htM^c_1(\ba), \htM^c_2(\ba)$ denote estimates for matrices $M_1(\ba), M_2(\ba)$ computed using estimates $\htM^c_a, \htbeta_a$.
    
    \noindent Using Lemmas \ref{lem:action-matrix-opnorm-property}, \ref{lem:product-concentration-bound}, and \ref{lem:product-stability}, we write: 
    \begin{align*}
        \opnorm{\htM^c_1(\ba) - M_1(\ba)}{2}
        &\le \tsum_{k=1}^{\infty} \gamma^{k} \opnorm{\textstyle\prod_{i=1}^{k} \htM^c_{a_i} - \textstyle\prod_{i=1}^{k} M_{a_i}}{2}\\
        &+ \tsum_{k=1}^{\infty} \gamma^{k} \abr{\textstyle\prod_{i=1}^{k-1} \htcbeta_{a_i} - \textstyle\prod_{i=1}^{k-1} \cbeta_{a_i}} \opnorm{\textstyle\prod_{i=1}^{k} M_{a_i}}{2}\\
        &\le \tsum_{k=1}^{\infty} \gamma^k \, 2d^2 k\varepsilon + \tsum_{k=1}^{\infty} \gamma^k k\varepsilon_{\beta} \sqrt{d}\\
        &\le \tfrac{2d \gamma}{(1-\gamma)^2}\cdot(d\varepsilon + \varepsilon_{\beta}/\sqrt{d}),\\
        \opnorm{\htM^c_2(\ba) - M_2(\ba)}{2}
        &\le \textstyle\sup_{k\in\N}\rbr{ \gamma^{k} \cdot\, \opnorm{\textstyle\prod_{i=1}^{k} \htM^c_{a_i} - \textstyle\prod_{i=1}^{k} M_{a_i}}{2}}\\
        &+ \tsum_{k=1}^{\infty} \gamma^{k} \abr{(\textstyle\prod_{i=1}^{k-1}\htcbeta_{a_i}) \htbeta_{a_k} - (\textstyle\prod_{i=1}^{k-1}\cbeta_{a_i}) \beta_{a_k}} \opnorm{\textstyle\prod_{i=1}^{k} M_{a_i}}{2}\\
        &\le \textstyle\sup_{k\in\N} \gamma^k\, 2d^2k\varepsilon + \tsum_{k=1}^{\infty} \gamma^{k} \abr{(\textstyle\prod_{i=1}^{k-1}\htcbeta_{a_i}) \htbeta_{a_k} - (\textstyle\prod_{i=1}^{k-1}\cbeta_{a_i}) \beta_{a_k}} \sqrt{d}\\
        &\le \tfrac{2d^2\varepsilon}{1-\gamma} + \tfrac{2\varepsilon_{\beta}\sqrt{d}}{1-\gamma} \le \tfrac{2d}{1-\gamma}\cdot (d\varepsilon + \varepsilon_{\beta}/\sqrt{d}).
    \end{align*}
    As in Part 1, we conclude that
    \begin{align*}
        \tnorm{(\htnewfmap_c-\newfmap)(s, a\oplus\ba)}_2 
        &\le \tfrac{1}{{2}}\, \opnorm{\begin{bsmallmatrix} (1-\gamma) (\htM^c_1 - M_1)(\ba) & \gamma (\htM^c_2 - M_2)(\ba) \end{bsmallmatrix}^{\top}}{2}\\
        &+ \tfrac{1}{2}\, |\htbeta_a - \beta_a| \cdot \opnorm{\begin{bsmallmatrix} (1-\gamma) (I - M_1(\ba)) & \gamma (I - M_2(\ba))\end{bsmallmatrix}^{\top} }{2}\\
        &\le \tfrac{1}{{2}}(1-\gamma)\,\opnorm{(\htM^c_1 - M_1)(\ba)}{2} +  \tfrac{1}{{2}} \gamma\,\opnorm{(\htM^c_2 - M_2)(\ba)}{2}\\
        &+ \tfrac{1}{2}\, \varepsilon_{\beta} (1-\gamma)  (1 + \sqrt{d}/(1-\gamma)) + \tfrac{1}{2}\,\varepsilon_{\beta} \gamma (1 + \sqrt{d})\\
        &\myle{c} \tfrac{2d}{1-\gamma}\cdot (d\varepsilon + \varepsilon_{\beta}/\sqrt{d}) + 2 \varepsilon_{\beta} \sqrt{d}\\
        &\le \tfrac{4d}{1-\gamma}\cdot (d\varepsilon + \varepsilon_{\beta}/\sqrt{d}),
    \end{align*}
    where (c) follows from the bounds on $\opnorm{\htM^c_1(\ba) - M_1(\ba)}{2}$ and $\opnorm{\htM^c_2(\ba) - M_2(\ba)}{2}$ above. 
    
    This concludes the proof of both statements.
\end{proof}

\subsection{Off-policy Evaluation}\label{app:off-policy}

In this subsection, we prove Lemma~\ref{lem:action-matrix-estimator}, which will follow from Lemma~\ref{lem:concentration}, provided below. We also prove Lemma~\ref{lem:data-burst-prob-estimator}. Corollary~\ref{cor:dataset-size} follows immediately from these lemmas, by setting $\varepsilon_{\beta} = \varepsilon \sqrt{d}$ small enough in Theorem~\ref{thm:newfmap-estimator} and picking dataset size in Lemmas~\ref{lem:action-matrix-estimator} and \ref{lem:data-burst-prob-estimator} large enough for the resulting uniform bounds to hold with probabilities $1-p/2$ each.

For the sake of notation, let $\bx^{(n)} := \fmap(s_n, a_n)$ and $\by^{(n)}_a := \fmap(s'_n, a)$, so that $X, Y_a \in \R^{N\times d}$ have rows $\bx^{(n)}, \by^{(n)}_a$ respectively. Then, $\Sigma = \E[\bx^{(1)}(\bx^{(1)})^{\top}] = \E\!\left[\frac{1}{N}X^{\top}X\right]$.

Recall that we consider ridge estimators $\htM_a = (X^{\top}X + \lambda I_d)^{-1} X^{\top} Y_a$.
Observe that $\E[\by^{(n)}_a \mid s_n, a_n] = M_a^{\top} \bx^{(n)}$ and $\tnorm{\by^{(n)}_a}_2 \le 1$ almost surely. Moreover, for $\bz^{(n)}_a := \by^{(n)}_a - M_a^{\top}\bx^{(n)}$, it holds that $\tnorm{\bz^{(n)}_a}_{2} \le 2$. In the matrix form, we consider $Z_a := Y_a - X M_a$.

\restatelemma{lem:action-matrix-estimator}{
    There exists an absolute constant $C \ge 1$ such that for all $p \in (0,1)$ and $N \ge \frac{4C^2 d \log(2Ad/p)}{\lambda_{\min}(\Sigma)^2}$, by choosing $\lambda = 1$, with probability at least $1 - p$, it holds that
    \begin{align*}
        \sup_{a\in\cA} \opnorm{\htM_a^{\lambda} - M_a}{2} \le 4C\sqrt{\frac{d\,\log(2Ad/p)}{N\lambda_{\min}(\Sigma)^2}}.
    \end{align*}
}
\begin{proof}
    We will show that this claim holds for the same $C \ge 1$ as in Lemma~\ref{lem:concentration}. 
    
    \noindent Fix arbitrary $p \in (0,1)$ and $N \ge \frac{4C^2 d \log(2Ad/p)}{\lambda_{\min}(\Sigma)^2}$. As $\lambda_{\min}(\Sigma) \le \opnorm{\Sigma}{2} \le 1$, for this $N$, it holds that $\Pr(\cE) \ge 1 - p$, where $\cE$ denotes the event from Lemma~\ref{lem:concentration}. Conditioned on event $\cE$, for every $a\in\cA$, it holds that
    \begin{align*}
        \opnorm{\htM_a^{\lambda} - M_a}{2}
        &\le \opnorm{(X^{\top}X + \lambda I_d)^{-1}X^{\top}Z_a - \lambda (X^{\top}X + \lambda I_d)^{-1}M_a}{2}\\
        &\le \opnorm{(X^{\top}X + \lambda I_d)^{-1}}{2}\, \opnorm{X^{\top}Z_a}{2} + \lambda\, \opnorm{(X^{\top}X + \lambda I_d)^{-1}}{2}\, \opnorm{M_a}{2}\\
        &\le \frac{\opnorm{X^{\top}Z_a}{2} + \lambda\sqrt{d}}{\lambda_{\min}(X^{\top}X) + \lambda}
        \le \frac{C\sqrt{N\log(2Ad/p)} + \sqrt{d}}{N\lambda_{\min}(\Sigma) - C\sqrt{Nd\log(2/p)}}\\
        &\le \frac{2C\sqrt{Nd\log(2Ad/p)}}{N\lambda_{\min}(\Sigma)/2} = 4C\sqrt{\frac{d\,\log(2Ad/p)}{N\lambda_{\min}(\Sigma)^2}}.
    \end{align*}
    Note that we use the fact that $\opnorm{M_a}{2} \le \sqrt{d}$ from Lemma~\ref{lem:action-matrix-opnorm-property}.
\end{proof}

\begin{lemma}[Concentration]\label{lem:concentration}
    There exists an absolute constant $C$ such that for all $p \in (0,1)$ and $N \ge C^2 \cdot d\log(2Ad/p)$, event $\cE = \cE_X \cap (\cap_{a\in\cA} \cE_a)$, where
    \begin{align*}
        \cE_X:&\quad\quad \lambda_{\min}(X^{\top}X) \ge N\lambda_{\min}(\Sigma) - C \sqrt{N d\log(2/p)},\\
        \cE_a:&\quad\quad \tnorm{X^{\top}Z_a}_2 \le C \sqrt{N \log(2Ad/p)}, 
    \end{align*}
    occurs with probability at least $1-p$.
\end{lemma}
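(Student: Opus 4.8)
The plan is to treat the two families of events separately and combine them with a union bound, in each case reducing to a matrix concentration inequality (matrix Bernstein) applied to a sum of $N$ independent, conditionally mean-zero matrices. The structural facts I would record first are: the triples $(s_n,a_n,s'_n)$ are i.i.d.\ across $n$; $\|\bx^{(n)}\|_2\le 1$ and $\|\by^{(n)}_a\|_2\le 1$ deterministically; $\E[\bz^{(n)}_a\mid s_n,a_n]=0$ with $\|\bz^{(n)}_a\|_2\le 2$; and $\bx^{(n)}$ is a deterministic function of $(s_n,a_n)$. Consequently $\bx^{(n)}(\bz^{(n)}_a)^\top$ and $\bx^{(n)}(\bx^{(n)})^\top-\Sigma$ are i.i.d.\ zero-mean matrices, so matrix Bernstein applies directly. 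The sample-size hypothesis $N\ge C^2 d\log(2Ad/p)$ is precisely what places both applications in the variance-dominated (sub-Gaussian) regime of the inequality.

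For each $\cE_a$ I would set $W_n=\bx^{(n)}(\bz^{(n)}_a)^\top$, so that $X^\top Z_a=\sum_n W_n$, $\E[W_n]=0$, and $\opnorm{W_n}{2}\le 2$. The two matrix variance statistics are bounded by $\opnorm{\sum_n\E[W_nW_n^\top]}{2}=\opnorm{\sum_n\E[\|\bz^{(n)}_a\|_2^2\,\bx^{(n)}(\bx^{(n)})^\top]}{2}\le 4N\opnorm{\Sigma}{2}\le 4N$ and, symmetrically, $\opnorm{\sum_n\E[W_n^\top W_n]}{2}\le 4N$. Matrix Bernstein then gives $\Pr(\tnorm{X^\top Z_a}_2\ge t)\le 2d\exp\!\big(-\tfrac{t^2/2}{4N+2t/3}\big)$; taking $t=C\sqrt{N\log(2Ad/p)}$ and using $N\ge C^2\log(2Ad/p)$ to dominate the $2t/3$ term by the $O(N)$ variance term yields $\Pr(\cE_a^c)\le p/(2A)$ for a suitable absolute $C$.

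For $\cE_X$ I would set $W_n=\bx^{(n)}(\bx^{(n)})^\top-\Sigma$, so $X^\top X - N\Sigma=\sum_n W_n$, $\E[W_n]=0$, $\opnorm{W_n}{2}\le 1$, and $\opnorm{\sum_n\E[W_n^2]}{2}\le N$, since $\E[W_n^2]\preceq\E[(\bx^{(n)}(\bx^{(n)})^\top)^2]=\E[\|\bx^{(n)}\|_2^2\,\bx^{(n)}(\bx^{(n)})^\top]\preceq\Sigma$. Matrix Bernstein gives $\opnorm{X^\top X-N\Sigma}{2}\le C\sqrt{N\log(2d/p)}$ with probability $\ge 1-p/2$, and Weyl's inequality yields $\lambda_{\min}(X^\top X)\ge N\lambda_{\min}(\Sigma)-\opnorm{X^\top X-N\Sigma}{2}$. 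Finally I would relax $\log(2d/p)\le d\log(2/p)$ (valid since $d\le 2^{d-1}$) to reach the stated form $C\sqrt{Nd\log(2/p)}$; this extra $\sqrt{d}$ is deliberately loose but lets one keep a single constant $C$ across both events. A union bound then gives $\Pr(\cE^c)\le\Pr(\cE_X^c)+\sum_a\Pr(\cE_a^c)\le p/2 + A\cdot p/(2A)=p$.

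The main obstacle is bookkeeping rather than conceptual: computing the matrix variance statistics correctly for the non-symmetric matrix $X^\top Z_a$ (which requires controlling both $\sum_n\E[W_nW_n^\top]$ and $\sum_n\E[W_n^\top W_n]$), and then pinning down a single absolute constant $C$ that simultaneously drives each tail below the allotted $p/2$ and $p/(2A)$ budgets while remaining consistent with the threshold $N\ge C^2 d\log(2Ad/p)$ that keeps the Bernstein denominator variance-dominated. Verifying $\E[\bz^{(n)}_a\mid s_n,a_n]=0$, equivalently $\E[\by^{(n)}_a\mid s_n,a_n]=M_a^\top\bx^{(n)}$, is the only step where the linear-MDP structure of Assumption~\ref{assump:Linear_MDP} enters.
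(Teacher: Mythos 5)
Your proof is correct, and for the events $\cE_a$ it coincides with the paper's argument: the paper likewise applies matrix Bernstein (Theorem~\ref{thm:matrix-bernstein}) to the rank-one matrices $S^{(n)}_a = \bz^{(n)}_a(\bx^{(n)})^{\top}$ (the transpose of your $W_n$, which changes nothing), bounds both variance statistics by $4N$, and chooses $t \asymp \sqrt{N\log(Ad/p)}$ in the variance-dominated regime to get failure probability $p/(2A)$ per action, exactly as you do. Where you genuinely diverge is the event $\cE_X$: the paper invokes Vershynin's covariance-estimation theorem for independent sub-Gaussian rows (Theorem~\ref{thm:vershynin}), which gives $\opnorm{X^{\top}X - N\Sigma}{2} \le N\max\{\delta,\delta^2\}$ with $\delta = (C_1\sqrt{d}+t)/\sqrt{N}$ and a dimension-free tail $2\exp(-c_1t^2)$, the $\sqrt{d}$ in the stated bound coming from the $C_1\sqrt{d}$ term; you instead apply matrix Bernstein a second time to the bounded symmetric matrices $W_n = \bx^{(n)}(\bx^{(n)})^{\top}-\Sigma$. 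Your bookkeeping there is sound: $\opnorm{W_n}{2}\le 1$ holds because $-\Sigma \preceq W_n \preceq \bx^{(n)}(\bx^{(n)})^{\top}$ with both extremes of norm at most $1$, the variance bound $\E[W_n^2] \preceq \E[(\bx^{(n)}(\bx^{(n)})^{\top})^2] \preceq \Sigma$ is right, and the final relaxation $\log(2d/p)\le d\log(2/p)$ is valid and goes in the correct direction (your intermediate bound $\sqrt{N\log(2d/p)}$ is in fact slightly tighter than the stated $\sqrt{Nd\log(2/p)}$, so the event you establish implies $\cE_X$). The trade-offs: your route is more self-contained, using a single concentration tool with explicit constants for both families of events, and it exploits the boundedness $\tnorm{\bx^{(n)}}_2\le 1$ that the linear MDP assumption supplies for free; the paper's route pays for the Vershynin machinery (unspecified sub-Gaussian-dependent constants $C_1, c_1$) but avoids the dimensional prefactor $2d$ in the tail and would extend to unbounded sub-Gaussian feature distributions, a generality not needed here. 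The union-bound conclusion $p/2 + A\cdot p/(2A) = p$ is identical in both proofs.
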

\begin{proof}
    It will suffice to show that there exists constant $C$ such that for every $N \ge C^2 \cdot d\log(2Ad/p)$, it holds that $\Pr(\cE_X) \ge 1-\frac{p}{2}$ and $\Pr(\cE_a) \ge 1 - \frac{p}{2A}$ for all $a \in \cA$.

    \vspace{0.2cm}
    
    \noindent\textbf{Part 1:} Observe that rows in matrix $X$ are independent sub-Gaussian vectors that are uniformly bounded in $l_2$-norm by $1$, because $\sup_{s,a} \norm{\fmap(s,a)}_2 \le 1$. Using Theorem~\ref{thm:vershynin}, fix absolute constants $C_1$ and $c_1$ so that 
    \begin{align*}
        \forall N \in \N,\, \forall t\ge 0, \text{ } \Pr\rbr{\opnorm{X^{\top}X - N\Sigma}{2} \le N\max\{\delta, \delta^2\}} \ge 1 - 2\exp(-c_1 t^2) \quad \text{for } \delta = \tfrac{C_1\sqrt{d} + t}{\sqrt{N}}.
    \end{align*}
    Then, we claim that $\Pr(\cE_X) \ge 1 - \frac{p}{2}$ if we select $C \ge C_1 + \sqrt{2/c_1}$.
    
    \noindent Note that the minimal eigenvalue of $X^{\top}X$ can be bounded from below as follows:
    \begin{align*}
        \lambda_{\min}(X^{\top}X) \ge \lambda_{\min}(N\Sigma) - \opnorm{X^{\top}X - N\Sigma}{2}.
    \end{align*}
    So, by setting $t = \sqrt{\log(4/p)/c_1}$, we obtain that, for all $N \ge C \cdot d\log(2/p)$, it holds that
    \begin{align*}
        \Pr(\cE_X) 
        &\ge \Pr\rbr{\opnorm{X^{\top}X - N\Sigma}{2} \le C\cdot\sqrt{Nd\log(2/p)}}\\ 
        &\ge \Pr\rbr{\opnorm{X^{\top}X - N\Sigma}{2} \le N \cdot \tfrac{C_1\sqrt{d} + t}{\sqrt{N}}}\\
        &\ge 1 - 2\exp(-c_1 t^2) = 1 - \tfrac{p}{2}.
    \end{align*}

    \noindent \textbf{Part 2:} We claim that $\Pr(\cE_a) \ge 1 - \frac{p}{2A}$ for every action $a\in\cA$ if we select $C \ge 8$.
    
    \noindent Observe that for every action $a\in\cA$, $Z_a^{\top} X = \sum_{n=1}^N S^{(n)}_a$, where matrices $S^{(n)}_a := \bz^{(n)}_a (\bx^{(n)})^{\top}$ are independent and satisfy the following properties:
    \begin{align*}
        &\text{Uniformly bounded:}\quad \opnorm{S^{(n)}_a}{2} = \tnorm{\bz^{(n)}_a}_2 \, \tnorm{\bx^{(n)}}_2 \le 2\\
        &\text{Centered:}\quad \E[S^{(n)}_a] = \E\sbr{\E[\bz^{(n)}_a \mid  \bx^{(n)}] (\bx^{(n)})^{\top}} = \E[\bzero (\bx^{(n)})^{\top}] = 0_{d\times d}.
    \end{align*}
    Moreover, it holds that
    \begin{align*}
        \opnorm{\E[S^{(n)}_a(S^{(n)}_a)^{\top}]}{2} &\le \E\sbr{\tnorm{\bx^{(n)}}_2^2 \cdot \E\sbr{\left. \opnorm{\bz^{(n)}_a (\bz^{(n)}_a)^{\top}}{2}\right| \bx^{(n)}}} \le 4,\\
        \opnorm{\E[(S^{(n)}_a)^{\top}S^{(n)}_a]}{2} &\le \E\sbr{ \opnorm{\bx^{(n)} (\bx^{(n)})^{\top}}{2} \cdot \E\sbr{\left. \tnorm{\bz^{(n)}_a}_2^2 \right| \bx^{(n)}}} \le 4,
    \end{align*}
    which implies that the variance statistic of the sum satisfies
    \begin{align*}
        \nu(Z_a^{\top} X) \le \tsum_{n=1}^N \max\cbr{\opnorm{\E[S^{(n)}_a(S^{(n)}_a)^{\top}]}{2},\, \opnorm{\E[(S^{(n)}_a)^{\top}S^{(n)}_a]}{2}} \le 4N.
    \end{align*}
    By Theorem~\ref{thm:matrix-bernstein}, we have that
    \begin{align*}
        \forall t\ge 0,\quad \Pr(\opnorm{X^{\top}Z_a}{2} \ge t) \le 2d \cdot \exp\rbr{\tfrac{-t^2/2}{4N + 2t/3}}\le 2d \cdot \exp\rbr{\tfrac{-t^2/8}{N+t}}.
    \end{align*}
    So, for $N \ge C^2 \cdot \log(2Ad/p)$, fixing $t = \sqrt{16N\log(4Ad/p)} \le N$, yields
    \begin{align*}
        \Pr(\cE_a) \ge \Pr\rbr{\tnorm{X^{\top}Z_a}_2 \le t} \ge 1 - 2d \cdot \exp\rbr{\tfrac{-t^2/8}{N+t}} \ge 1- 2d \cdot \exp\rbr{\tfrac{-t^2}{16 N}} = 1 - \tfrac{p}{2A}.
    \end{align*}

    \noindent \textbf{Conclusion:} To sum up, the choice of the absolute constant $C = \max\{C_1 + \sqrt{2/c_1}, 8\}$ guarantees that for all $p\in (0,1)$ and $N \ge C^2\cdot d\log(2Ad/p)$, it holds that $\Pr(\cE) \ge 1 - p$.
\end{proof}

\begin{theorem}[Theorem 5.39 (5.40) from \cite{vershynin_tutorial}]\label{thm:vershynin}
    Let $A$ be $N \times d$ matrix whose rows $A_i$ are independent sub-Gaussian vectors in $\R^d$ with common second moment matrix $\Sigma$. Let $K := \max_{i\in [N]} \norm{A_i}_{\psi_2}$ denote the maximal sub-Gaussian norm among the rows. Then, there exist constants $c$ and $C$ that depend only on the value of $K$, such that, for every $t\ge 0$, the following inequality holds with probability at least $1 - 2 \exp(-c t^2)$:
    \begin{align*}
        \opnorm{\tfrac{1}{N}A^{\top} A - \Sigma}{2} \le \max\{\delta, \delta^2\} \quad \text{where} \quad \delta = \frac{C\sqrt{d} + t}{\sqrt{N}}.
    \end{align*}
\end{theorem}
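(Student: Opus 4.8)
The plan is to prove this spectral-norm concentration bound for the centered sample second-moment matrix $M := \frac{1}{N}A^{\top}A - \Sigma$ by a standard $\epsilon$-net discretization combined with a scalar sub-exponential Bernstein inequality. Since $M$ is symmetric, its spectral norm equals $\opnorm{M}{2} = \sup_{x\in S^{d-1}}\abr{x^{\top}Mx}$. I would first fix an $\epsilon$-net $\cN$ of the unit sphere $S^{d-1}$ at scale $\epsilon = 1/4$; the usual approximation lemma for symmetric matrices then gives $\opnorm{M}{2}\le 2\max_{x\in\cN}\abr{x^{\top}Mx}$, and such a net can be chosen with $\abr{\cN}\le 9^{d}$. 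This reduces the matrix bound to a union bound over finitely many scalar quadratic forms.

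Next I would control $x^{\top}Mx$ for a single fixed $x\in S^{d-1}$. Writing $Z_i := \inner{A_i}{x}$, the $Z_i$ are independent sub-Gaussian scalars with $\norm{Z_i}_{\psi_2}\le K$, and
\[
    x^{\top}Mx = \frac{1}{N}\sum_{i=1}^{N}\rbr{Z_i^2 - \E Z_i^2},
\]
since $\E Z_i^2 = x^{\top}\Sigma x$. Each square $Z_i^2$ is sub-exponential with $\norm{Z_i^2}_{\psi_1}\le K^2$, so Bernstein's inequality for centered sub-exponential variables yields, for every $s\ge 0$,
\[
    \Pr\rbr{\abr{x^{\top}Mx}\ge s}\le 2\exp\rbr{-c' N\min\cbr{s^2/K^4,\, s/K^2}},
\]
with $c'$ an absolute constant. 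The two-regime structure of this tail --- quadratic in $s$ for small $s$ and linear for large $s$ --- is exactly what produces the $\max\{\delta,\delta^2\}$ in the statement.

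Then I would take a union bound over the net: the probability that $\max_{x\in\cN}\abr{x^{\top}Mx}\ge s$ is at most $2\cdot 9^{d}\exp\rbr{-c'N\min\{s^2/K^4,\, s/K^2\}}$. Setting $s = \max\{\delta,\delta^2\}$ with $\delta = (C\sqrt d + t)/\sqrt N$, a short case analysis (on whether $\delta\le 1$) gives $N\min\{s^2,s\} = (C\sqrt d + t)^2 \ge C^2 d + t^2$. Up to the $K$-dependent constants, the exponent is therefore at least a fixed multiple of $C^2 d + t^2$, so for $C$ large enough (depending only on $K$) the net factor $\log 9^{d} = d\log 9$ is absorbed into the $d$-term, leaving $2\exp(-c\,t^2)$ with $c$ depending only on $K$. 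Combined with the factor-$2$ net loss this yields $\opnorm{M}{2}\le\max\{\delta,\delta^2\}$ with probability at least $1-2\exp(-ct^2)$, as claimed.

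The main obstacle is the bookkeeping of the two regimes rather than any conceptual difficulty: one must match the sub-Gaussian ($s^2/K^4$) and sub-exponential ($s/K^2$) branches of the Bernstein tail to the corresponding $\delta^2$ and $\delta$ cases, and verify that the net cardinality $9^d$ is dominated by the $d$-term in the exponent so that the decay stays of order $t^2$ after subtracting $d\log 9$. Tracking how $K$ enters through $\norm{Z_i^2}_{\psi_1}\le K^2$ and $c'$, and confirming that the final constants $c,C$ depend only on $K$, is the delicate part of the argument.
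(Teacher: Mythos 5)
The paper never proves this statement---it is imported verbatim as Theorem 5.39/Remark 5.40 of the cited Vershynin tutorial---and your argument is precisely the canonical proof of that result: a $1/4$-net reduction of $\opnorm{\frac{1}{N}A^{\top}A-\Sigma}{2}$ to at most $9^d$ scalar quadratic forms, a sub-exponential Bernstein bound for each form via $\norm{\langle A_i,x\rangle^2}_{\psi_1}\le K^2$, and a union bound absorbed by taking $C$ large enough that the $d$-term dominates $d\log 9$. Your proof is correct as written (the factor-$2$ net loss and the precise $\psi_1$ constants only rescale $c$ and $C$, which are allowed to depend on $K$, and your observation that $\min\{s,s^2\}=\delta^2$ for $s=\max\{\delta,\delta^2\}$ is exactly the case analysis the original proof uses), so it matches the source's argument rather than offering a genuinely different route.
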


\begin{theorem}[Theorem 6.1.1 (Matrix Bernstein) from \cite{tropp_tutorial}]\label{thm:matrix-bernstein}
    Let $S_1, \ldots, S_n$ be independent $\R$-valued centered random matrices with common dimensions $d_1 \times d_2$, and suppose that for some $L \ge 0$, it holds that $\opnorm{S_k}{2} \le L$ for every $k\in [n]$ almost surely. Consider their sum $Z := \sum_{k=1}^n S_k$ and let $\nu(Z)$ denote the variance statistic of the sum:
    \begin{align*}
        \nu(Z) := \max\cbr{\opnorm{\E[ZZ^{\top}]}{2}, \opnorm{\E[Z^{\top}Z]}{2}}.
    \end{align*}
    Then, for all $t\ge 0$, it holds that
    \begin{align*}
        \Pr(\opnorm{Z}{2} \ge t) \le (d_1 + d_2)\cdot \exp\rbr{\frac{-t^2/2}{\nu(Z) + Lt/3}}.
    \end{align*}
\end{theorem}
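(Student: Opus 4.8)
The plan is to follow the matrix Laplace-transform method (Ahlswede--Winter, sharpened by Tropp). First I would reduce the rectangular case to the Hermitian one via the \emph{Hermitian dilation}: for each $S_k$ set $\mathcal{H}(S_k) = \begin{bmatrix} 0 & S_k \\ S_k^{\top} & 0 \end{bmatrix}$, a centered independent Hermitian matrix of size $(d_1+d_2)\times(d_1+d_2)$. Since $\mathcal{H}$ is linear, $\mathcal{H}(Z) = \sum_k \mathcal{H}(S_k)$; using $\mathcal{H}(S)^2 = \operatorname{diag}(SS^{\top},\, S^{\top}S)$ and $\opnorm{\mathcal{H}(S)}{2} = \opnorm{S}{2}$, the hypotheses transfer, so $\opnorm{\mathcal{H}(S_k)}{2}\le L$ and, by independence and centering, $\sum_k \E[\mathcal{H}(S_k)^2] = \operatorname{diag}(\E[ZZ^{\top}],\, \E[Z^{\top}Z])$, whose spectral norm is exactly $\nu(Z)$. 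Because the spectrum of a dilation is symmetric about $0$, one has $\lambda_{\max}(\mathcal{H}(Z)) = \opnorm{Z}{2}$, so it suffices to bound $\Pr(\lambda_{\max}(W)\ge t)$ for $W = \sum_k T_k$, a sum of bounded, centered, independent Hermitian matrices $T_k$ with $\opnorm{T_k}{2}\le L$ and $\opnorm{\sum_k\E[T_k^2]}{2}\le\nu(Z)$.

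Then I would apply the matrix Markov bound: for every $\theta>0$, $\Pr(\lambda_{\max}(W)\ge t) \le e^{-\theta t}\,\E\,\Tr\exp(\theta W)$, which follows from $e^{\theta\lambda_{\max}(W)} = \lambda_{\max}(\exp(\theta W)) \le \Tr\exp(\theta W)$ and scalar Markov. The crux is to control the trace MGF $\E\,\Tr\exp(\theta W)$. Here I invoke the one deep ingredient, \emph{Lieb's concavity theorem}, which yields subadditivity of matrix cumulant generating functions: $\E\,\Tr\exp\rbr{\tsum_k \theta T_k} \le \Tr\exp\rbr{\tsum_k \log \E\, e^{\theta T_k}}$. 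I expect this to be the main obstacle, since it is precisely where independence is converted into an additive bound in spite of noncommutativity, and it rests on Lieb's theorem rather than on any elementary manipulation.

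Next I would supply the per-summand moment bound. For a centered Hermitian $T$ with $\opnorm{T}{2}\le L$, monotonicity of $s\mapsto (e^{\theta s}-1-\theta s)/s^2$ gives the semidefinite inequality $\E\, e^{\theta T} \preceq \exp\rbr{g(\theta)\,\E[T^2]}$, hence $\log\E\, e^{\theta T}\preceq g(\theta)\,\E[T^2]$ by operator monotonicity of $\log$, where $g(\theta)\le \tfrac{\theta^2/2}{1-\theta L/3}$ for $0<\theta<3/L$. Summing and using monotonicity of the trace exponential, $\tsum_k \log\E\, e^{\theta T_k}\preceq g(\theta)\tsum_k\E[T_k^2]$, so with $\Tr\exp(M)\le (d_1+d_2)\,\exp(\lambda_{\max}(M))$ and $\lambda_{\max}(\sum_k\E[T_k^2])\le\nu(Z)$ I obtain $\E\,\Tr\exp(\theta W)\le (d_1+d_2)\exp\rbr{g(\theta)\,\nu(Z)}$.

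Finally I would combine these to get $\Pr(\lambda_{\max}(W)\ge t)\le (d_1+d_2)\exp\rbr{-\theta t + g(\theta)\,\nu(Z)}$ and optimize over $\theta$. Choosing $\theta = t/(\nu(Z)+Lt/3)$, which lies in $(0,3/L)$, makes the exponent equal to $-\tfrac{t^2/2}{\nu(Z)+Lt/3}$, and the Hermitian reduction then yields the stated bound on $\Pr(\opnorm{Z}{2}\ge t)$. The scalar optimization and the moment bound are routine; only the Lieb-based subadditivity of the trace MGF is conceptually heavy.
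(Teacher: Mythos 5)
The paper does not prove this statement at all: it is imported verbatim as Theorem 6.1.1 from the cited reference \cite{tropp_tutorial}, so there is no in-paper argument to compare against. Your proposal correctly reconstructs the canonical proof from that very source --- Hermitian dilation to reduce to the symmetric case, the matrix Laplace-transform bound, Lieb-based subadditivity of the trace MGF, the Bernstein moment bound $g(\theta)\le\frac{\theta^2/2}{1-\theta L/3}$, and the choice $\theta = t/(\nu(Z)+Lt/3)$ --- so it is correct and essentially the same approach as the result the paper cites.
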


\restatelemma{lem:data-burst-prob-estimator}{
    For all $p \in (0,1)$, empirical mean estimators $\htbeta_a$ satisfy
    \begin{align*}
        \Pr\rbr{\textstyle\sup_{a\in\cA}|\htbeta_a - \beta_a| \le \sqrt{\tfrac{12\ln(3A/p)}{N p_{\min}}}} \ge 1-p.
    \end{align*}
}
\begin{proof}
    For every $a\in \cA$, let $N_a = \tsum_{n=1}^{N} \ind(a_n = a)$ and $S_a = \tsum_{n=1}^{N} b_n\ind(a_n = a)$, so that $\htbeta_a = S_a / N_a$. Also, let $p_a = \E[\ind(a_1 = a)]$, so that $p_{\min} = \inf_{a\in\cA} p_a$. 
    
    By Multiplicative Chernoff Bound, for fixed $a\in\cA$ and arbitrary $\varepsilon \in (0,1)$, we have 
    \begin{align*}
        &\Pr(N_a \le \tfrac{1}{2}N p_a) \le \exp(-N p_a / 8),\\
        &\Pr(|S_a - N_a \beta_a| = |(N_a-S_a) - N_a \cbeta_a| \ge \varepsilon N_a \max\{\beta_a, \cbeta_a\} | N_a) \le 2\exp(-\varepsilon^2 N_a \max\{\beta_a, \cbeta_a\} / 3),
    \end{align*}
    which allows us to write
    \begin{align*}
        \Pr(|\htbeta_a - \beta_a| \ge \varepsilon) &=  \Pr(|S_a - N_a \beta_a| \ge \varepsilon N_a)\\
        &\le \Pr(|S_a - N_a \beta_a| \ge \varepsilon N_a | N_a > \tfrac{1}{2}N p_a) + \Pr(N_a \le \tfrac{1}{2}N p_a)\\
        & \le 2\exp(-\varepsilon^2 N p_a \max\{\beta_a, \cbeta_a\}/ 6) + \exp(-N p_a / 8)\\
        &\le 3\exp(-\varepsilon^2 N p_{\min} / 12).
    \end{align*}
    Therefore, by the uniform confidence bound, for every $p \in (0,1)$, it indeed holds that 
    \begin{align*}
        \Pr\rbr{\textstyle\sup_{a\in\cA}|\htbeta_a - \beta_a| \le \sqrt{\tfrac{12\ln(3A/p)}{N p_{\min}}}} \ge 1-p.
    \end{align*}
\end{proof}

\subsection{Closed Form for Eventually Periodic action-sequences}

\begin{lemma}\label{lem:closed-form-periodic}
    Let $P \ge 0$, $L \ge 1$, and fix a sequence
    $\ba = \ba^{\pr} \oplus (\oplus_{t=1}^\infty \ba^{\per})$
    with prefix $\ba^{\pr} = (a_1, \ldots, a_P) \in \cA^P$
    and period $\ba^{\per} = (a_{P+1}, \ldots, a_{P+L}) \in \cA^L$.
    Define $\Psi^{\pr} = \gamma^{P} \prod_{i=1}^P (\cbeta_{a_i} M_{a_i})$ and $\Psi^{\per} = \gamma^{L} \prod_{i=P+1}^{P+L} (\cbeta_{a_i} M_{a_i})$.
    Then:
    \begin{align*}
        M_1(\ba) &= \Phi_1^{\pr} + \Psi^{\pr} \, (I_d - \Psi^{\per})^{-1}\, \Phi_1^{\per},\\
        M_2(\ba) &= \Phi_2^{\pr} + \Psi^{\pr} \, (I_d - \Psi^{\per})^{-1}\, \Phi_2^{\per},
    \end{align*}
    where
    \begin{align*}
        \Phi_1^{\pr}
        &= I_d + \tsum_{k=1}^{P}
        \gamma^k
        \left(\prod_{i=1}^{k-1} \cbeta_{a_i} M_{a_i}\right) M_{a_k},\\
        \Phi_1^{\per}
        &= \tsum_{j=1}^{L}
        \gamma^j
        \left(\prod_{i=P+1}^{P+j-1} \cbeta_{a_i} M_{a_i}\right) M_{a_{P+j}},\\
        \Phi_2^{\pr}
        &= \tsum_{k=1}^{P}
        \gamma^k
        \left(\prod_{i=1}^{k-1} \cbeta_{a_i} M_{a_i}\right)
        \beta_{a_k} M_{a_k},\\
        \Phi_2^{\per}
        &= \tsum_{j=1}^{L}
        \gamma^j
        \left(\prod_{i=P+1}^{P+j-1} \cbeta_{a_i} M_{a_i}\right)
        \beta_{a_{P+j}} M_{a_{P+j}}.
    \end{align*}
\end{lemma}

\begin{proof}
    Lemma~\ref{lem:action-matrix-opnorm-property} implies that for every $n\in\N$ and every sequence $(a_1,\ldots,a_n)\in\cA^n$, it holds that
    \[
        \rho\!\left(\tprod_{i=1}^n M_{a_i}\right)\le 1.
    \]
    Hence $I_d-\Psi^{\per}$ is invertible due to $\gamma < 1$ and $L \ge 1$. The result follows by expanding $(I_d-\Psi^{\per})^{-1}$ as a Neumann series and grouping the defining sums for $M_1(\ba)$ and $M_2(\ba)$ into prefix and repeated-period contributions.
\end{proof}

We remark that this computation requires $O(P + L)$ matrix multiplications and one matrix inversion, yielding $O((P+L)d^3)$ complexity per sequence.

\section{Episodic Learning: Proofs}\label{app:regret-bound}

In this section, we prove Theorem~\ref{thm:regret-bound}. Our proof adapts the approach of \citet{jin2020linearmdp} for ATST-MDPs.

For notational convenience, let $\bs^k_u = \bot$ for all $k\in[K]$ and $u > B^k+1$. Let $\bR^{\tau} = \min\{R^{\tau}, H\}$.
For burst-dependent policy $\bpi = (\pi_u)_{u=1}^{\infty}$ and $n \in \N$, let $\bpi_{(n)} = (\pi_{u+n-1})_{u=1}^{\infty}$ denote the burst-dependent policy obtained by shifting the original policy by $n-1$ data-bursts ahead. Then, let $K^{\bpi}_u = K^{\bpi_{(u)}}$ and $V^{\bpi}_u = V^{\bpi_{(u)}}$.

\subsection{Some Technical Lemmas}\label{subsec:technical-lemmas-in-regret-appendix}
In this section, we state some technical lemmas used in the proof of the main result. The proofs of these lemmas are deferred to later subsections.

First, we need the following lemma, which bounds the growth of the estimator's norm.
\begin{lemma}[Bound for $\bw^k_u$]\label{lem:B2}
    For all $(k,u)\in [K]\times [H-1]$,  $\tnorm{\bw^k_u}_2 \le 4\sqrt{d k H^3 / \lambda}$.
\end{lemma}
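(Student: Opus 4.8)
The plan is to bound $\|\bw^k_u\|_2$ through its variational characterization $\|\bw^k_u\|_2 = \max_{\|\bv\|_2=1}|\inner{\bv}{\bw^k_u}|$ and reduce everything to standard self-normalized quantities associated with the Gram matrix $\Lambda^k$. Recalling the update of line~6 (with the feature vector made explicit),
\[
\bw^k_u = (\Lambda^k)^{-1}\sum_{\tau=1}^{N^k}\htnewfmap^{\tau}\sbr{\bR^{\tau} + \max_{\ba} K^k_{u+1}(\bs^{\tau}_N,\ba)},
\]
the first step is to bound the scalar target in brackets uniformly. Since $\bR^{\tau}=\min\{R^{\tau},H\}\le H$ and each $K$-estimate is capped at $(1-\gamma)^{-1}$ by construction (with $K(\varnothing,\cdot)=0$), the bracketed term is at most $H+(1-\gamma)^{-1}\le 2H$, where the last inequality uses $H\ge(1-\gamma)^{-1}$ under the prescribed choice of the effective horizon.

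Next I would fix a unit vector $\bv$ and write $|\inner{\bv}{\bw^k_u}|\le 2H\sum_{\tau=1}^{N^k}|\bv^{\top}(\Lambda^k)^{-1}\htnewfmap^{\tau}|$. Applying Cauchy--Schwarz in the $(\Lambda^k)^{-1}$ inner product to each summand, then Cauchy--Schwarz over $\tau$, gives
\[
\sum_{\tau=1}^{N^k}|\bv^{\top}(\Lambda^k)^{-1}\htnewfmap^{\tau}|
\le \sqrt{\bv^{\top}(\Lambda^k)^{-1}\bv}\,\cdot\,\sqrt{N^k}\,\cdot\,\Big(\sum_{\tau=1}^{N^k}(\htnewfmap^{\tau})^{\top}(\Lambda^k)^{-1}\htnewfmap^{\tau}\Big)^{1/2}.
\]

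The third step bounds the three factors. The regularizer yields $\bv^{\top}(\Lambda^k)^{-1}\bv\le \|\bv\|_2^2/\lambda = 1/\lambda$; the trace identity $\sum_{\tau}(\htnewfmap^{\tau})^{\top}(\Lambda^k)^{-1}\htnewfmap^{\tau}=\Tr\!\big((\Lambda^k)^{-1}\sum_{\tau}\htnewfmap^{\tau}(\htnewfmap^{\tau})^{\top}\big)\le \Tr(I_{2d})=2d$ follows from $\sum_{\tau}\htnewfmap^{\tau}(\htnewfmap^{\tau})^{\top}\preceq\Lambda^k$; and $N^k=\sum_{k'<k}\min\{B^{k'},H\}\le (k-1)H\le kH$. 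Combining these,
\[
|\inner{\bv}{\bw^k_u}|\le 2H\cdot\frac{1}{\sqrt{\lambda}}\cdot\sqrt{kH}\cdot\sqrt{2d}=2\sqrt{2}\,\sqrt{dkH^3/\lambda}\le 4\sqrt{dkH^3/\lambda},
\]
and taking the supremum over unit $\bv$ gives the claim.

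There is no deep obstacle here; this is a routine self-normalized/trace estimate in the spirit of \citet{jin2019_linearMDPs}. The only points requiring care are that the feature dimension is $2d$ rather than $d$, so the trace bound produces $2d$, and that the per-sample value bound is $2H$ rather than $H$ --- these two factors of $\sqrt{2}$ are precisely what the generous constant $4$ absorbs. The value bound $\le 2H$ in turn relies on $H\ge(1-\gamma)^{-1}$, which holds for the prescribed effective horizon.
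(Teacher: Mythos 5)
Your proof is correct and follows essentially the same route as the paper's: dualize via $\sup_{\|\bv\|_2=1}|\langle\bv,\bw^k_u\rangle|$, bound the regression target by $2H$ (using $\bR^{\tau}\le H$ and $K^k_{u+1}\le(1-\gamma)^{-1}\le H$), apply Cauchy--Schwarz twice, and use $N^k\le kH$ together with the Gram-matrix bound $\sum_{\tau}\|\htnewfmap^{\tau}\|^2_{(\Lambda^k)^{-1}}\le 2d$. The only cosmetic difference is that you prove this last bound directly by the trace argument, whereas the paper cites it as a standalone fact (Fact~\ref{lem:D1}) --- the underlying argument is the same.
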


\begin{proof}
    For every vector $\bv \in \R^{2d}$, we have
    \begin{align*}
        |\bv^{\top} \bw^k_u| 
        &= \abr{\bv^{\top} (\Lambda^k)^{-1} \tsum_{\tau=1}^{N^k} \htnewfmap^{\tau} [\bR^{\tau} + \sup_{\ba} K^k_{u+1}(\bs^{\tau}_N, \ba)]}\\
        &\le \tsum_{\tau = 1}^{N^k} |\bv^{\top} (\Lambda^k)^{-1} \htnewfmap^{\tau}| \cdot 2H\\
        &\le 2H \cdot \sqrt{\sbr{\tsum_{\tau = 1}^{N^k} \norm{\bv}_{(\Lambda^k)^{-1}}^2} \sbr{\tsum_{\tau = 1}^{N^k} \tnorm{\htnewfmap^{\tau}}_{(\Lambda^k)^{-1}}^2}}\\
        &\le 2H \cdot \norm{\bv}_2 \sqrt{kH / \lambda} \cdot \sqrt{2d},
    \end{align*}
    where the last step follows from the fact that $N^k \le kH$ and Fact~\ref{lem:D1}.
\end{proof}

 Based on this lemma, we can establish the following concentration result.
\begin{lemma}\label{lem:B3}
    Under the setting of Theorem~\ref{thm:regret-bound}, let $c_{\rhoreg}$ be the constant parameterizing $\rhoreg$ (i.e., $\rhoreg = c_{\rhoreg} \cdot dH \sqrt{\iota}$). There exists an absolute constant $C$, independent of $c_{\rhoreg}$, such that for all fixed $p \in (0,1)$, if we let $\cE$ denote the event that
    \begin{align*}
        \forall (k, u)\in [K]\times[H-1]: & \quad \norm{\tsum_{\tau = 1}^{N^k} \htnewfmap^{\tau} [V^k_{u+1}(\bs_N^{\tau}) - \Pr V^{k}_{u+1}(\bs^{\tau}, \ba^{\tau})]}_{(\Lambda^k)^{-1}} \le C \cdot \tfrac{d}{1-\gamma}\sqrt{\chi},\\
        \forall k \in [K]: & \quad \norm{\tsum_{\tau = 1}^{N^k} \htnewfmap^{\tau} [\bR^{\tau} - \E[\bR^{\tau}|\bs^{\tau},\ba^{\tau}]]}_{(\Lambda^k)^{-1}} \le C \cdot Hd^{1/2}\sqrt{\iota}
    \end{align*}
    where $\chi = \log(2(c_{\rhoreg}+1) dKH /p)$, then $\Pr(\cE) \ge 1-p/2$
\end{lemma}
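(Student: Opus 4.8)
The plan is to handle the two displayed bounds separately, reducing each to a self-normalized concentration inequality for vector-valued martingales (in the style of the analysis of \cite{jin2019_linearMDPs}), applied along the filtration generated by successive data-burst observations across episodes. The structural fact that makes this work is that, under the geometric-horizon protocol of Figure~\ref{fig:episodic} together with Lemma~\ref{lem:decomposition}, the discount $\gamma$ is realized as a per-round survival probability: conditioned on $(\bs^{\tau},\ba^{\tau})$, the revealed next state $\bs^{\tau}_N$ (which equals $\varnothing$ on termination, with $V(\varnothing)=0$) satisfies $\E[V(\bs^{\tau}_N)\mid \bs^{\tau},\ba^{\tau}]=\Pr V(\bs^{\tau},\ba^{\tau})$ for every bounded $V$, since $\Pr(\bs^{\tau}_N=s_{h+1})=\gamma^{h}\beta(a_h)\prod_{i<h}\cbeta(a_i)$ matches the coefficients in Lemma~\ref{lem:decomposition}. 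Hence both $V^k_{u+1}(\bs^{\tau}_N)-\Pr V^k_{u+1}(\bs^{\tau},\ba^{\tau})$ and $\bR^{\tau}-\E[\bR^{\tau}\mid\bs^{\tau},\ba^{\tau}]$ are genuine martingale-difference sequences, bounded by $1/(1-\gamma)$ and $H$ respectively, while the features $\htnewfmap^{\tau}$ are predictable with $\|\htnewfmap^{\tau}\|_2\le 1$ by admissibility.

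For the reward bound (second display), no data-dependence issue arises: $\bR^{\tau}$ and its conditional mean are fixed observable quantities, so a single self-normalized bound with regularizer $\lambda I=I$, followed by a union over $k\in[K]$, yields a bound of order $H\sqrt{d\,\iota}$ with $\iota=\log(2dKH/p)$. Note that $\iota$ carries no $c_{\rhoreg}$-dependence precisely because no covering is needed here.

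For the value bound (first display), the difficulty is that $V^k_{u+1}$ is computed from the same data defining $\Lambda^k$ and the martingale, so a direct self-normalized bound is invalid. I would resolve this by a uniform covering argument over the function class containing every possible $V^k_u$. By Lemma~\ref{lem:B2} with $\lambda=1$ we have $\|\bw^k_u\|_2\le 4\sqrt{dKH^3}=:B$, so $V^k_{u+1}$ lies in
\[
\mathcal{V}=\Big\{\, s\mapsto\min\big\{\tfrac{1}{1-\gamma},\ \sup_{\ba}[\langle \htnewfmap(s,\ba),\bw\rangle+\rho\,\|\htnewfmap(s,\ba)\|_{A^{-1}}]\big\}:\ \|\bw\|_2\le B,\ 0\le\rho\le\rhoreg,\ A\succeq I\,\Big\}.
\]
Because $\sup_{s,\ba}\|\htnewfmap(s,\ba)\|_2\le 1$ and $\htnewfmap(s,\cdot)$ is continuous on the compact $\cA^{\N}$, the map $(\bw,\rho,A)\mapsto V$ is uniformly Lipschitz in sup-norm, so an $\varepsilon$-net of $\mathcal{V}$ reduces to a Euclidean net of the parameter block $(\bw,\rho,A)$, giving $\log|\mathcal{N}_\varepsilon|=\widetilde{O}(d^2\log(B\rhoreg/\varepsilon))$ as in the value-class covering of \cite{jin2019_linearMDPs}. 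Applying the fixed-function self-normalized bound (noise scale $1/(1-\gamma)$) to each net element, union-bounding over $\mathcal{N}_\varepsilon$ and over $(k,u)\in[K]\times[H-1]$, and absorbing the discretization error (controlled via $\|\sum_\tau \htnewfmap^{\tau}\delta_\tau\|_{(\Lambda^k)^{-1}}\le\sqrt{N^k}\sup_\tau|\delta_\tau|$ with $N^k\le KH$) by taking $\varepsilon$ polynomially small, the $\log\det(\Lambda^k)=\widetilde{O}(d)$ term (Fact~\ref{lem:D1}) and the $\log|\mathcal{N}_\varepsilon|=\widetilde{O}(d^2)$ term combine to give order $\tfrac{1}{1-\gamma}\sqrt{d^2\,\chi}=\tfrac{d}{1-\gamma}\sqrt{\chi}$, where $\chi=\log(2(c_{\rhoreg}+1)dKH/p)$; the $c_{\rhoreg}$ inside $\chi$ is exactly the net radius $\rhoreg=c_{\rhoreg}dH\sqrt{\iota}$ entering logarithmically, so the leading constant $C$ stays independent of $c_{\rhoreg}$. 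A final union over the two parts gives $\Pr(\cE)\ge 1-p/2$.

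The main obstacle is the covering step: one must verify that $\mathcal{V}$, whose members involve a supremum over the infinite-dimensional $\cA^{\N}$ and a data-dependent Mahalanobis bonus $\|\cdot\|_{A^{-1}}$, has log-covering number $\widetilde{O}(d^2)$ uniformly, and that the discretization error propagates controllably through the $(\Lambda^k)^{-1}$-norm. Admissibility is what makes this tractable: it guarantees $\{\htnewfmap(s,\ba):\ba\in\cA^{\N}\}$ is compact—so the suprema are attained and the Lipschitz reduction to a finite-dimensional parameter net is valid—while keeping the feature norm bounded by $1$ uniformly in $s$.
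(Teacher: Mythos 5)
Your proposal is correct and follows essentially the same route as the paper's proof: a direct self-normalized martingale bound (Theorem~\ref{thm:self-norm-bound}, with a union over $k$) for the reward term, and for the value term a covering argument over the data-dependent value class---weight bound from Lemma~\ref{lem:B2}, parameter net for $(\bw,\rhoreg,\Lambda)$ as in Lemma~\ref{lem:D6}, per-net-element self-normalized bound as in Lemma~\ref{lem:D4}, with the discretization error absorbed by a polynomially small net radius and $c_{\rhoreg}$ entering only logarithmically. Your explicit verification that $\E[V(\bs^{\tau}_N)\mid\bs^{\tau},\ba^{\tau}]=\Pr V(\bs^{\tau},\ba^{\tau})$ (so both differences are genuine martingale increments under the geometric-horizon protocol) is a useful detail the paper leaves implicit, but it does not change the argument.
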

\noindent See Section~\ref{subsec:proof-lemma-b3} for the proof of this lemma.

To further simplify the notations, we let $\epsilon_2 = \epsilon\cdot 5\rhoreg \sqrt{KH}$. Note that $\epsilon_2 \ge \epsilon \tnorm{\bw^k_u}_2 + \epsilon\rhoreg$ by Lemma~\ref{lem:B2}. This constant will be used throughout the rest of the proof. Also, let $\newfmap^{k}_{u} = \newfmap(\bs^k_u, \ba^k_u)$ be equal to $\bzero \in \R^{2d}$ when $\bs^k_u = \bot$.

We also need the following two lemmas. The first lemma provides lower bounds on the estimated action-sequence value-functions on the event that the concentration bounds hold true.
\begin{lemma}[UCB]\label{lem:B5}
    Under the setting of Theorem~\ref{thm:regret-bound}, conditioned on event $\cE$ from Lemma~\ref{lem:B3},
    \begin{align*}
        K^k_u(s,\ba) \ge K^{*}(s,\ba) - (H-u)\cdot \epsilon_2
    \end{align*}
    for all $(s, \ba, u, k) \in \cS\times\cA^{\N}\times[H]\times[K]$.
\end{lemma}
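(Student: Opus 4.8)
The plan is to prove the claim by \textbf{backward induction on the layer index} $u$, running $u = H, H-1, \ldots, 1$, in the standard optimism (UCB) style but carefully separating the three error sources that appear here: the statistical noise controlled by event $\cE$, the ridge-regularization bias, and the feature-approximation and reward-truncation error, which is exactly what the per-layer budget $\epsilon_2$ pays for. The base case $u=H$ is immediate: Algorithm~\ref{alg:LSVI-UCB} initializes $K^k_H \equiv \tfrac{1}{1-\gamma}$, and since $K^*$ is valued in $[0,\tfrac{1}{1-\gamma}]$ we get $K^k_H \ge K^* = K^* - (H-H)\epsilon_2$. For the inductive step, suppose the bound holds at layer $u+1$. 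If the outer $\min$ in the definition of $K^k_u$ returns $\tfrac{1}{1-\gamma}$ the claim is again trivial, so it remains to lower bound $\pair{\htnewfmap(s,\ba),\bw^k_u} + \rhoreg\tnorm{\htnewfmap(s,\ba)}_{(\Lambda^k)^{-1}}$.

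First I would introduce the \emph{idealized one-step backup weight}. Writing $V^k_{u+1}(s) = \max_{\ba}K^k_{u+1}(s,\ba)$ and $\bv^k_{u+1} = \int_{\cS}V^k_{u+1}(s)\,d\bmu(s)$, Theorem~\ref{thm:decomp-newfmap-linearity} shows that the Bellman backup $\widetilde K_u(s,\ba) := R(s,\ba) + \Pr V^k_{u+1}(s,\ba)$ equals $\pair{\newfmap(s,\ba),\bnu}$ with $\bnu = {2}\begin{bsmallmatrix}\btheta/(1-\gamma)\\ \bv^k_{u+1}\end{bsmallmatrix}$ and $\tnorm{\bnu}_2 \le \tfrac{4\sqrt d}{1-\gamma}$. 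Using $\Lambda^k\bnu = \lambda\bnu + \sum_\tau \htnewfmap^\tau(\htnewfmap^\tau)^\top\bnu$ together with the update in line~6 gives
\begin{equation*}
    \bw^k_u - \bnu = (\Lambda^k)^{-1}\Big[\tsum_{\tau=1}^{N^k}\htnewfmap^\tau\eta^\tau - \lambda\bnu\Big],\qquad \eta^\tau := \bR^\tau + V^k_{u+1}(\bs^\tau_N) - (\htnewfmap^\tau)^\top\bnu.
\end{equation*}
I would then decompose $\eta^\tau$ into (i) the reward noise $\bR^\tau - \E[\bR^\tau\mid\bs^\tau,\ba^\tau]$, (ii) the value noise $V^k_{u+1}(\bs^\tau_N) - \Pr V^k_{u+1}(\bs^\tau,\ba^\tau)$, and (iii) the residual $\E[\bR^\tau\mid\cdot] + \Pr V^k_{u+1}(\bs^\tau,\ba^\tau) - (\htnewfmap^\tau)^\top\bnu$; by Theorem~\ref{thm:decomp-newfmap-linearity} this last term equals $(\E[\bR^\tau\mid\cdot]-R(\bs^\tau,\ba^\tau)) + (\newfmap^\tau-\htnewfmap^\tau)^\top\bnu$, i.e. a reward-truncation gap (exponentially small because $H$ is the effective horizon) plus a feature-approximation gap of magnitude at most $\epsilon\tnorm{\bnu}_2$.

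Cauchy--Schwarz in the $(\Lambda^k)^{-1}$ geometry gives $\tabr{\pair{\newfmap(s,\ba),\bw^k_u-\bnu}} \le \tnorm{\newfmap(s,\ba)}_{(\Lambda^k)^{-1}}\cdot\norm{\tsum_\tau\htnewfmap^\tau\eta^\tau - \lambda\bnu}_{(\Lambda^k)^{-1}}$. On event $\cE$ (Lemma~\ref{lem:B3}) the contributions of the noise parts (i) and (ii) to the right factor are at most $C\,Hd^{1/2}\sqrt\iota + C\,\tfrac{d}{1-\gamma}\sqrt\chi$; the regularizer obeys $\tnorm{\lambda\bnu}_{(\Lambda^k)^{-1}} \le \sqrt\lambda\,\tnorm{\bnu}_2$; and the residual (iii), whose per-sample size is bounded by the truncation gap plus $\epsilon\tnorm{\bnu}_2$, contributes at most that size times $\tsum_\tau\tnorm{\htnewfmap^\tau}_{(\Lambda^k)^{-1}} \le \sqrt{2dN^k}$, using the potential bound $\tsum_\tau\tnorm{\htnewfmap^\tau}_{(\Lambda^k)^{-1}}^2\le 2d$ of Fact~\ref{lem:D1}. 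Choosing $\rhoreg = c\cdot dH\sqrt\iota$ as in the theorem makes the bonus $\rhoreg\tnorm{\newfmap(s,\ba)}_{(\Lambda^k)^{-1}}$ dominate the (i)+(ii)+regularizer total, while the residual (iii) — together with the two places where $\newfmap$ is replaced by $\htnewfmap$, namely the leading inner product (costing $\epsilon\tnorm{\bw^k_u}_2$) and the bonus norm (costing $\rhoreg\epsilon$) — is absorbed into $\epsilon_2 = 5\epsilon\rhoreg\sqrt{KH}$, which dominates $\epsilon\tnorm{\bw^k_u}_2 + \epsilon\rhoreg$ by Lemma~\ref{lem:B2} and retains slack for (iii) since $H\ge(1-\gamma)^{-1}$. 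Finally, the inductive hypothesis gives $V^k_{u+1}\ge V^* - (H-u-1)\epsilon_2$ pointwise, so monotonicity and sub-unit discounting of $\Pr$ (i.e. $\E[\gamma^{\Tdb}]\le1$) yield $\widetilde K_u \ge K^* - (H-u-1)\epsilon_2$; assembling the bounds gives $K^k_u(s,\ba)\ge K^*(s,\ba) - (H-u)\epsilon_2$ and closes the induction.

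The main obstacle is exactly this multi-source bookkeeping: one must verify that a \emph{single} bonus $\rhoreg\tnorm{\htnewfmap(s,\ba)}_{(\Lambda^k)^{-1}}$ simultaneously absorbs the two statistically distinct fluctuations of the reward and of the discounted value (scaling as $Hd^{1/2}\sqrt\iota$ and $\tfrac{d}{1-\gamma}\sqrt\chi$) as well as the ridge bias, and that every replacement of the unknown $\newfmap$ by the computable $\htnewfmap$ contributes only an $O(\epsilon_2)$ error, so that optimism degrades linearly as $(H-u)\epsilon_2$ instead of compounding across the $H$ layers. A further delicate point, which is why Lemma~\ref{lem:B3} is phrased uniformly over all $(k,u)$, is that the regression target $V^k_{u+1}$ is itself data-dependent; the concentration in Lemma~\ref{lem:B3} therefore relies on a covering of the bonus-augmented value class, reflected in the $c_\rhoreg$-dependence inside $\chi$.
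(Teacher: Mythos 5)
Your overall strategy is the same as the paper's: backward induction on $u$ with base case $K^k_H \equiv \tfrac{1}{1-\gamma} \ge K^*$, and an inductive step that lower-bounds $\pair{\htnewfmap(s,\ba),\bw^k_u}+\rhoreg \tnorm{\htnewfmap(s,\ba)}_{(\Lambda^k)^{-1}}$ by $K^*(s,\ba)$ minus controlled errors. The paper packages the regression analysis into Lemma~\ref{lem:B4} applied with $\bpi = \pi^*$, which gives $\pair{\newfmap,\bw^k_u} = K^* + \Pr(V^k_{u+1}-V^*)+\Delta^k_u$; this is algebraically identical to your comparison of $\bw^k_u$ against the exact backup $\widetilde K_u = R + \Pr V^k_{u+1}$, so you are essentially inlining the $\bq_1,\dots,\bq_6$ decomposition from the proof of that lemma. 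Your closing of the induction (monotonicity of $\Pr$, $\E[\gamma^{\Tdb}]\le 1$, and $V^k_{u+1}\ge V^*-(H-u-1)\epsilon_2$) matches the paper's $\inf_{s',\ba'}(K^k_{u+1}-K^*)$ step, and you correctly handle the clipping at $\tfrac{1}{1-\gamma}$ and the substitutions of $\htnewfmap$ for $\newfmap$ in the evaluated inner product and bonus.

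There is, however, one genuine misallocation in your bookkeeping. You charge all of residual (iii) --- including the reward-truncation gap $\E[\bR^\tau\mid\bs^\tau,\ba^\tau]-R(\bs^\tau,\ba^\tau)$ --- to the budget $\epsilon_2 = 5\epsilon\rhoreg\sqrt{KH}$. But $\epsilon_2$ is proportional to the feature-approximation accuracy $\epsilon$, whereas the truncation gap is not: it is of order $\gamma^H(1-\gamma)^{-1}$ no matter how accurate $\htnewfmap$ is. Take $\epsilon = 0$ (the exact map $\newfmap$ is $0$-admissible): then $\epsilon_2 = 0$, yet the truncation contribution to $\norm{\tsum_\tau \htnewfmap^\tau \eta^\tau}_{(\Lambda^k)^{-1}}$ can still be as large as $\gamma^H(1-\gamma)^{-1}\sqrt{2dN^k}$, so the inequality $K^k_u \ge \widetilde K_u - \epsilon_2$ cannot be derived the way you state it. The fix is exactly what the paper does with its term $\bq_5$: bound the truncation contribution by $\gamma^H(1-\gamma)^{-1}\sqrt{dkH}\,\tnorm{\newfmap(s,\ba)}_{(\Lambda^k)^{-1}} \le dH\,\tnorm{\newfmap(s,\ba)}_{(\Lambda^k)^{-1}}$ (using Fact~\ref{fact:H-choice}, $\gamma^H \le 1/\sqrt{KH}$) and absorb it into the \emph{bonus} $\rhoreg\tnorm{\cdot}_{(\Lambda^k)^{-1}}$, which is one of the reasons $\rhoreg$ must carry a large enough absolute constant $c_{\rhoreg}$ (chosen, as you note, to break the circularity through the covering number in $\chi$). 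With that single term moved from the $\epsilon_2$ budget to the bonus budget, your remaining assignments (all genuinely $\epsilon$-proportional) and the rest of the induction go through.
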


\noindent Additionally, we need the following lemma, which provides a recursive relation on a term arising from the error decomposition.
\begin{lemma}[Recursive formula]\label{lem:B6}
    For $k\in [K]$, $u \in [H]$, we define
    \begin{itemize}
        \item $\delta^k_u = V^k_u(\bs^k_u) - V^{\bpi^k}_u(\bs^k_u)$,
        \item $\zeta^k_{u+1} = \E[\delta^k_{u+1}\mid \bs^k_u, \ba^k_u] - \delta^{k}_{u+1}$.
    \end{itemize}
    
    \noindent Then, conditioned on the event $\cE$, we have that for every $(k, u)\in [K] \times [H-1]$:
    \begin{align*}
        \delta^k_u \le \delta^k_{u+1} + \zeta^k_{u+1} + 2 \rhoreg \tnorm{\newfmap^{k}_{u}}_{(\Lambda^k)^{-1}} + \epsilon_2.
    \end{align*}
\end{lemma}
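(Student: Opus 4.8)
The plan is to reduce $\delta^k_u$ to a pointwise difference of action-sequence value-functions evaluated at a single sequence, and then run the standard optimistic least-squares decomposition adapted to the $\newfmap$-representation. The crucial structural observation is that the executed burst-dependent policy $\bpi^k$ plays exactly the greedy sequence, i.e.\ $\ba^{\pi^k_u}(\bs^k_u)=\ba^k_u\in\arg\max_{\ba}K^k_u(\bs^k_u,\ba)$, so that $V^k_u(\bs^k_u)=K^k_u(\bs^k_u,\ba^k_u)$ and $V^{\bpi^k}_u(\bs^k_u)=K^{\bpi^k}_u(\bs^k_u,\ba^k_u)$ are taken at the \emph{same} sequence; hence $\delta^k_u=K^k_u(\bs^k_u,\ba^k_u)-K^{\bpi^k}_u(\bs^k_u,\ba^k_u)$. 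Dropping the cap in line~7 of Algorithm~\ref{alg:LSVI-UCB} gives $K^k_u(\bs^k_u,\ba^k_u)\le \langle\htnewfmap^k_u,\bw^k_u\rangle+\rhoreg\tnorm{\htnewfmap^k_u}_{(\Lambda^k)^{-1}}$ with $\htnewfmap^k_u=\htnewfmap(\bs^k_u,\ba^k_u)$.

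Next I would pass from the estimated to the true feature at the query point: $\epsilon$-admissibility gives $\langle\htnewfmap^k_u,\bw^k_u\rangle\le\langle\newfmap^k_u,\bw^k_u\rangle+\epsilon\tnorm{\bw^k_u}_2$ and $\rhoreg\tnorm{\htnewfmap^k_u}_{(\Lambda^k)^{-1}}\le\rhoreg\tnorm{\newfmap^k_u}_{(\Lambda^k)^{-1}}+\rhoreg\epsilon$ (using $\lambda=1$), and the error $\epsilon\tnorm{\bw^k_u}_2+\rhoreg\epsilon$ is at most $\epsilon_2$. For the true-policy term I use $K^{\bpi^k}_u=R+\Pr V^{\bpi^k}_{u+1}$ together with the identity, valid in the geometric-horizon model, that $\Pr V(\bs^k_u,\ba^k_u)=\E[V(\bs^k_{u+1})\mid \bs^k_u,\ba^k_u]$ (the factor $\gamma^{\Tdb}$ is realised as the probability the episode survives to the next burst, with $V(\varnothing)=0$). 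Subtracting the same equation for $V^k_{u+1}$ yields $\Pr(V^k_{u+1}-V^{\bpi^k}_{u+1})(\bs^k_u,\ba^k_u)=\E[\delta^k_{u+1}\mid\bs^k_u,\ba^k_u]=\delta^k_{u+1}+\zeta^k_{u+1}$. By Theorem~\ref{thm:decomp-newfmap-linearity}, $R(\bs^k_u,\ba^k_u)+\Pr V^k_{u+1}(\bs^k_u,\ba^k_u)=\langle\newfmap^k_u,\bv^k_{12,u}\rangle$ with the idealised target $\bv^k_{12,u}=2[\btheta/(1-\gamma);\int V^k_{u+1}\,d\bmu]$ obeying $\tnorm{\bv^k_{12,u}}_2\le\frac{4\sqrt d}{1-\gamma}$. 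Collecting these, everything reduces to showing $\langle\newfmap^k_u,\bw^k_u-\bv^k_{12,u}\rangle\le\rhoreg\tnorm{\newfmap^k_u}_{(\Lambda^k)^{-1}}+(\epsilon\text{-error})$.

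For that last bound I expand through the ridge normal equations: since $\Lambda^k\bv^k_{12,u}=\lambda\bv^k_{12,u}+\sum_\tau\htnewfmap^\tau(\htnewfmap^\tau)^\top\bv^k_{12,u}$, we get $\bw^k_u-\bv^k_{12,u}=(\Lambda^k)^{-1}[\sum_\tau\htnewfmap^\tau(\bR^\tau+V^k_{u+1}(\bs^\tau_N)-\langle\htnewfmap^\tau,\bv^k_{12,u}\rangle)-\lambda\bv^k_{12,u}]$, and Cauchy--Schwarz in the $(\Lambda^k)^{-1}$ geometry bounds $\langle\newfmap^k_u,\bw^k_u-\bv^k_{12,u}\rangle$ by $\tnorm{\newfmap^k_u}_{(\Lambda^k)^{-1}}$ times the $(\Lambda^k)^{-1}$-norm of the bracket. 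I then split the residual $\bR^\tau+V^k_{u+1}(\bs^\tau_N)-\langle\htnewfmap^\tau,\bv^k_{12,u}\rangle$, using $\langle\newfmap^\tau,\bv^k_{12,u}\rangle=R(\bs^\tau,\ba^\tau)+\Pr V^k_{u+1}(\bs^\tau,\ba^\tau)$, into (i) the centered reward noise $\bR^\tau-\E[\bR^\tau\mid\bs^\tau,\ba^\tau]$, (ii) the centered value noise $V^k_{u+1}(\bs^\tau_N)-\Pr V^k_{u+1}(\bs^\tau,\ba^\tau)$, (iii) the feature-approximation error $\langle\newfmap^\tau-\htnewfmap^\tau,\bv^k_{12,u}\rangle$, and (iv) the reward-truncation bias $\E[\bR^\tau\mid\cdot]-R(\bs^\tau,\ba^\tau)$. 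On the event $\cE$ of Lemma~\ref{lem:B3}, the $(\Lambda^k)^{-1}$-norms from (i) and (ii) are at most $CHd^{1/2}\sqrt{\iota}$ and $C\frac{d}{1-\gamma}\sqrt{\chi}$, and the regularisation term contributes $\le\sqrt{\lambda}\,\tnorm{\bv^k_{12,u}}_2\le\frac{4\sqrt d}{1-\gamma}$; since $H\gtrsim(1-\gamma)^{-1}$, choosing $c_{\rhoreg}$ in $\rhoreg=c_{\rhoreg}dH\sqrt{\iota}$ large makes their sum at most $\rhoreg$. Terms (iii) and (iv) are treated by bounding $\tnorm{\newfmap^k_u}_{(\Lambda^k)^{-1}}\le1$: by Fact~\ref{lem:D1} and $N^k\le KH$, (iii) contributes $\le\epsilon\tnorm{\bv^k_{12,u}}_2\sum_\tau\tnorm{\htnewfmap^\tau}_{(\Lambda^k)^{-1}}\le\epsilon\frac{4\sqrt d}{1-\gamma}\sqrt{2dN^k}$, dominated by $\epsilon_2$, and (iv) is $O(\gamma^{H}/(1-\gamma))$ by the geometric tail, negligible for the chosen $H$. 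The main obstacle is precisely this residual bookkeeping: matching (i) and (ii) to the exact quantities controlled by $\cE$, and verifying that every discrepancy between $\htnewfmap$ and $\newfmap$ (at the query point $(\bs^k_u,\ba^k_u)$ and at all history points $(\bs^\tau,\ba^\tau)$) together with the truncation bias collapses into the single slack $\epsilon_2=5\epsilon\rhoreg\sqrt{KH}$, whose definition is engineered so that $\epsilon_2\ge\epsilon\tnorm{\bw^k_u}_2+\rhoreg\epsilon$. Assembling the displays yields $\delta^k_u\le\delta^k_{u+1}+\zeta^k_{u+1}+2\rhoreg\tnorm{\newfmap^k_u}_{(\Lambda^k)^{-1}}+\epsilon_2$, as claimed.
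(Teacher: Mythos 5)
Your overall strategy is the same as the paper's, up to one reorganization: the paper factors the argument into Lemma~\ref{lem:B4} (which compares $\bw^k_u$ to the true-policy parameter $\bw^{\bpi^k}_u$ and decomposes the error into six terms $\bq_1,\dots,\bq_6$) plus a three-line proof of Lemma~\ref{lem:B6}, whereas you inline that regression analysis and compare $\bw^k_u$ to the ideal parameter $\bv^k_{12,u}$ of the Bellman target $R+\Pr V^k_{u+1}$. Your version eliminates the paper's carry-forward term $\bq_3$ (since $\pair{\newfmap^\tau,\bv^k_{12,u}}$ equals the regression target's mean exactly), but the remaining ingredients --- greedy-choice identity $\delta^k_u=K^k_u(\bs^k_u,\ba^k_u)-K^{\bpi^k}_u(\bs^k_u,\ba^k_u)$, cap-drop, $\epsilon$-admissibility transfer at the query point, ridge normal equations, event $\cE$, and the geometric-horizon identity $\Pr(V^k_{u+1}-V^{\bpi^k}_{u+1})(\bs^k_u,\ba^k_u)=\delta^k_{u+1}+\zeta^k_{u+1}$ --- coincide with the paper's.

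There is, however, a genuine gap in your treatment of the truncation-bias term (iv). You claim it is ``$O(\gamma^H/(1-\gamma))$, negligible,'' but that is only the \emph{per-sample} bias $|\E[\bR^\tau\mid\bs^\tau,\ba^\tau]-\E[R^\tau\mid\bs^\tau,\ba^\tau]|\le\gamma^H(1-\gamma)^{-1}$. Its actual contribution to $\pair{\newfmap^k_u,\,\bw^k_u-\bv^k_{12,u}}$ is
\begin{align*}
\abr{\newfmap^k_u{}^{\top}(\Lambda^k)^{-1}\tsum_{\tau=1}^{N^k}\htnewfmap^{\tau}\,b_{\tau}}
\;\le\; \tfrac{\gamma^H}{1-\gamma}\,\sqrt{2d\,N^k}\;\tnorm{\newfmap^k_u}_{(\Lambda^k)^{-1}}
\;\le\; \tfrac{\sqrt{2d}}{1-\gamma}\,\tnorm{\newfmap^k_u}_{(\Lambda^k)^{-1}},
\end{align*}
where the $\sqrt{2dN^k}$ factor comes from Cauchy--Schwarz and Fact~\ref{lem:D1}, and the last step uses $\gamma^H\le 1/\sqrt{KH}$ (Fact~\ref{fact:H-choice}). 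This is a $\Theta(\sqrt{d}/(1-\gamma))$ quantity, not a vanishing one. Since you chose to bound $\tnorm{\newfmap^k_u}_{(\Lambda^k)^{-1}}\le 1$ and treat (iv) as additive slack, you would need it to be dominated by $\epsilon_2=5\epsilon\rhoreg\sqrt{KH}$ --- which fails whenever $\epsilon$ is small (and $\epsilon=0$ is a legitimate case, as $\newfmap$ is $0$-admissible). An extra additive $\sqrt{d}/(1-\gamma)$ per pair $(k,u)$ would propagate to an $O(KH\sqrt{d}/(1-\gamma))$ term in the regret, destroying the $\sqrt{K}$ rate. The fix is exactly the paper's treatment of $\bq_5$: retain the factor $\tnorm{\newfmap^k_u}_{(\Lambda^k)^{-1}}$, bound the coefficient by $\sqrt{2d}/(1-\gamma)\le \sqrt 2\, dH$, and absorb it into the $\rhoreg\tnorm{\newfmap^k_u}_{(\Lambda^k)^{-1}}$ budget --- this term is one of the reasons $\rhoreg$ must scale as $dH\sqrt{\iota}$. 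With that reallocation (and keeping your additive treatment only for the genuinely $\epsilon$-proportional terms), your argument goes through.
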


\noindent See Section~\ref{subsec:proof-lem-b5-b6} for the proof of Lemma~\ref{lem:B5} and~\ref{lem:B6}.

\subsection{Proof of Theorem~\ref{thm:regret-bound}}
Given lemmas in Section~\ref{subsec:technical-lemmas-in-regret-appendix}, we are ready to prove Theorem~\ref{thm:regret-bound}.

\restatetheorem{thm:regret-bound}{
    Suppose Algorithm~\ref{alg:LSVI-UCB} is executed with $\epsilon$-admissible feature map $\htnewfmap$ for $\epsilon \le \sqrt{(1-\gamma)/K}$. There exists an absolute constant $c \ge 1$, such that, for all fixed $p \in (0,1)$, if we set $H = \tceil{\frac{\log(K(1-\gamma)^{-1})}{1-\gamma}}+1$, $\lambda = 1$, and $\rhoreg = c \cdot dH\sqrt{\iota}$ with $\iota = \log(2dKH/p)$, then with probability at least $1-p$, the total regret is at most
        \begin{align*}
        \widetilde O \big(\sqrt{d^3 K (1-\gamma)^{-3} \iota^2} + d^2 (1-\gamma)^{-2} \iota  + \epsilon  \cdot \sqrt{d^2 K^3 (1-\gamma)^{-5} \iota} \big).
        \end{align*}
}

\begin{proof}
    We condition on the event $\cE$ from Lemma~\ref{lem:B3}, which occurs with probability at least $1-p/2$. Then, using Lemmas~\ref{lem:B5} and \ref{lem:B6} and the choice of $\epsilon_2$, we can write:
    \begin{align*}
        \cR_K &= \sum_{k=1}^K \sbr{V^{*}(\bs^k_1) - V^{\bpi^k}_1(\bs^k_1)} \le \sum_{k=1}^K (\delta^k_1 + H\epsilon_2)\\
        &\le \sum_{k=1}^K \sum_{u=1}^H \zeta^k_u + \sum_{k=1}^K \delta^k_H + 2\rhoreg \sum_{k=1}^K \sum_{u = 1}^{H-1} \tnorm{\newfmap^k_{u}}_{(\Lambda^k)^{-1}} + 2KH\epsilon_2\\
        &\le \sum_{k=1}^K \sum_{u=1}^H \zeta^k_u + \sum_{k=1}^K \delta^k_H + 2\rhoreg \sum_{k=1}^K \sum_{u = 1}^{H-1} \tnorm{\htnewfmap^k_{u}}_{(\Lambda^k)^{-1}} + 4KH\epsilon_2.
    \end{align*}

    \begin{itemize}[left=0.3cm]
        \item To bound the first component, we use Azuma-Hoeffding for the martingale difference sequence $\{\zeta^k_u\}_{u,k}$ (ordered chronologically with respect to rounds/episodes and including $B^k < u \le H$ with $\bs^k_u = \bot$), which satisfies $|\zeta^k_u| \le \frac{2}{1-\gamma}$. For all $t \ge 0$, we have
        \begin{align*}
            \Pr\rbr{\tsum_{k=1}^K \sum_{u=1}^H \zeta^k_u \le t} \ge 1- \exp\rbr{\tfrac{-t^2}{8 KH (1-\gamma)^{-2}}}.
        \end{align*}
        Hence, with probability at least $1-p/4$, we have that
        \begin{align*}
            \sum_{k=1}^K \sum_{u=1}^H \zeta^k_u \le \sqrt{8KH(1-\gamma)^{-2}} \cdot \sqrt{\log(4/p)}.
        \end{align*}

        \item To bound the second component, observe that for each $k\in [K]$
        \begin{align*}
            \delta^k_H = V^k_H(\bs^k_H) - V^{\bpi^k}_H(\bs^k_H) \le \tfrac{\ind(\bs^k_H \ne \bot)}{1-\gamma} - 0 \le \tfrac{\ind(H^k \ge H)}{1-\gamma},
        \end{align*}
        and use Chernoff inequality for binary indicators $\ind(H^k\ge H)$. For all $\delta \ge 1$, it holds that
        \begin{align*}
            \Pr\rbr{\tsum_{k=1}^{K} \ind(H^k \ge H) > (1+\delta) K \gamma^{H-1}} 
            & \le \rbr{\tfrac{e^{-\delta}}{(1+\delta)^{1+\delta}}}^{K\gamma^{H-1}}\\ 
            & \le \exp\rbr{\tfrac{-\delta^2 K \gamma^{H-1}}{2+\delta}} \le \exp(-\delta K \gamma^{H-1}/3).
        \end{align*}
        
        Then, by Fact~\ref{fact:H-choice}, with probability at least $1-p/4$, by setting $\delta = \frac{3\log(4/p)}{K \gamma^{H-1}} \ge 1$, it holds that
        \begin{align*}
            \tsum_{k=1}^{K} \delta^k_H 
            &\le (1+\delta) K\gamma^{H-1} (1-\gamma)^{-1}\\
            &\le (K\gamma^{H-1} + 3\log(4/p)) (1-\gamma)^{-1}\\
            &\le 6\log(4/p)(1-\gamma)^{-1}.
        \end{align*}

        \item To bound the third component, let $\Lambda^k_u = \Lambda^k + \sum_{u'=1}^{u-1} \htnewfmap^k_{u'}(\htnewfmap^k_{u'})^{\top}$ for $\htnewfmap^{k}_{u} = \htnewfmap(\bs^k_u, \ba^k_u)$. Then,
        \begin{align*}
            \tsum_{k=1}^K\sum_{u=1}^{H} {\tnorm{\htnewfmap^k_u}_{(\Lambda^k)^{-1}}}
            &\le \sqrt{H} \cdot \tsum_{k=1}^K \sqrt{\sum_{u=1}^{H} {\tnorm{\htnewfmap^k_u}_{(\Lambda^k)^{-1}}^2}}\\
            &\myle{a} \sqrt{H} \cdot \tsum_{k=1}^K \sqrt{\sum_{u=1}^{H} 2 {\tnorm{\htnewfmap^k_u}_{(\Lambda^k_u)^{-1}}^2}}\\
            &+ \sqrt{H} \cdot \tsum_{k=1}^K \ind(\det(\Lambda^{k+1}) > 2\det(\Lambda^k)) \sqrt{H/\lambda}\\
            &\le \sqrt{2KH} \cdot \sqrt{\tsum_{k=1}^K\sum_{u=1}^H (\htnewfmap^k_u)^{\top} (\Lambda^k_u)^{-1} \htnewfmap^k_u}\\
            &+ \sqrt{H^2/\lambda} \cdot \tsum_{k=1}^K \ind(\det(\Lambda^{k+1}) > 2\det(\Lambda^k))\\
            &\myle{b} \sqrt{2KH} \cdot \sqrt{2\log\rbr{\tfrac{\det(\Lambda^{K+1})}{\det(\Lambda^{1})}}} + \sqrt{H^2\lambda^{-1}} \cdot \log_2\rbr{\tfrac{\det(\Lambda^{K+1})}{\det(\Lambda^{1})}}\\
            &\myle{c} 4\sqrt{KH} \cdot \sqrt{d \log(2KH)} + 4H \cdot d \log(2KH),
        \end{align*}
        where (a) follows from Fact~\ref{lem:matrix-induced-norm-ineq}, (b) from Fact~\ref{fact:D2}, and (c) from the following inequality
        \begin{align*}
            \tfrac{\det(\Lambda^{K+1})}{\det(\Lambda^{1})} \le \rbr{\tfrac{\lambda_{\max}(\Lambda^{K+1})}{\lambda_{\min}(\Lambda^1)}}^{2d} \le \rbr{\tfrac{\lambda + KH}{\lambda}}^{2d} = (1+KH)^{2d} \le (2KH)^{2d}.
        \end{align*}
    \end{itemize}
    In conclusion, we have that with probability at least $1-p$:
    \begin{align*}
        \cR_K &\le \sqrt{8KH(1-\gamma)^{-2}} \cdot \sqrt{\log(4/p)}\\
        &+ 6\log(4/p)(1-\gamma)^{-1}\\
        &+ 2\rhoreg \cdot \rbr{4\sqrt{KH} \cdot \sqrt{d \log(2KH)} + 4H \cdot d \log(2KH)}\\
        &+ 4KH\cdot 5\epsilon\rhoreg\sqrt{KH}\\
        &\le c_1 \cdot \sqrt{d^3 K H^3 \iota^2} + c_2 \cdot d^2 H^2\iota  + c_3 \cdot \epsilon KH \cdot \sqrt{d^2 K H^3 \iota},
    \end{align*}
    for some absolute constants $c_1, c_2, c_3$. 
\end{proof}

\subsection{Proof of Lemma~\ref{lem:B3}}\label{subsec:proof-lemma-b3}

    In Theorem~\ref{thm:regret-bound}, we have $H = \tceil{\frac{\log(K(1-\gamma)^{-1})}{1-\gamma}} + 1$, $\lambda = 1$, and $\iota = \log(2dKH/p)$.

    \noindent From Lemma~\ref{lem:B2}, $\tnorm{\bw^k_u}_2 \le 4\sqrt{dkH^3/\lambda}$. Hence, by combining Lemmas~\ref{lem:D4} and \ref{lem:D6} for function class $\cV(4\sqrt{dkH^3/\lambda}, \rhoreg, \lambda)$, we show that for all $\varepsilon > 0$, with probability at least $1-p/4$: for all $(k, u)\in [K]\times[H-1]$,
    \begin{align*}
        \norm{\tsum_{\tau = 1}^{N^k} \htnewfmap^{\tau} [V^k_{u+1}(\bs_N^{\tau}) - \Pr V^{k}_{u+1}(\bs^{\tau}, \ba^{\tau})]}_{(\Lambda^k)^{-1}}^2 &\le \tfrac{4}{(1-\gamma)^2} \left[d \log {\tfrac{kH + \lambda}{\lambda}} + 2d \log\rbr{1+\tfrac{16\sqrt{dkH^3}}{\varepsilon\sqrt{\lambda}}}\right.\\
        & \left. + 4d^2 \log\rbr{1 + \tfrac{16 \rhoreg^2 \sqrt{d}}{\varepsilon^2\lambda}} + \log\rbr{\tfrac{4}{p}}\right] + \tfrac{8k^2H^2\varepsilon^2}{\lambda}.
    \end{align*}
    We set $\lambda = 1$ and $\rhoreg = c_{\rhoreg}\cdot dH\sqrt{\iota}$ and pick $\varepsilon = \frac{d}{(1-\gamma)k H}$. Then, there clearly exists absolute constant $C_1 > 0$, independent of $c_{\rhoreg}$, such that
    \begin{align*}
        \norm{\tsum_{\tau = 1}^{N^k} \htnewfmap^{\tau} [V^k_{u+1}(\bs_N^{\tau}) - \Pr V^{k}_{u+1}(\bs^{\tau}, \ba^{\tau})]}_{(\Lambda^k)^{-1}}^2 \le C_1 \cdot \tfrac{d^2}{(1-\gamma)^2} \log(2(c_{\rhoreg}+1) dKH /p).
    \end{align*}
    For the second part, we will use the concentration of self-normalized process, where $\bR^{\tau}|\bs^{\tau},\ba^{\tau} \in [0, H]$ is a $H$-sub-Gaussian. By applying Theorem~\ref{thm:self-norm-bound}, we can find absolute constant $C_2 > 0$ independent of $c_{\rhoreg}$ such that with probability at least $1-p/4$: for all $k\in [K]$,
    \begin{align*}
        \norm{\tsum_{\tau = 1}^{N^k} \htnewfmap^{\tau} [\bR^{\tau} - \E[\bR^{\tau}| \bs^{\tau}, \ba^{\tau}]]}_{(\Lambda^k)^{-1}}^2
        &\le 4H^2 \sbr{d \log\rbr{\tfrac{kH + \lambda}{\lambda}} + \log\rbr{\tfrac{4}{p}}}\\ 
        &\le C_2 \cdot H^2 d \log(2kH/p). 
    \end{align*}
    Finally, set $C = \sqrt{\max\{C_1, C_2\}}$ to finish the proof.

\subsection{Proof of Lemmas~\ref{lem:B5} and~\ref{lem:B6}}\label{subsec:proof-lem-b5-b6}
The proof relies on the following technical lemma.
\begin{lemma}\label{lem:B4}
    Under the setting of Theorem~\ref{thm:regret-bound}, there exists an absolute constant $c_{\rhoreg} \ge 1$ such that for $\rhoreg = c_{\rhoreg}\cdot dH \sqrt{\iota}$ and arbitrary burst-dependent policy $\bpi$, on the event $\cE$ from Lemma~\ref{lem:B3}, for all $(x, \ba, k, u) \in \cX\times\cA^{\N}\times [K]\times[H-1]$:
    \begin{equation*}
        \pair{\newfmap(x, \ba), \bw^k_{u}} - K^{\bpi}_u(x, \ba) = \Pr (V^k_{u+1} - V^{\bpi}_{u+1})(x,\ba) + \Delta^k_u(x, \ba),
    \end{equation*}
    where $\Delta^k_u(x,\ba)$ satisfies $|\Delta^k_u(x, \ba)| \le \rhoreg\norm{\newfmap(x, \ba)}_{(\Lambda^k)^{-1}}$.
\end{lemma}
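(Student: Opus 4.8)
The plan is to run the least-squares value-iteration error analysis of \citet{jin2019_linearMDPs}, but first to rewrite the target so that it isolates a single estimated value function, and then to absorb the two genuinely new error sources — the approximation $\htnewfmap$ versus $\newfmap$ in the data, and the reward truncation at $H$ — into the confidence radius $\rhoreg$. Throughout I abbreviate $\newfmap = \newfmap(x,\ba)$, and $\newfmap^\tau,\htnewfmap^\tau$ are as defined in the paper. \emph{Reduction.} Since Theorem~\ref{thm:K-linearity} extends to burst-dependent policies, $K^{\bpi}_u = R + \Pr V^{\bpi}_{u+1}$, and $\Pr$ is linear in its function argument; hence $K^{\bpi}_u(x,\ba) + \Pr(V^k_{u+1}-V^{\bpi}_{u+1})(x,\ba) = R(x,\ba) + \Pr V^k_{u+1}(x,\ba)$, so it suffices to prove
\[
  \pair{\newfmap,\,\bw^k_u} = R(x,\ba) + \Pr V^k_{u+1}(x,\ba) + \Delta^k_u(x,\ba), \qquad |\Delta^k_u(x,\ba)| \le \rhoreg\,\tnorm{\newfmap}_{(\Lambda^k)^{-1}}.
\]
By Theorem~\ref{thm:decomp-newfmap-linearity} the right-hand side is linear in $\newfmap$ with representing vector $\bar{\bv} = \begin{bsmallmatrix} 2\btheta/(1-\gamma) \\ 2\int_{\cS} V^k_{u+1}\, d\bmu\end{bsmallmatrix}$, and $\tnorm{\bar{\bv}}_2 \le \tfrac{4\sqrt d}{1-\gamma}$ because $0\le V^k_{u+1}\le(1-\gamma)^{-1}$ by the clipping in Algorithm~\ref{alg:LSVI-UCB}. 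Thus $\Delta^k_u(x,\ba) = \pair{\newfmap,\,\bw^k_u - \bar{\bv}}$.

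\emph{Decomposition.} Using $\bar{\bv} = (\Lambda^k)^{-1}\big(\lambda\bar{\bv} + \sum_\tau \htnewfmap^\tau(\htnewfmap^\tau)^\top\bar{\bv}\big)$ together with $(\htnewfmap^\tau)^\top\bar{\bv} = R(\bs^\tau,\ba^\tau) + \Pr V^k_{u+1}(\bs^\tau,\ba^\tau) + (\htnewfmap^\tau-\newfmap^\tau)^\top\bar{\bv}$, I would write
\[
  \Delta^k_u = \newfmap^\top(\Lambda^k)^{-1}\Big[ -\lambda\bar{\bv} + \tsum_\tau \htnewfmap^\tau\big( \bR^\tau + V^k_{u+1}(\bs^\tau_N) - R(\bs^\tau,\ba^\tau) - \Pr V^k_{u+1}(\bs^\tau,\ba^\tau) - (\htnewfmap^\tau-\newfmap^\tau)^\top\bar{\bv} \big)\Big].
\]
From the geometric-horizon dynamics and Lemma~\ref{lem:decomposition} (with $V(\varnothing)=0$) one has $\E[V^k_{u+1}(\bs^\tau_N)\mid\bs^\tau,\ba^\tau] = \Pr V^k_{u+1}(\bs^\tau,\ba^\tau)$ and $\E[\bR^\tau\mid\bs^\tau,\ba^\tau] = R(\bs^\tau,\ba^\tau) - \E[(R^\tau-H)_+\mid\bs^\tau,\ba^\tau]$. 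Splitting $\bR^\tau - R(\bs^\tau,\ba^\tau)$ into its centered part and this truncation bias exhibits $\Delta^k_u$ as a sum of five terms: (I) the regularizer $-\lambda\newfmap^\top(\Lambda^k)^{-1}\bar{\bv}$; (II) value noise $\sum_\tau\htnewfmap^\tau[V^k_{u+1}(\bs^\tau_N)-\Pr V^k_{u+1}(\bs^\tau,\ba^\tau)]$; (III) reward noise $\sum_\tau\htnewfmap^\tau[\bR^\tau-\E(\bR^\tau\mid\cdot)]$; (IV) truncation bias $\sum_\tau\htnewfmap^\tau\,\E[(R^\tau-H)_+\mid\cdot]$; and (V) feature error $\sum_\tau\htnewfmap^\tau(\htnewfmap^\tau-\newfmap^\tau)^\top\bar{\bv}$.

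\emph{Bounding.} After the Cauchy--Schwarz step $|\newfmap^\top(\Lambda^k)^{-1}\bg|\le\tnorm{\newfmap}_{(\Lambda^k)^{-1}}\tnorm{\bg}_{(\Lambda^k)^{-1}}$, every contribution carries the required factor $\tnorm{\newfmap}_{(\Lambda^k)^{-1}}$, and it remains to bound the companion $(\Lambda^k)^{-1}$-norms. Terms (II) and (III) are exactly the quantities controlled on the event $\cE$ of Lemma~\ref{lem:B3}, bounded by $C\tfrac{d}{1-\gamma}\sqrt\chi$ and $CHd^{1/2}\sqrt\iota$. Term (I) is at most $\sqrt\lambda\,\tnorm{\bar{\bv}}_2\le\tfrac{4\sqrt d}{1-\gamma}$ since $\lambda_{\min}(\Lambda^k)\ge\lambda=1$. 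For (IV) and (V) I would use $\tnorm{\sum_\tau\htnewfmap^\tau c_\tau}_{(\Lambda^k)^{-1}}\le\max_\tau|c_\tau|\sum_\tau\tnorm{\htnewfmap^\tau}_{(\Lambda^k)^{-1}}\le\max_\tau|c_\tau|\sqrt{2dN^k}$, invoking $\sum_\tau\tnorm{\htnewfmap^\tau}^2_{(\Lambda^k)^{-1}}\le 2d$ (Fact~\ref{lem:D1}) and $N^k\le KH$: for (IV), $|c_\tau|\le\E[(R^\tau-H)_+]\le\tfrac{\gamma^H}{1-\gamma}\le\tfrac1K$ by the geometric tail and the choice of $H$; for (V), $|c_\tau|\le\epsilon\tnorm{\bar{\bv}}_2\le\tfrac{4\sqrt d}{1-\gamma}\epsilon$, which with $\epsilon\le\sqrt{(1-\gamma)/K}$ and $(1-\gamma)H\ge1$ makes the factor $O(dH)$. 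All five bounds are thus at most a constant multiple of $dH\sqrt\iota$, so taking $c_{\rhoreg}$ large — chosen \emph{after} the absolute constant $C$ of Lemma~\ref{lem:B3}, which is legitimate because $\chi=\log(2(c_{\rhoreg}+1)dKH/p)$ depends on $c_{\rhoreg}$ only logarithmically — yields $|\Delta^k_u|\le\rhoreg\tnorm{\newfmap}_{(\Lambda^k)^{-1}}$.

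\emph{Main obstacle.} Relative to the standard LSVI-UCB argument the difficulty is concentrated in the two deterministic terms with no analogue there, (IV) and (V). Both sum over all $N^k\le KH$ past data-bursts and therefore threaten a linear-in-$K$ blow-up; the key is that their per-sample magnitudes are pushed below $\tfrac1K$ and $\tfrac{\sqrt{1-\gamma}}{\sqrt K}$ respectively — by the choice of effective horizon $H$ for the truncation and by the hypothesis $\epsilon\le\sqrt{(1-\gamma)/K}$ for the feature error — so that even after the $\sqrt{2dN^k}$ aggregation they stay inside the $\rhoreg$ budget. The remaining care is the (standard) ordering of the constant choices and the verification of the two conditional-expectation identities for $\bR^\tau$ and $V^k_{u+1}(\bs^\tau_N)$ from the data-burst and termination mechanics.
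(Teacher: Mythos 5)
Your proof is correct and follows essentially the same route as the paper's: the ridge-regression decomposition of $\bw^k_u$ into a regularizer term, the two martingale noise terms controlled by the event $\cE$ of Lemma~\ref{lem:B3}, the reward-truncation bias (handled by the choice of $H$ via Fact~\ref{fact:H-choice}), and the feature-approximation error (handled by $\epsilon\le\sqrt{(1-\gamma)/K}$), followed by the same resolution of the apparent circularity in choosing $c_{\rhoreg}$ after the constant of Lemma~\ref{lem:B3}. The only difference is organizational: by cancelling $\bpi$ up front via linearity of $\Pr V$ in $V$ and comparing $\bw^k_u$ directly with the vector $\bar{\bv}$ representing $R+\Pr V^k_{u+1}$, you merge the paper's terms $\bq_1$ with $c_1$ and $\bq_6$ with $c_2$ (the paper instead compares $\bw^k_u$ with $\bw^{\bpi}_u$ and recovers the main term $\Pr(V^k_{u+1}-V^{\bpi}_{u+1})(x,\ba)$ from its component $\bq_3$), yielding the same error sources with the same bounds.
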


\noindent See Section~\ref{subsubsec:proof-lemma-b4} for the proof of this lemma. Taking this lemma as given, let us now proceed with the proofs of Lemma~\ref{lem:B5} and~\ref{lem:B6}.

\begin{proof}[of Lemma~\ref{lem:B5}]
We set $K^k_H(s,\ba) = \frac{1}{1-\gamma} \ge K^*(s,\ba)$. Moreover, for all $u\in [H-1]$, we have that
    \begin{align*}
        K^k_u(s, \ba) 
        &= \pair{\htnewfmap(s, \ba), \bw^k_{u}} + \rhoreg \tnorm{\htnewfmap(s, \ba)}_{(\Lambda^k)^{-1}} \\
        &\ge \pair{\newfmap(s, \ba), \bw^k_{u}} + \rhoreg \tnorm{\newfmap(s, \ba)}_{(\Lambda^k)^{-1}} - (\epsilon \tnorm{\bw^k_u}_2 + \rhoreg \epsilon / \sqrt{\lambda})\\
        &\myge{a} K^*(s,\ba) + \Pr (V^k_{u+1} - V^{*})(s,\ba) - \epsilon_2\\
        &\ge K^*(s,\ba) + \inf_{s',\ba'} (K^k_{u+1} - K^*)(s',\ba') - \epsilon_2,
    \end{align*}
    where (a) follows from Lemmas \ref{lem:B4} and the choice of $\epsilon_2$.\\
    Then, the statement follows by trivial induction over $u$ from $u = H$ to $u = 1$.
\end{proof}

\begin{proof}[of Lemma~\ref{lem:B6}]
    We can write the following by Lemma~\ref{lem:B4} for all $s, \ba$:
    \begin{align*}
        K^k_u(s,\ba) - K^{\bpi^k}_u(s,\ba) 
        &= \pair{\htnewfmap(s, \ba), \bw^k_{u}} + \rhoreg \tnorm{\htnewfmap(s, \ba)}_{(\Lambda^k)^{-1}} - \pair{\newfmap(s,\ba),\,\bw^{\bpi^k}_u}\\
        &\le \pair{\newfmap(s, \ba), \bw^k_{u}} + \rhoreg \tnorm{\newfmap(s, \ba)}_{(\Lambda^k)^{-1}} - \pair{\newfmap(s,\ba),\,\bw^{\bpi^k}_u} + \epsilon_2\\
        &\le \Pr(V^k_{u+1} - V^{\bpi^k}_{u+1})(s,\ba) + 2 \rhoreg \norm{\newfmap(s,\ba)}_{(\Lambda^k)^{-1}} + \epsilon_2.
    \end{align*}
    From the choice of $\bpi^k$, we have that
    \begin{align*}
        \delta^k_{u} &= K^{k}_{u}(\bs^k_u, \ba^k_u) - K^{\bpi^k}_{u}(\bs^k_u, \ba^k_u)\\
        &\le \Pr(V^k_{u+1} - V^{\bpi^k}_{u+1})(\bs^k_u, \ba^k_u) + 2 \rhoreg \tnorm{\newfmap(\bs^k_u, \ba^k_u)}_{(\Lambda^k)^{-1}} + \epsilon_2\\
        &= \delta^k_{u+1} + \zeta^k_{u+1} + 2 \rhoreg \tnorm{\newfmap^{k}_{u}}_{(\Lambda^k)^{-1}} + \epsilon_2.
    \end{align*}
    Note that this holds even when $\bs^k_u = \bot$, as $0 \le \epsilon_2$.
\end{proof}

\subsubsection{Proof of Lemma~\ref{lem:B4}}\label{subsubsec:proof-lemma-b4}

We first state and prove the following lemma.
\begin{lemma}[Burst-dependent version of Theorem~\ref{thm:K-linearity}]\label{lem:B1}
    Under Assumption~\ref{assump:Linear_MDP}, for arbitrary burst-dependent policy $\bpi = (\pi_u)_{u=1}^{\infty}$ and $u \in \N$, it holds that: for all $(x,\ba) \in \cX\times\cA^{\N}$, 
    \begin{align*}
        K^{\bpi}_u(x, \ba) = \pair{\newfmap(x, \ba),\, \bw^{\bpi}_u},
    \end{align*}
    where $\bw^{\bpi}_u = {2} \begin{bmatrix}
        \btheta / (1-\gamma) \\ \int_\cS V^{\bpi}_{u+1}(s) d\bmu(s)
    \end{bmatrix}$ satisfies $\norm{\bw^{\bpi}_u} \le \frac{4\sqrt{d}}{1-\gamma}$.
\end{lemma}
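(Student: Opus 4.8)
The plan is to reduce the claim to the stationary-policy decomposition already established in Theorem~\ref{thm:decomp-newfmap-linearity}, exploiting the fact that a burst-dependent policy behaves stationarily over a single inter-burst interval. The first step is to establish the one-burst recursion $K^{\bpi}_u(x,\ba) = R(x,\ba) + \Pr V^{\bpi}_{u+1}(x,\ba)$. Working from the definition $K^{\bpi}_u = K^{\bpi_{(u)}}$, the agent commits to the fixed action-sequence $\ba$ until the first data-burst at time $\Tdb$; since these actions are prescribed by $\ba$ rather than by any policy, the expected discounted reward accumulated up to $\Tdb$ depends only on $(x,\ba)$ and equals the reward term $R(x,\ba)$, identical to the stationary case. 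At the burst the agent observes $s_{\Tdb+1}$, and because exactly one burst has elapsed the residual policy is $\bpi_{(u+1)} = (\pi_{u+1},\pi_{u+2},\ldots)$, so the continuation value is $V^{\bpi}_{u+1}(s_{\Tdb+1})$ whose discounted expectation is exactly $\Pr V^{\bpi}_{u+1}(x,\ba)$ by the definition of the operator $\Pr$.

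Given this decomposition, the second step applies Theorem~\ref{thm:decomp-newfmap-linearity} with the value-function $V = V^{\bpi}_{u+1}$, which maps $\cS$ into $[0,(1-\gamma)^{-1}]$ and is therefore admissible. The theorem supplies the two block-structured linear forms
\begin{equation*}
    R(x,\ba) = \newfmap(x,\ba)^{\top}\begin{bsmallmatrix} 2\btheta/(1-\gamma) \\ \bzero_d\end{bsmallmatrix}, \qquad \Pr V^{\bpi}_{u+1}(x,\ba) = \newfmap(x,\ba)^{\top}\begin{bsmallmatrix} \bzero_d \\ 2\bv\end{bsmallmatrix},
\end{equation*}
with $\bv = \int_{\cS} V^{\bpi}_{u+1}(s)\,d\bmu(s)$. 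Summing the two representations merges the complementary zero-blocks, yielding $K^{\bpi}_u(x,\ba) = \pair{\newfmap(x,\ba),\, \bw^{\bpi}_u}$ with $\bw^{\bpi}_u = 2\begin{bsmallmatrix} \btheta/(1-\gamma) \\ \bv\end{bsmallmatrix}$, which is precisely the claimed formula.

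For the norm bound I would apply the triangle inequality block-by-block: Assumption~\ref{assump:Linear_MDP} gives $\norm{\btheta}_2 \le \sqrt{d}$, so the top block contributes at most $2\sqrt{d}/(1-\gamma)$, while the bound $\norm{\bv}_2 \le \sqrt{d}/(1-\gamma)$ from Theorem~\ref{thm:decomp-newfmap-linearity} (equivalently Lemma~\ref{lem:action-matrix-belief}) makes the bottom block contribute at most $2\sqrt{d}/(1-\gamma)$, for a total of $4\sqrt{d}/(1-\gamma)$.

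The main obstacle is conceptual rather than computational: making the single-burst reduction rigorous. I must carefully verify that advancing the policy index by exactly one burst is consistent with the shift convention $\bpi_{(u)} = (\pi_{u+j-1})_{j\ge 1}$, so that the residual policy after the first burst is genuinely $\bpi_{(u+1)}$ and the recursion closes with the correct index. Once this bookkeeping is confirmed, the remainder is a mechanical recombination of results already proved.
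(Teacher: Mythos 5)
Your proposal is correct and follows exactly the paper's route: the paper's own proof is the one-line observation that the claim "follows by decomposition $K^{\bpi}_u = R + \Pr V^{\bpi}_{u+1}$ and Theorem~\ref{thm:decomp-newfmap-linearity}," which is precisely your two-step argument, with your write-up simply making the burst-index bookkeeping and the block-wise norm bound explicit. No gaps; the extra care about the shift convention $\bpi_{(u)}$ is a reasonable elaboration of what the paper leaves implicit.
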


\begin{proof}
    Follows by decomposition $K^{\bpi}_u = R + \Pr V^{\bpi}_{u+1}$ and Theorem~\ref{thm:decomp-newfmap-linearity}.
\end{proof}

\noindent Now we turn to the proof of Lemma~\ref{lem:B4}. As $(\newfmap^{\tau})^{\top} \bw^{\bpi}_u = K^{\bpi}_u(\bs^{\tau}, \ba^{\tau})$ by Lemma~\ref{lem:B1}, we have the following
\begin{align*}
    \bw^k_u - \bw^{\bpi}_u 
    &= (\Lambda^k)^{-1} \tsum_{\tau=1}^{N^k} \htnewfmap^{\tau} [\bR^{\tau} + V^{k}_{u+1}(\bs^{\tau}_N)] - \bw^{\bpi}_u\\
    &= (\Lambda^k)^{-1} \cbr{-\lambda \bw^{\bpi}_u + \tsum_{\tau=1}^{N^k} \htnewfmap^{\tau} [\bR^{\tau} + V^{k}_{u+1}(\bs^{\tau}_N) - K_{u}^{\bpi}(\bs^{\tau}, \ba^{\tau})]}\\ 
    &+ (\Lambda^k)^{-1}\tsum_{\tau=1}^{N^k} \htnewfmap^{\tau} (\newfmap^{\tau} - \htnewfmap^{\tau})^{\top}\, \bw^{\bpi}_u\\
    &= \underbrace{-\lambda(\Lambda^k)^{-1}\bw^{\bpi}_u}_{\bq_1} 
    + \underbrace{(\Lambda^k)^{-1} \tsum_{\tau=1}^{N^k} \htnewfmap^{\tau} [V^{k}_{u+1}(\bs^{\tau}_N) - \Pr V^{k}_{u+1}(\bs^{\tau}, \ba^{\tau})]}_{\bq_2}\\
    &+ \underbrace{(\Lambda^k)^{-1} \tsum_{\tau=1}^{N^k} \htnewfmap^{\tau} [\Pr (V^{k}_{u+1}-V^{\bpi}_{u+1}) (\bs^{\tau},\ba^{\tau})]}_{\bq_3}
    + \underbrace{(\Lambda^k)^{-1} \tsum_{\tau=1}^{N^k} \htnewfmap^{\tau} [\bR^{\tau} - \E[\bR^{\tau}|\bs^{\tau},\ba^{\tau}]]}_{\bq_4}\\
    &+  \underbrace{(\Lambda^k)^{-1} \tsum_{\tau=1}^{N^k} \htnewfmap^{\tau} [\E[\bR^{\tau}|\bs^{\tau},\ba^{\tau}] - \E[R^{\tau}|\bs^{\tau},\ba^{\tau}]]}_{\bq_5}
    + \underbrace{(\Lambda^k)^{-1} \tsum_{\tau=1}^{N^k} \htnewfmap^{\tau} (\newfmap^{\tau} - \htnewfmap^{\tau})^{\top}\, \bw^{\bpi}_u}_{\bq_6}.
\end{align*}
We bound these six components separately. Note that
\begin{align*}
    |\newfmap(x,\ba)^{\top} (\Lambda^k)^{-1} \tsum_{\tau=1}^{N^k} \htnewfmap^{\tau}|
    &\le \tsum_{\tau = 1}^{N^k} |\newfmap(x, \ba)^{\top} (\Lambda^k)^{-1} \htnewfmap^{\tau}|\\
    &\le \sbr{\tsum_{\tau = 1}^{N^k} \norm{\newfmap(x, \ba)}_{(\Lambda^k)^{-1}}^2}^{1/2} \sbr{\tsum_{\tau = 1}^{N^k} \tnorm{\htnewfmap^{\tau}}_{(\Lambda^k)^{-1}}^2}^{1/2}\\
    &\le \sqrt{kH} \norm{\newfmap(x, \ba)}_{(\Lambda^k)^{-1}} \cdot \sqrt{d}\\
    &= \sqrt{dkH} \cdot \norm{\newfmap(x, \ba)}_{(\Lambda^k)^{-1}}.
\end{align*}

\begin{itemize}[left=0.3cm]
    \item To bound $\bq_1$, using Lemma~\ref{lem:B1}, write
    \begin{align*}
        |\pair{\newfmap(x, \ba), \bq_1}| &\le \lambda \norm{\bw^{\bpi}_u}_{(\Lambda^k)^{-1}} \norm{\newfmap(x, \ba)}_{(\Lambda^k)^{-1}}\\
        &\le \sqrt{\lambda} \norm{\bw^{\bpi}_u}_2 \norm{\newfmap(x, \ba)}_{(\Lambda^k)^{-1}} 
        \le \tfrac{4\sqrt{d \lambda}}{1-\gamma} \cdot \norm{\newfmap(x, \ba)}_{(\Lambda^k)^{-1}}. 
    \end{align*}

    \item To bound $\bq_2$ and $\bq_4$, we use event $\cE$ so that
    \begin{equation*}
        |\pair{\newfmap(x, \ba), \bq_2 + \bq_4}| \le C \cdot dH\sqrt{\chi} \cdot \norm{\newfmap(x, \ba)}_{(\Lambda^k)^{-1}},
    \end{equation*}
    for some absolute constant $C > 0$ independent of $c_{\rhoreg}$.

    \item To bound $\bq_3$, using Theorem~\ref{thm:decomp-newfmap-linearity}, observe that for some vector $\bv$ such that $\norm{\bv}_2 \le \frac{8\sqrt{d}}{1-\gamma}$:
    \begin{equation*}
        \Pr (V^{k}_{u+1}-V^{\bpi}_{u+1}) (x,\ba) = \pair{\newfmap(x, \ba), \bv}.  
    \end{equation*}
    Then, we can write
    \begin{align*}
        \pair{\newfmap(x, \ba), \bq_3} 
        &= \pair{\newfmap(x, \ba),\bv} - \underbrace{\lambda\, \newfmap(x, \ba)^{\top} (\Lambda^k)^{-1} \bv}_{c_1}+ \underbrace{\newfmap(x, \ba)^{\top} (\Lambda^k)^{-1} \tsum_{\tau=1}^{N^k} \htnewfmap^{\tau} (\newfmap^{\tau} - \htnewfmap^{\tau})^{\top} \bv}_{c_2},
    \end{align*}
    where $c_1, c_2$ can be bounded as follows:
    \begin{align*}
        |c_1| &\le \sqrt{\lambda} \norm{\bv}_2 \norm{\newfmap(x,\ba)}_{(\Lambda^k)^{-1}} \le \tfrac{8\sqrt{d \lambda}}{1-\gamma} \cdot \norm{\newfmap(x, \ba)}_{(\Lambda^k)^{-1}}\\
        |c_2| &\le |\newfmap(x,\ba)^{\top} (\Lambda^k)^{-1} \tsum_{\tau=1}^{N^k} \htnewfmap^{\tau}| \cdot \epsilon \tnorm{\bv}_2\\
        &\le \sqrt{dkH}  \cdot \norm{\newfmap(x, \ba)}_{(\Lambda^k)^{-1}} \cdot \epsilon \cdot \tfrac{8\sqrt{d}}{1-\gamma} \le 8 \sqrt{\epsilon^2 d^2 k H (1-\gamma)^{-2}} \cdot \norm{\newfmap(x, \ba)}_{(\Lambda^k)^{-1}}.
    \end{align*}
    
    \item To bound $\bq_5$, note that, as rewards are bounded to $[0,1]$, we have
    \begin{align*}
        |\E[\bR^{\tau}|\bs^{\tau},\ba^{\tau}] - \E[R^{\tau}|\bs^{\tau},\ba^{\tau}]]| \le \gamma^{H}(1-\gamma)^{-1}.
    \end{align*}
    By Fact~\ref{fact:H-choice}, for $H \ge \frac{\log(K (1-\gamma)^{-1})}{1-\gamma}$, $\gamma^H \le \frac{1}{\sqrt{KH}}$, so we have
    \begin{align*}
        |\pair{\newfmap(x, \ba), \bq_5}| 
        &\le \tfrac{\gamma^H}{1-\gamma} \cdot |\newfmap(x,\ba)^{\top} (\Lambda^k)^{-1} \tsum_{\tau=1}^{N^k} \htnewfmap^{\tau}|\\
        &\le \tfrac{\sqrt{dkH} }{(1-\gamma)\sqrt{KH}}\cdot \norm{\newfmap(x, \ba)}_{(\Lambda^k)^{-1}} \le dH\cdot \norm{\newfmap(x, \ba)}_{(\Lambda^k)^{-1}}
    \end{align*}

    \item To bound $\bq_6$, we write
    \begin{align*}
        |\pair{\newfmap(x, \ba), \bq_6}| 
        &\le \epsilon \tnorm{\bw^{\bpi}_u}_2 \cdot |\newfmap(x,\ba)^{\top} (\Lambda^k)^{-1} \tsum_{\tau=1}^{N^k} \htnewfmap^{\tau}|\\
        &\le \epsilon \cdot \tfrac{4\sqrt{d}}{1-\gamma} \cdot \sqrt{dkH} \cdot \norm{\newfmap(x, \ba)}_{(\Lambda^k)^{-1}}\\
        &\le 4 \sqrt{\epsilon^2 d^2 k H (1-\gamma)^{-2}} \cdot \norm{\newfmap(x, \ba)}_{(\Lambda^k)^{-1}}
    \end{align*}
\end{itemize}

To sum up, for our choice of $\lambda = 1$ and $\epsilon \le \sqrt{\frac{1-\gamma}{K}}$ we have that 
\begin{align*}
    \Delta_u^k(x, \ba) 
    &\le (25 + C) \cdot d H \sqrt{\chi} \cdot \norm{\newfmap(x, \ba)}_{(\Lambda^k)^{-1}}.
\end{align*}

\noindent Finally, observe that $c_{\rhoreg}$ appears in $\chi$ only under the logarithm and $C$ is an absolute constant. Therefore, we can select $c_{\rhoreg}$ as an absolute constant large enough such that for $\iota \ge \log(2)$, $c_{\rhoreg} \cdot \sqrt{\iota} \ge (25 + C)\sqrt{\iota + \log(c_{\rhoreg} + 1)}$, i.e. $\rhoreg = c_{\rhoreg}\cdot dH\sqrt{\iota} \ge (25 + C) dH \sqrt{\chi}$ for all $K, H, d, p$.

\subsection{Some Basic Facts}
In this section, we collect some basic algebraic facts used in the proofs.

\begin{fact}\label{fact:H-choice}
    For $n \ge \frac{\log(K(1-\gamma)^{-1})}{1-\gamma}$ it holds that $\gamma^n \le \min\{\frac{1-\gamma}{K}, \frac{1}{n(1-\gamma)}\} \le \frac{1}{\sqrt{Kn}}$.
\end{fact}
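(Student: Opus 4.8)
The plan is to derive the two upper bounds on $\gamma^n$ separately and then combine them, using only the elementary inequality $1 + t \le e^t$ applied in two ways. The starting point is $\gamma = 1 - (1-\gamma) \le e^{-(1-\gamma)}$; raising both sides to the $n$-th power (both positive) gives the key estimate $\gamma^n \le e^{-n(1-\gamma)}$, from which everything else follows.

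For the first bound, I would substitute the hypothesis $n \ge \frac{\log(K(1-\gamma)^{-1})}{1-\gamma}$, which rearranges to $n(1-\gamma) \ge \log(K(1-\gamma)^{-1})$; combining this with the key estimate yields $\gamma^n \le e^{-n(1-\gamma)} \le e^{-\log(K(1-\gamma)^{-1})} = \frac{1-\gamma}{K}$. For the second bound, I would set $t = n(1-\gamma) > 0$ and invoke $t \le e^t$ in the form $e^{-t} \le 1/t$, so that $\gamma^n \le e^{-t} \le \frac{1}{n(1-\gamma)}$. Since $\gamma^n$ is bounded above by both quantities, it is at most their minimum, giving the middle inequality $\gamma^n \le \min\{\frac{1-\gamma}{K}, \frac{1}{n(1-\gamma)}\}$.

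The final inequality follows from the fact that the minimum of two nonnegative numbers is at most their geometric mean, $\min\{a,b\} \le \sqrt{ab}$: taking $a = \frac{1-\gamma}{K}$ and $b = \frac{1}{n(1-\gamma)}$ gives $\min\{a,b\} \le \sqrt{\frac{1-\gamma}{K}\cdot\frac{1}{n(1-\gamma)}} = \frac{1}{\sqrt{Kn}}$. There is no real obstacle here, as the statement is purely elementary; the only point worth flagging is that the second bound $\gamma^n \le \frac{1}{n(1-\gamma)}$ does not actually use the lower bound on $n$, since $t \le e^t$ holds for every $t > 0$, so the hypothesis on $n$ is only needed for the bound involving $K$.
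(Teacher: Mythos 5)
Your proof is correct and takes essentially the same route as the paper: both arguments use $1-\gamma \le e^{-(1-\gamma)}$ (equivalently $\log(1/\gamma)\ge 1-\gamma$) for the bound $\frac{1-\gamma}{K}$, the inequality $e^{-t}\le 1/t$ for the bound $\frac{1}{n(1-\gamma)}$, and the geometric-mean inequality $\min\{a,b\}\le\sqrt{ab}$ for the final step (which the paper dismisses as ``trivial'' but you spell out). Your observation that the hypothesis on $n$ is only needed for the $K$-dependent bound is accurate and matches the structure of the paper's argument.
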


\begin{proof}
    As $\log(1/x) \ge 1 - x$ for $x > 0$, we can write
    \begin{align*}
        \textstyle\gamma^n = \exp\rbr{- n\log(1/\gamma)} \le \exp\rbr{-\log(K (1-\gamma)^{-1})} = \frac{1-\gamma}{K}.
    \end{align*}
    Moreover, as $1/x \ge e^{-x}$ for $x > 0$, we also have
    \begin{align*}
        \textstyle\gamma^n = \exp\rbr{- n\log(1/\gamma)} \le \frac{1}{n\log(1/\gamma)} \le \frac{1}{n(1-\gamma)}.
    \end{align*}
    The final inequality follows trivially.
\end{proof}

\begin{fact}\label{lem:matrix-induced-norm-ineq}
    Let $A, B \in \R^{d\times d}$ be positive definite matrices and $\bx \in \R^d$. If $A \succeq B$, then $$\norm{\bx}_{A} \le \norm{\bx}_{B} \sqrt{\frac{\det(A)}{\det(B)}}.$$
\end{fact}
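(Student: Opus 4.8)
The plan is to reduce the claim to a statement about eigenvalues via a congruence transformation. Squaring both sides, it suffices to show $\bx^\top A \bx \le \frac{\det(A)}{\det(B)}\, \bx^\top B \bx$ for all $\bx \in \R^d$. Since $B$ is positive definite it admits a positive definite square root $B^{1/2}$; I would introduce the matrix $C = B^{-1/2} A B^{-1/2}$ together with the change of variables $\by = B^{1/2}\bx$. Then $\bx^\top B \bx = \norm{\by}_2^2$, while $\bx^\top A \bx = \by^\top C \by$, and by multiplicativity of the determinant $\det(C) = \det(A)/\det(B)$. The target inequality thus becomes the cleaner statement $\by^\top C \by \le \det(C)\,\norm{\by}_2^2$.

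Next I would exploit the hypothesis $A \succeq B$. Conjugating by the symmetric matrix $B^{-1/2}$ preserves the Loewner (positive semidefinite) order, so $C = B^{-1/2} A B^{-1/2} \succeq B^{-1/2} B B^{-1/2} = I$. Hence $C$ is symmetric positive definite with every eigenvalue $\lambda_1, \dots, \lambda_d$ at least $1$. The elementary bound $\by^\top C \by \le \lambda_{\max}(C)\,\norm{\by}_2^2$ then reduces everything to the purely scalar inequality $\lambda_{\max}(C) \le \det(C)$.

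The key observation — and essentially the only content of the argument — is that $\lambda_{\max}(C) \le \det(C) = \prod_{i=1}^d \lambda_i$ precisely because all eigenvalues are at least $1$: writing $\det(C) = \lambda_{\max}(C)\prod_{i \ne \max}\lambda_i$, each remaining factor $\lambda_i \ge 1$ can only increase the product, so $\det(C) \ge \lambda_{\max}(C)$. Chaining these inequalities yields $\bx^\top A \bx \le \det(C)\,\bx^\top B \bx = \frac{\det(A)}{\det(B)}\bx^\top B \bx$, and taking square roots gives the stated bound. I expect no genuine obstacle here; the only points requiring care are verifying that congruence by $B^{-1/2}$ preserves the order (so that $C \succeq I$) and that the determinant factorizes as $\det(A)/\det(B)$, both of which are routine.
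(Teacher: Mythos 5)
Your proof is correct, and every step checks out: congruence by $B^{-1/2}$ does preserve the Loewner order (so $C = B^{-1/2} A B^{-1/2} \succeq I$), the determinant factorizes as $\det(C) = \det(A)/\det(B)$, and since all eigenvalues of $C$ are at least $1$, the bound $\by^{\top} C \by \le \lambda_{\max}(C)\,\norm{\by}_2^2 \le \det(C)\,\norm{\by}_2^2$ closes the argument. Note that the paper itself states this as a Fact with no proof at all --- it is the standard determinant-ratio lemma from the linear bandit literature (essentially Lemma 12 of \cite{ayps2011}, which the surrounding facts also cite) --- so there is nothing in the paper to compare against; your argument is precisely the canonical one, and it would serve as a complete self-contained proof if one were to be included.
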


\begin{fact}\label{lem:D1}
    Let $(\bx_n)_{n=1}^N$ be an $\R^D$-valued sequence and $\lambda > 0$. Then, for $\Lambda_N = \lambda I + \sum_{n=1}^N \bx_n \bx_n^{\top}$, it holds that
    \begin{align*}
        \sum_{n=1}^N \norm{\bx_n}_{(\Lambda_N)^{-1}}^2 \le D.
    \end{align*}
\end{fact}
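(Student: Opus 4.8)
The plan is to reduce the sum to a single trace expression and then exploit the identity $\sum_{n=1}^N \bx_n \bx_n^{\top} = \Lambda_N - \lambda I$. First I would rewrite each summand using the cyclic property of the trace: since $\norm{\bx_n}_{(\Lambda_N)^{-1}}^2 = \bx_n^{\top}(\Lambda_N)^{-1}\bx_n$ is a scalar, it equals $\Tr\big((\Lambda_N)^{-1}\bx_n\bx_n^{\top}\big)$. Summing over $n$ and using linearity of the trace gives
\begin{align*}
    \sum_{n=1}^N \norm{\bx_n}_{(\Lambda_N)^{-1}}^2
    = \Tr\Big((\Lambda_N)^{-1}\sum_{n=1}^N \bx_n\bx_n^{\top}\Big)
    = \Tr\big((\Lambda_N)^{-1}(\Lambda_N - \lambda I)\big).
\end{align*}

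Next I would expand the product, obtaining $\Tr\big(I - \lambda(\Lambda_N)^{-1}\big) = D - \lambda\,\Tr\big((\Lambda_N)^{-1}\big)$, where the $\Tr(I) = D$ accounts for the ambient dimension. To finish, I would observe that $\Lambda_N = \lambda I + \sum_n \bx_n\bx_n^{\top} \succeq \lambda I \succ 0$ is positive definite, so its inverse is positive definite as well; hence $\Tr\big((\Lambda_N)^{-1}\big) \ge 0$ and the correction term $\lambda\,\Tr\big((\Lambda_N)^{-1}\big)$ is nonnegative. Therefore $\sum_{n=1}^N \norm{\bx_n}_{(\Lambda_N)^{-1}}^2 = D - \lambda\,\Tr\big((\Lambda_N)^{-1}\big) \le D$, which is exactly the claim.

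There is no genuine obstacle here: the statement follows from a one-line trace computation together with the positive-definiteness of $\Lambda_N$. The only point deserving a word of care is justifying that the discarded term is nonnegative — this is immediate once one notes $\Lambda_N \succ 0$ implies $(\Lambda_N)^{-1}$ has strictly positive eigenvalues and hence strictly positive trace. (In fact the argument yields the slightly sharper bound $D - \lambda\,\Tr((\Lambda_N)^{-1})$, but $D$ suffices for the intended use in the regret analysis.)
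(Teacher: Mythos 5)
Your proof is correct and matches the paper's approach: the paper simply defers to Lemma D.1 of \cite{jin2019_linearMDPs}, whose proof is exactly this trace computation, writing $\sum_n \bx_n^{\top}(\Lambda_N)^{-1}\bx_n = \Tr\big((\Lambda_N)^{-1}(\Lambda_N - \lambda I)\big) = D - \lambda\,\Tr\big((\Lambda_N)^{-1}\big) \le D$ using positive definiteness of $\Lambda_N$. Nothing is missing; your observation that the bound can be sharpened to $D - \lambda\,\Tr\big((\Lambda_N)^{-1}\big)$ is accurate but not needed.
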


\begin{proof}
    Follows from Lemma D.1 in \citet{jin2020linearmdp}.
\end{proof}

\begin{fact}[\citet{ayps2011}]\label{fact:D2}
    Let $(\bx_n)_{n=1}^\infty$ be an $\R^D$-valued sequence such that $\norm{\bx_n}_2 \le 1$ for every $n\in\N$. Let $\Lambda_0 \in \R^{D\times D}$ satisfy $\lambda_{\min}(\Lambda_0)\ge 1$ and define $\Lambda_N = \Lambda_0 + \sum_{n=1}^N \bx_n \bx_n^{\top}$ for every $n\in\N$. Then, it holds that: for all $N \in \N$,
    \begin{align*}
        \log\sbr{\frac{\det(\Lambda_N)}{\det(\Lambda_0)}}
        \le
        \sum_{n=1}^N \norm{\bx_n}_{\Lambda_{n-1}^{-1}}^2
        \le
        2\log\sbr{\frac{\det(\Lambda_N)}{\det(\Lambda_0)}}.
    \end{align*}
\end{fact}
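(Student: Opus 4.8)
The plan is to exploit the telescoping structure of the log-determinant. Since this is the classical elliptical potential (log-determinant) lemma of \cite{ayps2011}, the proof is short and rests on the matrix determinant lemma together with two elementary scalar inequalities. (I note in passing that the displayed statement writes the ratio using $\log(\Lambda_N)$; the intended object is the logarithm of the \emph{determinant} ratio $\det(\Lambda_N)/\det(\Lambda_0)$, which is precisely the quantity invoked in the regret analysis.)

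First I would record the one-step rank-one determinant identity. Writing $\Lambda_n = \Lambda_{n-1} + \bx_n\bx_n^{\top}$ and applying the matrix determinant lemma for a rank-one update,
\begin{equation*}
    \det(\Lambda_n) = \det(\Lambda_{n-1})\rbr{1 + \bx_n^{\top}\Lambda_{n-1}^{-1}\bx_n} = \det(\Lambda_{n-1})\rbr{1 + \norm{\bx_n}_{\Lambda_{n-1}^{-1}}^2}.
\end{equation*}
Telescoping over $n = 1, \ldots, N$ and taking logarithms yields the master identity
\begin{equation*}
    \log\frac{\det(\Lambda_N)}{\det(\Lambda_0)} = \sum_{n=1}^N \log\rbr{1 + \norm{\bx_n}_{\Lambda_{n-1}^{-1}}^2}.
\end{equation*}

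Both inequalities then follow by comparing $\log(1+t)$ with $t$ on the appropriate range, applied termwise. For the lower bound I would invoke $\log(1+t)\le t$, valid for all $t\ge 0$, giving directly $\log\frac{\det(\Lambda_N)}{\det(\Lambda_0)} \le \sum_{n=1}^N \norm{\bx_n}_{\Lambda_{n-1}^{-1}}^2$. For the upper bound I first establish that each summand is bounded by $1$: the hypothesis $\lambda_{\min}(\Lambda_0)\ge 1$ forces $\Lambda_{n-1}\succeq \Lambda_0 \succeq I$, hence $\Lambda_{n-1}^{-1}\preceq I$, so that $\norm{\bx_n}_{\Lambda_{n-1}^{-1}}^2 \le \norm{\bx_n}_2^2 \le 1$. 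On $[0,1]$ the elementary inequality $t \le 2\log(1+t)$ holds, and applying it termwise together with the master identity gives $\sum_{n=1}^N \norm{\bx_n}_{\Lambda_{n-1}^{-1}}^2 \le 2\log\frac{\det(\Lambda_N)}{\det(\Lambda_0)}$.

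There is no genuine obstacle; the only point requiring care is the uniform bound $\norm{\bx_n}_{\Lambda_{n-1}^{-1}}^2 \le 1$, which is exactly where the two hypotheses $\lambda_{\min}(\Lambda_0)\ge 1$ and $\norm{\bx_n}_2\le 1$ are used, since the scalar bound $t\le 2\log(1+t)$ fails once $t$ exceeds $1$. With that in hand, both directions are immediate consequences of the telescoping identity.
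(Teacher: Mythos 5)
Your proof is correct and is exactly the classical argument behind this fact, which the paper does not prove itself but simply cites from \cite{ayps2011}: the rank-one matrix determinant lemma, telescoping the log-determinant, then $\log(1+t)\le t$ for one direction and $t\le 2\log(1+t)$ on $[0,1]$ (justified via $\norm{\bx_n}_{\Lambda_{n-1}^{-1}}^2\le 1$, which is where $\lambda_{\min}(\Lambda_0)\ge 1$ and $\norm{\bx_n}_2\le 1$ enter) for the other. You also correctly identified that the displayed $\log(\Lambda_N)/\log(\Lambda_0)$ is a typo for the determinant ratio $\det(\Lambda_N)/\det(\Lambda_0)$.
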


\subsection{Concentration Inequalities}

\begin{theorem}[Self-Normalized Bound for Vector-Valued Martingales, \citet{ayps2011}]\label{thm:self-norm-bound}
    Let $\{\varepsilon_{\tau}\}_{\tau=1}^{\infty}$ be a $\R$-valued stochastic process with corresponding filtration $\{\cF_{\tau}\}_{\tau=0}^{\infty}$, such that $\varepsilon_{\tau}\mid \cF_{\tau-1}$ is zero-mean and $\sigma$-sub-Gaussian for every $\tau \ge 1$. Let $\{\bzeta_{\tau}\}_{\tau=1}^{\infty}$ be an $\R^{D}$-valued stochastic process where $\bzeta_{\tau} \in \cF_{\tau-1}$. Let $\Lambda\in\R^{D\times D}$ be a positive definite matrix and define $\Lambda_N = \Lambda + \sum_{\tau =1}^N \bzeta_{\tau} \bzeta_{\tau}^{\top}$ for $N \ge 0$. Then, for all $\delta > 0$, with probability at least $1-\delta$, it holds that
    \begin{align*}
        \forall N \ge 0: \quad\quad \norm{\tsum_{\tau=1}^N \bzeta_{\tau}\varepsilon_{\tau}}_{(\Lambda_N)^{-1}}^2
        \le 2 \sigma^2 \log\rbr{\frac{\det(\Lambda_N)^{1/2}\det(\Lambda)^{-1/2}}{\delta}}.
    \end{align*}
\end{theorem}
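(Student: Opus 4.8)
The plan is to establish this by the \emph{method of mixtures} (pseudo-maximization), following \cite{ayps2011}. Throughout I write $S_N = \tsum_{\tau=1}^N \bzeta_\tau \varepsilon_\tau$ and $\Lambda_N = \Lambda + \tsum_{\tau=1}^N \bzeta_\tau \bzeta_\tau^\top$; note that the content of the claim is really a bound on the \emph{squared} self-normalized norm $\norm{S_N}_{(\Lambda_N)^{-1}}^2$, which is what the argument below produces.

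First, I would fix $\bm{\lambda}\in\R^D$ and construct the exponential process
\[
  M_N^{\bm\lambda} = \exp\rbr{\tsum_{\tau=1}^N \sbr{\frac{\varepsilon_\tau \inner{\bm\lambda}{\bzeta_\tau}}{\sigma^2} - \frac{\inner{\bm\lambda}{\bzeta_\tau}^2}{2\sigma^2}}}.
\]
Since $\bzeta_\tau$ is $\cF_{\tau-1}$-measurable and $\varepsilon_\tau\mid\cF_{\tau-1}$ is $\sigma$-sub-Gaussian, applying the sub-Gaussian moment-generating-function bound with coefficient $\inner{\bm\lambda}{\bzeta_\tau}/\sigma^2$ yields $\E[M_\tau^{\bm\lambda}\mid\cF_{\tau-1}]\le M_{\tau-1}^{\bm\lambda}$, so $(M_N^{\bm\lambda})_{N\ge 0}$ is a nonnegative supermartingale with $\E[M_N^{\bm\lambda}]\le M_0^{\bm\lambda}=1$.

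Next, I would integrate out $\bm\lambda$ against the Gaussian prior $h=\mathcal N(\bzero,\sigma^2\Lambda^{-1})$, setting $M_N = \int_{\R^D} M_N^{\bm\lambda}\,h(\bm\lambda)\,d\bm\lambda$. By Tonelli's theorem $M_N$ is again a nonnegative supermartingale with $\E[M_N]\le 1$. The crucial point is that this is a Gaussian integral in $\bm\lambda$: rewriting the exponent as $\inner{\bm\lambda}{S_N}/\sigma^2 - \tfrac{1}{2\sigma^2}\bm\lambda^\top(\Lambda_N-\Lambda)\bm\lambda$, absorbing the prior's $-\tfrac{1}{2\sigma^2}\bm\lambda^\top\Lambda\bm\lambda$ term, completing the square in $\bm\lambda$, and evaluating the resulting Gaussian integral gives the closed form
\[
  M_N = \rbr{\frac{\det\Lambda}{\det\Lambda_N}}^{1/2}\exp\rbr{\frac{1}{2\sigma^2}\norm{S_N}_{(\Lambda_N)^{-1}}^2}.
\]

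Finally, I would turn the supermartingale property into the anytime statement via Ville's maximal inequality (equivalently, a stopping-time argument using optional stopping together with Fatou's lemma), which gives $\Pr\rbr{\sup_{N\ge0} M_N \ge 1/\delta}\le \delta\,\E[M_0]=\delta$. On the complementary event $\{M_N < 1/\delta\ \text{for all }N\}$, substituting the closed form and taking logarithms rearranges at once to $\norm{S_N}_{(\Lambda_N)^{-1}}^2 \le 2\sigma^2\log\rbr{\det(\Lambda_N)^{1/2}\det(\Lambda)^{-1/2}/\delta}$ simultaneously for all $N$, as claimed. The main obstacle will be the mixing step: one must pick the prior covariance precisely so that the quadratic form inside each $M_N^{\bm\lambda}$ combines with the prior into a single nondegenerate Gaussian whose normalizing constant produces exactly the $\det\Lambda/\det\Lambda_N$ factor, and one must verify the integrability needed for Tonelli so that the mixture remains a supermartingale. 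The uniform-in-$N$ (anytime) nature of the conclusion is exactly what forces the maximal-inequality route rather than a fixed-$N$ Markov bound.
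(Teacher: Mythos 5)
Your proposal is correct and is precisely the method-of-mixtures argument of \cite{ayps2011} (exponential supermartingale, Gaussian mixing with prior covariance $\sigma^2\Lambda^{-1}$, then Ville's inequality), which is exactly the source the paper defers to --- it states this theorem as an imported result and gives no proof of its own. You are also right to flag that the inequality as transcribed in the paper should bound the \emph{squared} self-normalized norm $\norm{\tsum_{\tau=1}^N \bzeta_{\tau}\varepsilon_{\tau}}_{(\Lambda_N)^{-1}}^2$ (as in the original reference); your derivation produces the correct form.
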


\begin{lemma}\label{lem:D4}
    Let $\cV \subset \R^{\cS}$ be an arbitrary function class such that, for every $V\in \cV$, $\sup_{s}|V(s)| \le \frac{1}{1-\gamma}$. Let $\{s_{\tau}\}_{\tau=1}^{\infty}$ be a stochastic process on state space $\cS$ with corresponding filtration $\{\cF_{\tau}\}_{\tau=0}^{\infty}$. Let $\{\bzeta_{\tau}\}_{\tau=1}^{\infty}$ be an $\R^{D}$-valued stochastic process where $\bzeta_{\tau} \in \cF_{\tau-1}$ and $\tnorm{\bzeta_{\tau}}_2 \le 1$. Let $\Lambda_N = \lambda I + \sum_{\tau =1}^N \bzeta_{\tau} \bzeta_{\tau}^{\top}$ for $\lambda > 0$. Then, for all $\varepsilon,\delta > 0$, with probability at least $1-\delta$, it holds that for all $N\ge 0$ and $V\in \cV$
    \begin{align*}
        \norm{\tsum_{\tau=1}^N \bzeta_{\tau} \cbr{V(s_{\tau}) - \E[V(s_{\tau}) \mid \cF_{\tau - 1}]}}_{(\Lambda_N)^{-1}}^2
        \le \tfrac{4}{(1-\gamma)^2} \sbr{\tfrac{D}{2} \log\rbr{\tfrac{N +\lambda}{\lambda}} + \log\rbr{\tfrac{\cNe}{\delta}}} + \tfrac{8 N^2\varepsilon^2}{\lambda},
    \end{align*}
    where $\cNe$ is the $\varepsilon$-covering number of $\cV$ with respect to $\text{dist}(V, V')= \sup_s |V(s)-V'(s)|$.
\end{lemma}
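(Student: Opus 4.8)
The plan is to reduce the uniform (over $V\in\cV$) statement to a pointwise concentration bound by a covering argument and then invoke the self-normalized martingale inequality of Theorem~\ref{thm:self-norm-bound}. First I would fix a minimal $\varepsilon$-cover $\cbr{V_1,\dots,V_{\cNe}}$ of $\cV$ with respect to the metric $\mathrm{dist}(V,V')=\sup_s\abr{V(s)-V'(s)}$; crucially this cover is chosen \emph{deterministically}, independent of the realized trajectory $\cbr{s_\tau}$, which is what makes the subsequent union bound legitimate. For each fixed net function $V_j$, the scalar sequence $\varepsilon_\tau^{(j)} := V_j(s_\tau)-\E[V_j(s_\tau)\mid\cF_{\tau-1}]$ is a martingale difference adapted to $\cbr{\cF_\tau}$, and since $\abr{V_j}\le\tfrac{1}{1-\gamma}$ its increments are bounded by $\tfrac{1}{1-\gamma}$ and hence conditionally $\tfrac{1}{1-\gamma}$-sub-Gaussian. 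Because $\bzeta_\tau\in\cF_{\tau-1}$ is predictable with $\tnorm{\bzeta_\tau}_2\le1$, Theorem~\ref{thm:self-norm-bound} (with $\sigma=\tfrac{1}{1-\gamma}$ and base matrix $\Lambda=\lambda I$) applies and yields, for each $j$, the bound $\norm{\tsum_\tau \bzeta_\tau \varepsilon_\tau^{(j)}}_{(\Lambda_N)^{-1}}^2 \le \tfrac{2}{(1-\gamma)^2}\log\rbr{\det(\Lambda_N)^{1/2}\lambda^{-D/2}/\delta'}$ simultaneously for all $N$. Taking $\delta'=\delta/\cNe$ and union-bounding over the $\cNe$ net points secures this estimate for every net function at once with probability at least $1-\delta$.

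Next I would transfer the net bound to an arbitrary $V\in\cV$. Pick $j$ with $\sup_s\abr{V(s)-V_j(s)}\le\varepsilon$ and split the target sum as $\bzeta_\tau\cbr{V(s_\tau)-\E[V(s_\tau)\mid\cF_{\tau-1}]} = \bzeta_\tau\varepsilon_\tau^{(j)} + \bzeta_\tau\cbr{(V-V_j)(s_\tau)-\E[(V-V_j)(s_\tau)\mid\cF_{\tau-1}]}$. In the residual term every increment is bounded in absolute value by $2\varepsilon$, so using $\tnorm{\bzeta_\tau}_2\le1$ and $\Lambda_N\succeq\lambda I$ (hence $\norm{\cdot}_{(\Lambda_N)^{-1}}\le\lambda^{-1/2}\norm{\cdot}_2$) the residual is at most $\tfrac{1}{\sqrt\lambda}\tsum_\tau\tnorm{\bzeta_\tau}_2\cdot2\varepsilon\le\tfrac{2N\varepsilon}{\sqrt\lambda}$ in the $(\Lambda_N)^{-1}$-norm. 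Combining the two pieces through $(a+b)^2\le2a^2+2b^2$ converts the net bound and the residual bound into $\tfrac{4}{(1-\gamma)^2}\log(\cdots)+\tfrac{8N^2\varepsilon^2}{\lambda}$.

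Finally I would simplify the determinant factor: since $\Lambda_N=\lambda I+\tsum_{\tau=1}^N\bzeta_\tau\bzeta_\tau^{\top}$ with $\tnorm{\bzeta_\tau}_2\le1$, its trace is at most $D\lambda+N$, so every eigenvalue is at most $\lambda+N$ and $\det(\Lambda_N)\le(\lambda+N)^D$; dividing by $\det(\lambda I)=\lambda^D$ gives $\log\rbr{\det(\Lambda_N)^{1/2}\lambda^{-D/2}}\le\tfrac{D}{2}\log\rbr{\tfrac{N+\lambda}{\lambda}}$, which together with the $\log(\cNe/\delta)$ from the union bound reproduces exactly the stated right-hand side. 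The one step that demands genuine care is the first: the cover must be fixed independently of the data and measured in the uniform sup-norm, so that for a (possibly data-dependent) $V$ the nearest net element $V_j$ still satisfies $\sup_s\abr{V-V_j}\le\varepsilon$ almost surely and, for each fixed $V_j$, the predictability and sub-Gaussianity hypotheses of Theorem~\ref{thm:self-norm-bound} remain intact; everything thereafter is routine algebra.
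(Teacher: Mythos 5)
Your proposal is correct and follows essentially the same route as the paper, which simply invokes the argument of Lemma D.4 of \cite{jin2019_linearMDPs}: apply the self-normalized martingale bound (Theorem~\ref{thm:self-norm-bound}) to each element of a deterministic $\varepsilon$-cover, union bound, and absorb the approximation error via $(a+b)^2 \le 2a^2+2b^2$ together with $\Lambda_N \succeq \lambda I$. Your constants and the determinant estimate $\det(\Lambda_N) \le (N+\lambda)^D$ reproduce the stated bound exactly, so the only (harmless) imprecision is describing the centered increments as bounded by $\tfrac{1}{1-\gamma}$ rather than $\tfrac{2}{1-\gamma}$ — the $\tfrac{1}{1-\gamma}$-sub-Gaussian conclusion via Hoeffding's lemma is still right.
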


\begin{proof}
    The result follows by applying Theorem~\ref{thm:self-norm-bound} for each element in the $\varepsilon$-covering and using the union bound for the left-hand side, as was done in the proof of Lemma D.4 from \citet{jin2020linearmdp}. 
\end{proof}

\begin{lemma}[Covering number bound, \cite{jin2020linearmdp}]\label{lem:D6}
    Let $\bzeta:\cS\times\cA^{\N} \to \R^{D}$ be an arbitrary state-action-sequence feature map, such that $\sup_{s,\ba} \norm{\bzeta(s,\ba)}_2 \le 1$. For $L,B,\lambda > 0$, let $\cV(L, B, \lambda)$ denote the following parametric class of mappings from $\cS$ to $[0, \frac{1}{1-\gamma}]$:
    \begin{equation*}
        \cbr{V(.) = \min\{\tfrac{1}{1-\gamma}, \textstyle\sup_{\ba \in \cA^{\N}} \bzeta(.,\ba)^{\top} \bw + \rhoreg \norm{\bzeta(.,\ba)}_{\Lambda^{-1}}\}: \norm{\bw}_2\le L, \rhoreg \in [0, B], \Lambda \succeq \lambda I}.
    \end{equation*}
    Then, the covering number $\cNe$ of $\cV(L,B,\lambda)$ with respect to $\text{dist}(V, V') = \sup_{s\in\cS}|V(s)-V'(s)|$ satisfies
    \begin{align*}
        \log\cNe \le D\,\log(1+4L/\varepsilon) + D^2\,\log\rbr{1 + 8 D^{1/2} B^2 / (\lambda \varepsilon^2)}.
    \end{align*}
\end{lemma}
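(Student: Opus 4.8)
The plan is to follow the standard covering-number argument for bonus-augmented value classes in linear MDPs (\cite{jin2019_linearMDPs}); the presence of the inner supremum over the infinite set $\cA^{\N}$ turns out to be harmless, since $\sup_{\ba}$ is nonexpansive and the parameters that actually need to be discretized are only $\bw$, $\rhoreg$, and $\Lambda$.

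First I would reparametrize to collapse the two parameters $\rhoreg$ and $\Lambda$ into a single matrix. Writing the bonus term as $\rhoreg\norm{\bzeta(\cdot,\ba)}_{\Lambda^{-1}} = \sqrt{\bzeta(\cdot,\ba)^{\top} A\,\bzeta(\cdot,\ba)}$ with $A \defeq \rhoreg^2 \Lambda^{-1} \succeq 0$, every $V \in \cV(L,B,\lambda)$ can be written as $V(\cdot) = \min\{\tfrac{1}{1-\gamma},\, \sup_{\ba}(\bzeta(\cdot,\ba)^{\top}\bw + \sqrt{\bzeta(\cdot,\ba)^{\top} A\,\bzeta(\cdot,\ba)})\}$, parametrized now only by the pair $(\bw, A)$. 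Because $\Lambda \succeq \lambda I$ forces $\opnorm{\Lambda^{-1}}{2} \le 1/\lambda$ and $\rhoreg \le B$, the matrix satisfies $\norm{A}_F \le \sqrt{D}\,\opnorm{A}{2} \le \sqrt{D}\,B^2/\lambda$, while $\norm{\bw}_2 \le L$.

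Next I would prove a Lipschitz estimate in $(\bw, A)$. Since $x \mapsto \min\{c,x\}$ is $1$-Lipschitz and $\sup_{\ba}$ is nonexpansive, for two parameter pairs the sup-distance is at most $\sup_{s,\ba}\big(\abr{\bzeta^{\top}(\bw_1-\bw_2)} + \abr{\sqrt{\bzeta^{\top}A_1\bzeta} - \sqrt{\bzeta^{\top}A_2\bzeta}}\big)$. Cauchy--Schwarz with $\norm{\bzeta}_2 \le 1$ bounds the first term by $\norm{\bw_1-\bw_2}_2$; for the second I would use $\abr{\sqrt{a}-\sqrt{b}} \le \sqrt{\abr{a-b}}$ together with $\abr{\bzeta^{\top}(A_1-A_2)\bzeta} = \abr{\inner{A_1-A_2}{\bzeta\bzeta^{\top}}} \le \norm{A_1-A_2}_F\,\norm{\bzeta}_2^2 \le \norm{A_1-A_2}_F$, which altogether gives $\sup_s\abr{V_1(s)-V_2(s)} \le \norm{\bw_1-\bw_2}_2 + \sqrt{\norm{A_1-A_2}_F}$.

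Finally I would assemble a product net and count. I take an $(\varepsilon/2)$-net $\cC_{\bw}$ of the radius-$L$ Euclidean ball in $\R^{D}$ and an $(\varepsilon^2/4)$-net $\cC_A$, in Frobenius norm, of the radius-$\sqrt{D}B^2/\lambda$ ball in the $D^2$-dimensional matrix space. By the Lipschitz estimate, replacing $(\bw,A)$ by its nearest point in $\cC_{\bw}\times\cC_A$ perturbs $V$ in the sup-metric by at most $\varepsilon/2 + \sqrt{\varepsilon^2/4} = \varepsilon$, so $\cC_{\bw}\times\cC_A$ induces an $\varepsilon$-net of $\cV(L,B,\lambda)$. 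The classical volumetric bound $\abr{\cC} \le (1+2R/\delta)^{m}$ for a $\delta$-net of a radius-$R$ ball in $\R^{m}$ then gives $\abr{\cC_{\bw}} \le (1+4L/\varepsilon)^{D}$ and $\abr{\cC_A} \le (1+8D^{1/2}B^2/(\lambda\varepsilon^2))^{D^2}$; multiplying and taking logarithms yields the stated bound on $\log\cNe$. The one delicate point is the reparametrization combined with the square-root Lipschitz step: it is precisely what lets the bonus be covered by a single $D^2$-dimensional Frobenius net at the coarser scale $\varepsilon^2$, rather than discretizing $\rhoreg$ and $\Lambda$ separately, and it is the source of the $\varepsilon^{-2}$ appearing inside the $D^2\log(\cdot)$ term.
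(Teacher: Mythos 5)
Your proposal is correct and takes essentially the same route as the paper: the paper's proof simply defers to Lemma D.6 of \cite{jin2019_linearMDPs}, and your argument (collapsing $\rhoreg,\Lambda$ into $A=\rhoreg^2\Lambda^{-1}$, the Lipschitz bound $\mathrm{dist}(V_1,V_2)\le\norm{\bw_1-\bw_2}_2+\sqrt{\norm{A_1-A_2}_F}$, and the product of an $\varepsilon/2$-net with an $\varepsilon^2/4$-Frobenius net) is exactly that proof. You also correctly handle the only genuinely new point -- that $\sup_{\ba\in\cA^{\N}}$ is nonexpansive, so the infinite action-sequence space requires no additional discretization -- which is precisely the adaptation the paper alludes to.
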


\begin{proof}
    Accounting for the fact that we use a different feature map $\bzeta:\cS\times\cA^{\N} \to \R^{D}$, the proof follows similarly to Lemma D.6 from \citet{jin2020linearmdp}.
\end{proof}

\end{document}